%% file: main.tex
\documentclass{article}

\usepackage{macros}
\usepackage{macros_project}

\title{Optimal Prediction-Augmented Algorithms for Testing Independence of Distributions}

\author{Maryam Aliakbarpour\footnote{Department of Computer Science and Ken Kennedy Institute} \\
  Rice University \\
  \texttt{maryama@rice.edu} \\
  \and
  Alireza Azizi \\
  Rice University \\
  \texttt{alireza.azizi@rice.edu} \\
  \and
  Ria Stevens \\
  Rice University \\
  \texttt{ria.stevens@rice.edu}
}

\date{\today}

\newif\ifcolt
\coltfalse

\begin{document}

\maketitle

\begin{abstract}
    \input{00-abstract}
\end{abstract}

\clearpage

\section{Introduction}

\input{01-intro}
\subsection{Problem Statement}

\input{01.1-problemStatement}
\subsection{Our Main Result}
\input{01.2-contributions}

\subsection{Technical Overview}
\subsubsection{Background: Flattening} \label{sec:background:flattening}

\input{02.1-flattening}
\subsubsection{Overview of our Results}
\input{02.2-overview}

\subsection{Related Works}

\input{01.3-relatedWorks}

\input{03-upperbounds}

\input{appendix/app09-HighDimensionalTester}

\section{Lower Bounds for Bivariate Augmented Independence Testing}
\label{sec:2D-LB}
\input{appendix/app10-2D-LB}

\section{Lower Bounds for Multivariate Augmented Independence Testing}
\label{sec: d_Dim_LB}
\input{appendix/app11-highD-LB}

\section{Acknowledgements}
\input{99-acknowledgements}


\bibliographystyle{alpha}
\bibliography{references, new_references}

\clearpage

\appendix
\input{Appendix}
\end{document}

%% file: 00-abstract.tex
Independence testing is a fundamental problem in statistical inference: given samples from a joint distribution $p$ over multiple random variables, the goal is to determine whether $p$ is a product distribution or is $\tol$-far from all product distributions in total variation distance. In the non-parametric finite-sample regime, this task is notoriously expensive, as the minimax sample complexity scales polynomially with the support size. In this work, we move beyond these worst-case limitations by leveraging the framework of \textit{augmented distribution testing}. We design independence testers that incorporate auxiliary, but potentially untrustworthy, predictive information. Our framework ensures that the tester remains robust, maintaining worst-case validity regardless of the prediction's quality, while significantly improving sample efficiency when the prediction is accurate. 
Our main contributions include: (i) a bivariate independence tester for discrete distributions that adaptively reduces sample complexity based on the prediction error; (ii) a generalization to the high-dimensional multivariate setting for testing the independence of $d$ random variables; and (iii) matching minimax lower bounds demonstrating that our testers achieve optimal sample complexity. 

%% file: 01-intro.tex
Independence testing is a central problem in statistical inference, with applications spanning from medical domains~\citep{Cornfield51,Sasieni97, HouYFZHLLW22} to machine learning problems such as causal discovery and feature selection~\citep{BachJ02, SzekelyRB07, SongSGBB07, GrettonFTSSS07, ZhangPJS11, BrownPZL12, CaiLZ22}. 
Formally, given samples from a joint distribution over multiple random variables, the goal is to determine whether the variables are statistically independent or exhibit some form of dependence.

While classical approaches like the $\chi^2$ test of \cite{pearson1900x} are widely used, they typically lack guarantees for the non-parametric finite-sample regime (such as small type I and type II error). Instead, their validity stems from either restrictive parametric assumptions or asymptotic properties that hold only in the limit as the number of samples tends to infinity. In the absence of such assumptions, independence testing is notoriously expensive: the minimax sample complexity scales polynomially with the support size~\citep{BatuFFKRW01, DiakonikolasK16}. This high lower bound presents a formidable challenge for sample-efficient inference.

To move beyond these worst-case limitations, a emerging line of work suggests that the analyst need not operate in a vacuum. In many modern data science contexts, one often possesses auxiliary information: public datasets gathered from disparate institutions, or a distribution originating from historical data, a generative model, or domain-specific heuristics. While such information may be abundant, it often lacks formal quality guarantees. A natural question is to determine if one can leverage such \textit{untrustworthy} predictive information in a rigorous manner—designing tests that remain mathematically sound (and robust) when the prediction is poor, yet achieve significantly improved sample efficiency when the prediction is accurate.

Recently, the framework of \textit{augmented algorithms} has been proposed to bridge this gap, incorporating auxiliary information into classical statistical problems~\citep{canonne2014testing, CanonneRS:2015, PRS2018, eden2021learning, AIRS24} and beyond~\citep{algorithms-with-predictions}. This paradigm has shown great promise in distribution testing tasks such as uniformity testing, identity testing (goodness of fit), and closeness testing (equivalence testing)~\citep{AIRS24, ABCR25}. In the augmented testing framework, the tester receives both samples from $p$, the true underlying data distribution, and a predicted distribution $\hat{p}$. The tester is required to be \textit{robust}, maintaining worst-case validity regardless of $\pred$, while simultaneously \textit{exploiting} accurate predictions to surpass classical minimax lower bounds.

In this work, we bring the framework of augmented distribution testing to the problem of independence testing over a discrete support. Our main contributions are as follows:
${i)}$ We develop an independence tester for a bivariate distribution with finite sample complexity and worst-case error guarantees. This tester leverages a predicted distribution and adaptively reduces the sample complexity depending on the quality of the prediction without hindering the accuracy of the test.
${ii)}$ We generalize this result to the high-dimensional setting, testing the independence of multiple random variables.
${iii)}$ We provide matching lower bounds which indicate that our testers achieve the optimal sample complexity.


%% file: 01.1-problemStatement.tex

We adopt the framework of property testing of distributions~\citep{GoldreichBook17}. In this setting, given sample access to a distribution $p$, a tester, with high probability, must output \accept\textsc{accept} if $p$ belongs to a property $\mathcal{P}$ (viewed as a set of distributions) and \textsc{reject} if $p$ is $\tol$-far in total variation distance from every distribution in $\mathcal{P}$. Here, we consider $\mathcal{P}$ to be the property of \textit{independence}---specifically, the set of all product distributions over a desired factorization of the domain.

In the augmented setting~\citep{AIRS24}, in addition to sample access to the distribution $p$, we are provided with a predicted distribution $\hat{p}$ and a suggested error bound $\predError$ representing the purported accuracy of $\hat{p}$. We assume full knowledge of $\hat{p}$ (i.e., its probability mass function is known). In this framework, an augmented independence tester has a third option beyond outputting \textsc{accept} or \textsc{reject}: it may choose not to answer the test and instead output \textsc{inaccurate information} if the actual quality of the prediction is worse than the suggested error $\predError$. However, the tester is required to provide an answer whenever the prediction is sufficiently accurate (i.e., when the suggested error $\predError$ is correct). Crucially, the tester cannot output an incorrect answer; if the tester does decide to answer the test, its output must be correct with high probability, regardless of the quality of the prediction. Formally, we define the desired augmented tester as follows:

\begin{definition}[Augmented Independence Tester]
    \label{def:augIndTester}
    Let $\predError \in [0, 1], \tol \in (0, 1), \probFailure \in (0, 1)$. For all $i \in [d]$, let $n_i \ge 2$ be an integer denoting the domain size. Let $\dist$ and $\pred$ be distributions over $\prod_{i=1}^d [n_i]$, and let $\dist_1, \ldots, \dist_d$ denote the marginals of $\dist$. An algorithm $\AA$ which is given sample access to $\dist$ and explicit access to $\pred$, and outputs \textsc{accept}, \textsc{reject}, or \textsc{inaccurate information}, is an $(\predError, \tol, \probFailure)$-augmented independence tester if for all $\dist$, it satisfies:
    \begin{itemize}
        \item If $\dist = \dist_1 \times \dist_2 \times \cdots \times \dist_d$, $\AA$ outputs \textsc{reject} with probability at most $\probFailure/2$.
        \item If $p$ is $\tol$-far from all product distributions, $\AA$ outputs \textsc{accept} with probability at most $\probFailure/2$.
        \item If $\TV{p}{\pred} \leq \predError$, $\AA$ outputs \textsc{inaccurate information} with probability at most $\probFailure/2$.
    \end{itemize}
\end{definition}

\begin{remark}[Finding the prediction error.]
\label{remark:choosingPredError}
Note that in many practical settings, the prediction error $\predError$ may not be known \textit{a priori}. However, the robustness of our augmented tester is naturally suited for identifying a desired error level adaptively. In particular, by employing the \textsc{search} meta-algorithm of \cite{AIRS24}, we can search for the true error $\predError^\star$ by starting with the smallest $\predError$ and repeatedly invoking the augmented tester with increasing values of $\predError$ until the algorithm provides a definitive answer (i.e., it no longer outputs \textsc{inaccurate information}). The expected sample complexity of this procedure is $\tilde{O}(f(\predError^\star))$, where $f(\predError^\star)$ denotes the sample complexity of the tester when provided with the true error $\predError^\star$ of the prediction. For the remainder of this paper, we assume a fixed $\predError$ is provided and derive the sample complexity based on this error bound. 
\end{remark}

%% file: 01.2-contributions.tex

    
In this work, we provide matching upper and lower bounds for augmented independence testing of both two-dimensional and $\numDims$-dimensional distributions. Our contributions are summarized in the following theorem.
\begin{theorem}[Informal version of Theorems~\ref{thm:2D-UB},~\ref{thrm: HighDim},~\ref{thm:2D-LB}~and~\ref{thm:LB-d-dim-testing}]
Let $\numDims \geq 2$. Let $\distHighDim$ be an unknown distribution over $n_1 \times \ldots \times n_\numDims$
, and let $\predHighDim$ be a known prediction for $\distHighDim$ over the same domain. Let $\dimTotal \coloneqq \prod_{i=1}^\numDims n_i$ denote the total domain size. For every $\tol \in (0, 1)$ and $\predError \in [0, 1]$, the sample complexity of $(\predError, \tol, \probFail = 0.1)$-augmented independence testing of $\distHighDim$ given prediction $\predHighDim$ is
\begin{equation}
    \Theta\left( \max_{j \in [\numDims]} \left( \frac{\sqrt{\dimTotal}}{\tol^2}, \frac{n_j^{1/3} \dimTotal^{1/3} \predError^{1/3}}{\tol^{4/3}} \right) \right).
\end{equation}
    
\end{theorem}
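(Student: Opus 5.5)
The proof has two halves: an upper bound, obtained by combining flattening with an augmented closeness-style statistic, and a matching lower bound built from Paninski-type perturbations placed on a region where the prediction carries little information. Since $\predError=0$ recovers classical testing, the $\sqrt{\dimTotal}/\tol^2$ term comes from existing results. The work is the term $n_j^{1/3}\dimTotal^{1/3}\predError^{1/3}/\tol^{4/3}$.

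\textbf{Upper bound, bivariate case.} Take $n=n_1\ge m=n_2$. The tester runs in three steps.

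First, it uses the known $\pred$ to flatten the domain. Each element $(a,b)$ is split into roughly $1+k\,\pred(a,b)$ buckets, with $k$ chosen so the total bucket count stays $O(nm)$. Under this splitting the flattened $\pred$ has small $\ell_2$ norm, about $1/k$.

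Second, it checks whether $\pred$ is trustworthy. It draws $s$ samples and tests whether the empirical flattened $p$ has $\ell_2$ norm close to that of the flattened $\pred$. Using $\|p\|_2\le\|\pred\|_2+\|p-\pred\|_2$ and $\TV{p}{\pred}\le\predError$, the tester outputs \textsc{inaccurate information} only when the collision statistic certifies a large norm. By this bound, that cannot happen when the prediction is accurate.

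Third, given small $\|p\|_2$, it runs the $\ell_2$ independence test of Diakonikolas--Kane. That test compares $p$ against the product of its empirical marginals and needs on the order of $\|p\|_2\sqrt{nm}/\tol^2$ samples. Marginal flattening is done with extra samples, as in \cite{DiakonikolasK16}.

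The parameters are then balanced. The budget is $k$ buckets of mass $\approx 1/k$, plus $\predError$ mass of deviation that can concentrate in up to $\predError s$ heavy spots. Optimizing $k$ against $s$ should yield $s\approx n^{1/3}(nm)^{1/3}\predError^{1/3}/\tol^{4/3}$, plus $\sqrt{nm}/\tol^2$. The extra factor $n_j^{1/3}$ with the larger dimension is expected to arise because the marginal of the larger coordinate must be learned or flattened with $\tol$ accuracy relative to its support.

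\textbf{Upper bound, $\numDims$ dimensions.} Here I reduce to the bivariate case. Independence of $X_1,\dots,X_\numDims$ is tested by checking, for each $j$, whether $X_j$ is independent of $(X_{j+1},\dots,X_\numDims)$, at accuracy $\tol/\numDims$ via the triangle inequality. Alternatively, I apply the flattening directly on $\prod_i[n_i]$ against the product of empirical marginals, which avoids the loss in $\numDims$ and gives the $\max_j$ form. The prediction is passed down through its induced marginals and conditionals; TV distance is nonincreasing under these operations, so the augmented guarantee is preserved.

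\textbf{Lower bound.} I would take $\pred$ to be uniform on a subset $S$ of a product domain, together with a small region $T$ of mass $\approx\predError$ where $\pred$ places nothing. The two hypotheses are:
\begin{itemize}
\item a \emph{null}: $p$ is a product distribution agreeing with $\pred$ up to $\predError$, with its $\predError$ mass spread over a block $T=A\times[n_j]$ that is still a product;
\item an \emph{alternative}: the block $T$ is perturbed by Paninski-style $\pm\tol$ row/column signs, making $p$ $\tol$-far from every product.
\end{itemize}
Both hypotheses are $\predError$-close to $\pred$, so the tester must answer \textsc{accept} or \textsc{reject}, and the samples falling outside $T$ are identically distributed. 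Poissonizing and bounding the chi-square or mutual information between the two induced sample distributions, as in the Diakonikolas--Kane lower bound, shows indistinguishability. The block size $|A|$ is then tuned against $\predError$ and $\tol$ to give the $n_j^{1/3}\dimTotal^{1/3}\predError^{1/3}/\tol^{4/3}$ term.

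\textbf{Main obstacle.} I expect the lower bound calculation to be the hardest step. In the third-moment regime one must bound the contribution of triple collisions within $T$, which is where the $1/3$ exponents arise. I would first attempt it via the Poissonized moment-matching method of Valiant, matching the first two moments of the element counts inside $T$ between the two hypotheses.
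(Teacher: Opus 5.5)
There are genuine gaps in both halves: your joint flattening misses the optimal rate by a factor $m^{1/3}$ and neither of your $d$-dimensional routes is free of $d$-dependent losses, and your lower-bound alternative is only $O(\alpha\epsilon)$-far from product.

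\textbf{Upper bounds.} Your bivariate tester does not reach the claimed rate. Flattening the joint domain with buckets $1+k\hat p(a,b)$ and using $\|p\|_2\le\|\hat p\|_2+\|p-\hat p\|_2$ only certifies $\|p^{(F)}\|_2=O(k^{-1/2}+\alpha)$, because the $\alpha$ of deviation can sit on a single element that receives one bucket. Even with the sample-based splitting you allude to, you get $\mathbb{E}\|p^{(F)}\|_2^2=O(\alpha/s+1/(nm))$. The $\ell_2$-closeness tester on a flattened domain of size $\Theta(nm)$ costs $\Theta(nm\,\|p^{(F)}\|_2/\epsilon^2)$, not $\|p\|_2\sqrt{nm}/\epsilon^2$. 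Balancing against $s$ then gives $(nm)^{2/3}\alpha^{1/3}/\epsilon^{4/3}$, a factor $m^{1/3}$ too large. Your sentence about where $n_j^{1/3}$ ``is expected to arise'' is not a derivation.

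The missing idea is the paper's product of coordinate-wise augmented flattenings built from the marginal predictions $\hat p_1,\hat p_2$. When $p$ is a product, $p^{(F)}=p_1^{(F_1)}\times p_2^{(F_2)}$, and its squared norm factorizes. So the smaller coordinate is flattened to $O(1/m)$ with only $s_2=m\alpha$ samples, the norm bound becomes $b\approx(\alpha/s_1+1/n)/m$, and balancing gives $s_1=n^{2/3}m^{1/3}\alpha^{1/3}/\epsilon^{4/3}$. This needs two checks you do not have:
\begin{itemize}
\item \textsc{inaccurate information} is certified by the flattened \emph{marginal} norms exceeding $O(\alpha/s_i+1/n_i)$.
\item A large estimated \emph{joint} flattened norm certifies dependence and triggers \textsc{reject}. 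This is sound because for product $p$ the joint norm equals the product of the already-validated marginal norms.
\end{itemize}

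Your $d$-dimensional routes also fail:
\begin{itemize}
\item The chain $X_j\perp(X_{j+1},\dots,X_d)$ at accuracy $\epsilon/d$ loses $\mathrm{poly}(d)$ factors. Moreover, when $N/n_1>n_1$ (e.g.\ all $n_i=2$), the 2D bound on $[n_1]\times[N/n_1]$ is $(N/n_1)^{1/3}N^{1/3}\alpha^{1/3}/\epsilon^{4/3}$ rather than $n_1^{1/3}N^{1/3}\alpha^{1/3}/\epsilon^{4/3}$.
\item Flattening all $d$ coordinates at once inflates the domain by roughly $3^d$ and the cost by $2^{\Theta(d)}$, as the paper notes.
\item TV distance does not contract under conditioning, so passing the prediction down through conditionals is not justified.
\end{itemize}
The paper instead partitions the coordinates into at most three groups. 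Each group is either the largest coordinate alone or has domain size at most $\sqrt N$. It runs a 2D/3D augmented tester across the groups, then checks independence inside each group by learning the group's empirical distribution with $O(\sqrt N/\epsilon^2)$ samples.

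\textbf{Lower bound.} Your construction does not work as stated. A $\pm\epsilon$ perturbation confined to a block $T$ of mass $\approx\alpha$ moves $p$ only $O(\alpha\epsilon)$ away from the product distributions, not $\epsilon$. You could rescale to relative size $\epsilon/\alpha$ and plant a standard hard instance in $T$, but that only makes sense for $\alpha\gtrsim\epsilon$. It therefore misses the regime $\sqrt{m/n}/\epsilon^2<\alpha<\epsilon$, which is nonempty when $n\gg m/\epsilon^6$, where the $\alpha^{1/3}$ term still dominates. Also, a Paninski-type perturbation alone yields only a second-moment $\sqrt{\cdot}/\epsilon^2$ bound; nothing in your two hypotheses produces the $2/3$ exponent.

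The paper reverses the roles. The prediction $\hat p$ is uniform on $[n]\times[m]$. A random set of about $\alpha q$ rows (with $q$ the sample size) is made heavy, each of mass $1/q$. These rows have total mass $O(\alpha)$ and are identical under both hypotheses, so $p$ stays $\alpha$-close to $\hat p$ while the prediction reveals nothing about their location. The $\pm\Theta(\epsilon)$ perturbation sits on the light rows, which carry a constant fraction of the mass, so the alternative really is $\epsilon$-far. The heavy rows supply collisions that carry no information, and the bound $I(X;A_1,\dots,A_n)=O(\epsilon^4q^3/(n^2m\alpha))$ shows that only collisions inside a light row are informative.

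Your proposal also does not address the $d$-dimensional lower bound. The paper obtains it by reshaping the instance on $[n_1]\times[N/n_1]$ into $d$ coordinates. This uses two facts: the null instance has a uniform second marginal, so it stays a product after reshaping; and reshaping preserves both the distance to the prediction and farness from product.
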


The two-dimensional augmented independence tester and sample complexity upper bound are discussed in Section~\ref{sec:2D}, while the corresponding lower bound is established in Section~\ref{sec:2D-LB}. The $d$-dimensional tester and upper bound are presented in Section~\ref{sec: d_Dim_UB}, and the matching lower bound is proved in Section~\ref{sec: d_Dim_LB}. 
\begin{remark}[Success Amplification]
\label{remark:successAmplification}
    In this work, we consider a constant failure probability ($\probFail = 0.1$). Using standard amplification techniques, we can choose $\probFail$ to be arbitrarily small, at the cost of a multiplicative $\OO\left( \log (1/\probFail) \right)$ blow-up in sample complexity.
\end{remark}

%% file: 02.1-flattening.tex


In constructing our upper bounds, we use a technique from the literature of standard distribution testing known as \textit{flattening} \citep{DiakonikolasK16}.
This technique transforms one distribution testing problem into an equivalent problem over a larger domain by redistributing the mass of high-probability elements of the original distribution. By significantly decreasing the $\ell_2$-norm of this distribution, flattening enables the use of sample-efficient testing algorithms.

\paragraph{Standard Flattening.} Flattening divides the probability mass of each domain element of a distribution into element-wise equal buckets, defining a new distribution with a larger domain. Formally, given a distribution $p$ over $[\dimPOne]$ and a set of bucket sizes $b_i$ for all $i \in [\dimPOne]$, flattening maps each element $i$ to a tuple $(i, \ell)$, where $\ell$ is sampled uniformly from $[b_i]$.
This operation defines a new distribution $p^{(F)}$ such that:
\begin{equation}
    p^{(F)}(i, \ell) = \frac{p(i)}{b_i}
\end{equation}
\noindent Flattening satisfies two key properties, proved by \cite{DiakonikolasK16}, that enable its use in distribution testing:
\begin{enumerate}
    \item If two distributions $p$ and $q$ are flattened using the same bucket sizes, their $\ell_1$-distance is preserved. As a result, instead of providing $p$ and $q$ to a tester, we may provide $p^{(F)}$ and $q^{(F)}$ and expect the same result.
    \item One can utilize a preliminary set of samples from $p$ to identify which elements possess large probability mass. By setting the bucket sizes $b_i$ proportional to these empirical frequencies, one can effectively ``break down" the high-probability elements across multiple domain points.  This ensures that, in expectation, the resulting flattened distribution $p^{(F)}$ has a sufficiently small $\ell_2$-norm to permit sample-efficient testing.
\end{enumerate}

\paragraph{Augmented Flattening.}
\cite{AIRS24} extend the flattening techniques of \cite{DiakonikolasK16} to the augmented setting, allowing the prediction, $\hat{p}$, to guide the number of buckets created for each element. 
In addition to using knowledge gathered from sampling from $p$, this approach flattens elements that have a large probability mass under the prediction. This allows for a greater reduction of the $\ell_2$-norm when the prediction is accurate.

The augmented flattening technique of \cite{AIRS24} begins by drawing a sample set $\SS$ of size $\Poi(s)$ from $p$. Then, for all $i \in [\dimPOne]$, they let $N_i$ be the number of occurrences of $i \in \SS$ and choose the number of buckets for all $i$ as:
\begin{equation}
    b_i = \floor{n\cdot \pred(i)} + N_i + 1.
    \label{eqn:background-augFlatBuckets}
\end{equation}
The first term of this expression accounts for the influence of the prediction, assigning more buckets to elements with higher predicted probability mass. The second term accounts for information from the sample set, assigning more buckets to elements more frequently seen in $\SS$.
\cite{AIRS24} prove that, when $\pred$ is $\predError$-close to $p$, the distribution $p^{(F)}$ defined by this augmented flattening has a small $\ell_2^2$-norm:
\begin{equation}
        \E{\norm{p^{(F)}}^2_2} \leq \frac{2\predError}{s} + \frac{4}{n}.
    \end{equation}
See Fact~\ref{fact:augFlatCloseness} for further details. 


\paragraph{Closeness Testing via Flattening.} 
The problem of testing the closeness of two distributions over $[n]$ involves distinguishing whether $p = q$ or $\|p - q\|_1 \ge \tol$ with probability at least $0.9$. The minimax sample complexity for this task is $\Theta(n^{2/3}/\tol^{4/3} + \sqrt{n}/\tol^2)$ \citep{DiakonikolasK16}. A key advancement in this area is the development of closeness testers whose sample complexity scales with the $\ell_2$-norms of the underlying distributions rather than the raw support size \citep{ChanDVV14}. Specifically, if the distributions have $\ell_2$-norms at most $b$, they can be tested using $O(n\,b/\tol^2)$ many samples, which in certain regimes can be much more efficient than the worst-case bounds suggest. This relationship between $\ell_2$-norm and testing complexity is the core motivation for the flattening technique: by reducing the $\ell_2$-norm of the distributions, one can leverage these refined testers to achieve optimal sample complexity. Indeed, this approach was central to establishing the optimal bounds for standard independence testing in \cite{DiakonikolasK16}.

\paragraph{Independence Testing via Flattening.}
Independence testing can be viewed as the task of distinguishing whether a joint distribution $p$ is equal to the product of its marginals, $p_1 \times p_2$, or $3\tol$-far from it~\citep{BatuFFKRW01}. From this perspective, one can use samples from $p$ to simulate samples from both $p$ and $p_1 \times p_2$, subsequently employing a closeness tester to determine independence.

While a direct flattening of the joint domain $[n] \times [m]$ where $n\ge m$ yields a suboptimal sample complexity, \cite{DiakonikolasK16} achieved the optimal rate of $$O\left(\frac{n^{2/3}m^{1/3}}{\tol^{4/3}} + \frac{\sqrt{n m}}{\tol^2}\right)\,.$$ Roughly speaking, their approach relies on constructing a \textit{product of flattenings}: by flattening each dimension independently based on its respective marginal, they ensured that an independent distribution $p$ is transformed into a joint flattened distribution with a low $\ell_2$-norm. This makes the distribution amenable to the refined closeness tester of \cite{ChanDVV14}, which is highly efficient for distributions with small $\ell_2$-norms.

%% file: 02.2-overview.tex

\paragraph{Upper Bound for Two-Dimensional Augmented Independence Testing.}
Our approach combines augmented flattening with the product-flattening technique by using the prediction $\hat{p}$ to achieve a more efficient reduction of the marginal $\ell_2$-norms. The correctness of this approach hinges on a critical validation step: if $\hat{p}$ is accurate, the flattened marginals are guaranteed to have low $\ell_2$-norms. Conversely, a violation of this bound serves as a certificate of the prediction's inaccuracy. Furthermore, if $p$ is independent, the flattened joint distribution must also exhibit a low $\ell_2$-norm. Our algorithm explicitly estimates this joint norm; if it exceeds the expected threshold, it serves as a certificate of non-independence, allowing us to \textsc{reject} immediately. This ensures that we either detect dependence early through norm inflation or proceed with a ``well-behaved'' distribution that can be tested efficiently for closeness to its marginals.

\paragraph{Lower Bound for Two-Dimensional Augmented Independence Testing.}

Our lower bound for two-dimensional augmented independence testing follows in two separate cases defined by the relationships between $\dimPOne, \dimPTwo, \predError$ and $\tol$. The first case establishes a bound of $\Omega(\sqrt{\dimPOne\dimPTwo} / \tol^2)$ and follows from a reduction from the hard instance of standard independence testing \cite{ADK15, DiakonikolasK16}.
The second case establishes a bound of $\Omega\left( \dimPOne^{2/3} \dimPTwo^{1/3} \predError^{1/3} \tol^{-4/3} \right)$. To prove this bound, we construct two families of distributions which are information-theoretically indistinguishable but which an augmented tester must be able to distinguish. This construction builds upon techniques established in prior works \citep{Valiant11, BatuFRSW13, DiakonikolasK16} 
wherein a distribution's distance to a property is hidden among ``light'' elements with low probability mass, while the majority of the probability mass is concentrated in ``heavy'' rows that are identical across the two families. Here, by a row, we mean the probability distribution over the second coordinate obtained by fixing the first coordinate.

In the standard setting, the heavy rows follow a uniform distribution, while the light rows are uniform in one family and $\tol$-far from uniform in the other.
This construction forces a tester to draw enough samples to effectively uncover the heavy rows. However, in our augmented setting, the prediction may reveal the locations of the heavy rows to the tester. We extend this technique to the augmented setting by designing our families of distributions such that only a $\OO(\predError)$-factor of the rows are heavy. By choosing the prediction to be the uniform distribution, we trap the tester: this prediction provides no information about which rows are heavy or light, yet is $\predError$-close to the true distribution. This forces the tester to output \textsc{accept} or \textsc{reject}, rather than \textsc{inaccurate information}---essentially requiring it to differentiate between the two families.

We then provide an information-theoretic argument to demonstrate that the two families are indistinguishable without sufficient samples.
Following the framework of \cite{DiakonikolasK16}, we 
bound the mutual information between an indicator of a distribution's family and a sample set drawn from that distribution.
To do so, we show that the mutual information is only non-negligible when the tester sees a collision in (i.e. draws multiple samples from) a light row. Heavy rows contribute no mutual information because they're identical across the families, while without collisions in a light row, a tester gains no information about whether that row is uniform or $\tol$-far from uniform.
Under our hard instance, the probability of the tester observing enough collisions in the light rows is small unless the sample complexity is large.

\paragraph{Upper Bound for $\numDims$-Dimensional Augmented Independence Testing.}
A na\"{\i}ve approach to extending augmented independence testing to a $d$-dimensional distribution is to apply augmented flattening independently to all marginals, and then perform closeness testing between the flattened joint distribution and the product of the flattened marginals. However, augmented flattening increases the domain size of each coordinate by at least a constant factor (greater than 2). This incurs an additional $2^{\OO(d)}$ factor in the sample complexity for general $d$, which becomes prohibitive for large $d$. In contrast, this overhead remains harmless in low dimensions. Indeed, we leverage this approach to obtain a three-dimensional tester, which serves as a building block for our higher-dimensional construction. To achieve the general tester, we partition the $d$ coordinates into at most three groups, each having total domain size at most $\sqrt{\dimTotal}$, where $\dimTotal \coloneqq \prod_{i=1}^d n_i$. If any coordinate has a domain size greater than $\sqrt{\dimTotal}$, we isolate it as a singleton set and aggregate the remaining coordinates into a second group. After this partitioning, we first invoke a two- or three-dimensional augmented tester to check independence across the resulting groups. If this test outputs \textsc{accept}, it remains to verify independence within each group. By construction, each group has a total domain size of at most $\sqrt{\dimTotal}$, which allows us to learn the empirical distribution over each group up to $\ell_1$ error of $\Theta(\tol)$ using only $\OO(\sqrt{\dimTotal}/\tol^2)$ samples.

By explicitly bounding the error introduced in learning the marginals, we show that for a product distribution the learned empirical distribution is close to the product of its marginal as well. Therefore, independence testing can be carried out on the empirical distribution. 
This final step is entirely computational and requires no additional samples.

\paragraph{Lower Bound for $\numDims$-Dimensional Augmented Independence Testing.}
Our lower bound for $\numDims$-dimensional augmented independence testing follows from a reduction from the two-dimensional augmented independence testing problem. Although this reduction does not apply to all distributions, we show that a $\numDims$-dimensional tester can nevertheless distinguish the specific hard instances used to establish the lower bound in the two-dimensional case. In particular, we show that a two-dimensional distribution can be reshaped into a $\numDims$-dimensional distribution while preserving both its distance from a product distribution and to the prediction. This argument relies on a structural property of the hard instances constructed in the two-dimensional setting, namely that the corresponding product distributions are uniform. As a consequence, we can solve two-dimensional augmented independence testing by reshaping two-dimensional samples into $\numDims$-dimensional samples, applying a $\numDims$-dimensional tester, and returning its output.

%% file: 01.3-relatedWorks.tex
\paragraph{Distribution Testing.}
Since its introduction by \cite{RubinfeldS96, GGR98} and \cite{Batu01}, a long line of work has studied algorithms for testing different properties of distributions. Central problems under this umbrella include identity testing \citep{ValiantV14, DiakonikolasKN14}, uniformity testing \citep{Paninski08} and closeness testing \cite{BatuFRSW00, AcharyaDJOPS12, BatuFRSW13, ChanDVV14}. These properties and others have also been studied in, e.g., 
\citep{BatuKR04, Valiant11, GR11, ILR12, LRR13, DaskalakisDSV13, ADK15, DiakonikolasK16, AliakbarpourBR16, CanonneDGR18, aliakbarpourGPRY18, AliakbarpourKR19, AS20}. For a survey of the field, we refer the reader to \cite{Rub12, GoldreichBook17, CanonneSurvey20, Canonne22book}.

\paragraph{Independence Testing.}
Testing the independence of two distributions has been studied since the foundational work of \cite{pearson1900x}, which proposed the famed $\chi^2$-test. Modern research has focused on developing sample-efficient algorithms. In particular, \cite{Batu01} and \cite{LRR13} provided the first algorithms with sublinear sample complexity. \cite{ADK15} and \cite{DiakonikolasK16} refined their results, with the latter providing tight sample complexity bounds for for two-dimensional and $d$-dimensional independence testing in the standard setting.
Related problems include $k$-wise independent testing \citep{alon2007testing, rubinfeld2010testing} and testing conditional independence \citep{canonneDKS18}.

\paragraph{Augmented Algorithms for Distribution Testing.}
To bypass the standard lower bounds of distribution testing, recent works have assumed stronger access models beyond random sampling. 
These include exact or approximate oracles of the probability mass function \citep{canonne2014testing, PRS2018, eden2021learning}, as well as conditional oracles that sample from a subset of the domain \citep{CanonneRS:2015}. 
Most relevant to our work, \cite{AIRS24} initiated the study of augmented distribution testing, where the tester is given a prediction of the true distribution and a suggestion of that prediction's accuracy. Recently, \cite{ABCR25} extended this framework to the private setting. In this work, we build upon the model of \cite{AIRS24}, studying augmented algorithms for independence testing.

%% file: 03-upperbounds.tex
\section{Upper Bounds for Bivariate Augmented Independence Testing}
\label{sec:indepTestUB}

\subsection{Augmented Flattening for Independence Testing}
\label{sec:augFlat2D}
The augmented flattening technique of \cite{AIRS24} described in Section~\ref{sec:background:flattening} applies to distributions over a one-dimensional domain $[\dimPOne]$. In this section, we combine this approach with the product-of-flattenings technique of \cite{DiakonikolasK16} to develop an augmented flattening technique for distributions over a two dimensional domain $[\dimPOne] \times [\dimPTwo]$. 

Assume that we are given a distribution $p$ over $[\dimPOne] \times [\dimPTwo]$ with marginals $p_1$ and $p_2$. To flatten $p$, we draw $\Poi(s_1)$ samples from $p_1$ to form a sample set $\SS_1$ and independently draw $\Poi(s_2)$ samples from $p_2$ to form a sample set $\SS_2$.
We construct two separate coordinate-wise augmented flattenings, $F_1$ and $F_2$, where the number of buckets $b_i^{(1)}$ and $b_j^{(2)}$ are chosen according to \eqref{eqn:background-augFlatBuckets}, with $F_1$ using prediction $\pred_1$ and sample set $\SS_1$, and $F_2$  using prediction $\pred_2$ and sample set $\SS_2$.
We then define a global flattening $F$ over $[\dimPOne] \times [\dimPTwo]$ by assigning $b^{(1)}_i b^{(2)}_j$ buckets to each pair $(i, j)$.
This is equivalent to the flattening created by mapping each $(i, j)$ to a tuple $(i, j, k, \ell)$ where $k$ and $\ell$ are chosen uniformly from $[b_i^{(1)}]$ and $[b_j^{(2)}]$, respectively. Consequently, when $p$ is a product distribution, $p^{(F)} = p_1^{(F_1)} \times p_2^{(F_2)}$.




\subsection{Bivariate Augmented Independence Tester} \label{sec:2D}

In this section, we provide an augmented independence tester for bivariate distributions that achieves optimal sample complexity. Given sample access to a distribution $\dist$, full knowledge of a prediction $\pred$ and an estimate of the prediction's accuracy $\predError$, the algorithm constructs an augmented flattening of the domain $[\dimPOne] \times [\dimPTwo]$ according to the procedure described in Section~\ref{sec:augFlat2D}.

It then tests if the estimate $\predError$ was reliable by comparing the norms of the flattened marginals with their expected value when $\TV{\dist}{\pred} \leq \predError$. 
If it finds the estimate to be unreliable, it outputs \textsc{inaccurate information}. Otherwise, it performs a second validation step to ensure that the norms of the flattened joint and product distributions are sufficiently close. If they are not, then the joint distribution must be far from a product distribution and the tester outputs \textsc{reject}. If they are, then the algorithm compares the two flattened distributions using the closeness tester of \cite{ChanDVV14} and outputs the result of this test.

The pseudocode of the tester is given in Algorithm~\ref{alg:augIndTest2D}. In Theorem~\ref{thm:2D-UB}, we prove its correctness and its sample complexity.

\begin{algorithm}[!ht]
    \caption{Augmented Independence Tester for Two-Dimensional Distributions}
    \label{alg:augIndTest2D}
    
    \LinesNumbered
    \SetAlgoLined
    
    \SetKwProg{AugTest}{Augmented-Independence-Tester-2D}{:}{end}
    \SetKwFunction{EstNorm}{Estimate-\ensuremath{\ell_2^2}} 
    \SetKwFunction{CloseTest}{Closeness-Tester}
    
    \KwIn{sample access to a distribution $\dist$ over $[\dimPOne] \times [\dimPTwo]$ with marginals $p_1$ and $p_2$; prediction $\pred$; estimate of prediction error $\predError$; proximity parameter $\tol$; confidence $\probFailure$}
    
    \AugTest{$\left(\dimPOne, \dimPTwo, \dist, \pred, \predError, \tol, \probFailure = 0.1 \right)$}{
        $c, c' \leftarrow$ constants
    
        $s_1 \leftarrow \min \left( \dimPOne^{2/3} \dimPTwo^{1/3} \predError^{1/3} \tol^{-4/3}, \dimPOne \predError \right)$  , \quad
        $s_2 \leftarrow \dimPTwo \predError$ 
        
        $\hat{s}_1 \leftarrow \Poi(s_1)$ , \quad
        $\hat{s}_2 \leftarrow \Poi(s_2)$ 
        
        \If{ $\hat{s}_1 > c' s_1$ or $\hat{s}_2 > c' s_2$ }{ 
            \KwRet \textsc{reject} \label{line:reject-PoiSamples} 
        }
        
        $\SS_1 \leftarrow \hat{s}_1$ samples from $\dist_1$ , \quad 
        $\SS_2 \leftarrow \hat{s}_2$ samples from $\dist_2$ 
        
        $\tau_1 \leftarrow \frac{2 \predError}{s_1} + \frac{4}{\dimPOne}$ \label{line:setTau} , \quad
        $\tau_2 \leftarrow \frac{2 \predError}{s_2} + \frac{4}{\dimPTwo}$ 
        
        \For{$i \in [\dimPOne]$ and $j \in [\dimPTwo]$}{     
            construct $b_i^{(1)}, b_j^{(2)}$ as in \eqref{eqn:background-augFlatBuckets} with sample sets $\SS_1$ and $\SS_2$ 
        }
        $F_1, F_2, F \leftarrow$ flattenings of $[\dimPOne], [\dimPTwo], [\dimPOne] \times [\dimPTwo]$ given buckets $b_i^{(1)}$ for all $i$ and $b_j^{(2)}$ for all $j$ \; \tcp{See Section~\ref{sec:augFlat2D}} 

        $L_1 \leftarrow$ \EstNorm $\left(\distOneFlat,  \sum_{i=1}^\dimPOne b_i^{(1)}, 1/120 \right)$  \tcp{See Fact~\ref{fact:estimateNorm}}
        
        $L_2 \leftarrow$ \EstNorm $\left(\distTwoFlat, \sum_{j=1}^\dimPTwo b_j^{(2)}, 1/120 \right)$ 
        
        \If{ $L_1 > c \tau_1$ or $L_2 > c \tau_2$ \label{line:testNormsP1andP2} }{
            \KwRet \textsc{inaccurate information} \label{line:inaccurate} 
        }
        
        $L \leftarrow$ \EstNorm $\left(\distFlat, \left( \sum_{i=1}^\dimPOne b_i^{(1)}\right) \left( \sum_{j=1}^\dimPTwo b_j^{(2)} \right) ,  1/120 \right)$ 
        
        \If{$L > 10 c^2 \tau_1 \tau_2$ \label{line:testNormP}}{
            \KwRet \textsc{reject}
            \label{line:reject-normsFar}}
            
        \KwRet \CloseTest $\left( \dist^{(F)}, \distOneFlat \times \distTwoFlat, 4 c^{2} \tau_1 \tau_2, \tol, 1/80 \right)$ \label{line:return-closeness} 
        \tcp{See Fact~\ref{fact:closenessTest}} 
    }
\end{algorithm}

\begin{theorem}
   \label{conj:AugTest2D-UB}
    Let $\dimPOne, \dimPTwo \in \nats$ such that $\dimPOne \geq \dimPTwo \geq 2$. For all $\tol, \probFailure \in (0, 1), \predError \in [0, 1]$, unknown distributions $\distLB$ over $[\dimPOne] \times [\dimPTwo]$ and known distributions $\pred$ over $[\dimPOne] \times [\dimPTwo]$, Algorithm~\ref{alg:augIndTest2D} is an $(\predError, \tol, \probFailure=0.1)$-augmented independence tester with sample complexity
    $
        O\left(\max\left\{\frac{\sqrt{nm}}{\tol^2},\frac{n^{2/3}m^{1/3}\predError^{1/3}}{\tol^{4/3}}\right\}\right)
    $.
    \label{thm:2D-UB}
\end{theorem}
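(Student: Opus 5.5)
We will verify the three requirements of Definition~\ref{def:augIndTester} separately and then bound the sample complexity, treating each random event (Poisson overflow, the three norm estimates, the closeness test) as a separate failure source and union-bounding them so the total failure is at most $0.05$ per case.

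\textbf{Sample complexity and Poisson overflow.} The Poisson sizes exceed $c's_1$ or $c's_2$ with probability at most $1/200$ by Markov's inequality for a large constant $c'$. When this happens on a product distribution, the tester outputs \textsc{reject} incorrectly, so it must be counted in the $\probFailure/2$ budget.

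The flattened domain sizes are $\sum_i b^{(1)}_i \le 2n + \hat{s}_1 = O(n)$ and $\sum_j b^{(2)}_j = O(m)$, since $s_1\le n\predError$ and $s_2\le m\predError$. Hence the joint flattened domain has size $N=O(nm)$.

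By Fact~\ref{fact:estimateNorm}, estimating $\ell_2^2$ norms to constant factor costs $O(\sqrt{N'})$ samples on a domain of size $N'$, which is $O(\sqrt{nm})$. The closeness tester of Fact~\ref{fact:closenessTest} uses $O(N\sqrt{b}/\tol^2)$ samples with $b=4c^2\tau_1\tau_2$. Note that $\sqrt{\tau_1\tau_2}\lesssim \sqrt{(\predError/s_1+1/n)(\predError/s_2+1/m)}$. We have $\predError/s_2=1/m$, so $\tau_2=O(1/m)$. The cost is therefore
\[
O\!\left(\frac{nm}{\tol^2}\sqrt{\tfrac{1}{m}\left(\tfrac{\predError}{s_1}+\tfrac{1}{n}\right)}\right)
= O\!\left(\frac{\sqrt{nm}}{\tol^2}+\frac{n\sqrt{m}}{\tol^2}\sqrt{\tfrac{\predError}{s_1}}\right).
\]
With $s_1=n^{2/3}m^{1/3}\predError^{1/3}\tol^{-4/3}$, the second term equals $n^{2/3}m^{1/3}\predError^{1/3}\tol^{-4/3}$, balancing against $s_1$. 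When instead $s_1=n\predError$, we get $\predError/s_1=1/n$ and the term collapses to $\sqrt{nm}/\tol^2$. Simulating a sample of $p_1^{(F_1)}\times p_2^{(F_2)}$ uses two samples of $p$. All totals are thus $O(\max\{\sqrt{nm}/\tol^2,\ n^{2/3}m^{1/3}\predError^{1/3}\tol^{-4/3}\})$; the terms $s_1,s_2\le m\predError$ are dominated.

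\textbf{Accurate prediction ($\TV{p}{\pred}\le\predError$).} The marginals satisfy $\TV{p_k}{\pred_k}\le\predError$. Fact~\ref{fact:augFlatCloseness} gives $\E{\|p_k^{(F_k)}\|_2^2}\le\tau_k$, so by Markov each is at most $c\tau_k/2$ with high probability. The constant-factor norm estimate then passes line~\ref{line:testNormsP1andP2}, so \textsc{inaccurate information} is output with small probability.

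\textbf{Product $p$.} Here $p^{(F)}=p_1^{(F_1)}\times p_2^{(F_2)}$, so $\|p^{(F)}\|_2^2=L_1$-true$\cdot L_2$-true. This step is the main subtlety: we cannot use Fact~\ref{fact:augFlatCloseness}, since $\pred$ may be bad. Instead, we condition on having passed line~\ref{line:testNormsP1andP2}. With high probability of the estimator, the true norms are then at most $2c\tau_k$, so $\|p^{(F)}\|_2^2\le 4c^2\tau_1\tau_2$. The estimate $L$ is then at most $10c^2\tau_1\tau_2$, so line~\ref{line:reject-normsFar} does not reject. Both flattened distributions also satisfy the $\ell_2^2$ bound $4c^2\tau_1\tau_2$ required by the closeness tester, which accepts since the distributions are equal.

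\textbf{Far $p$.} Flattening preserves $\ell_1$ distance (property 1), and $p$ far from product implies $\|p-p_1\times p_2\|_1\ge\tol$ up to constants (via \cite{BatuFFKRW01}). Hence $\|p^{(F)}-(p_1\times p_2)^{(F)}\|_1\ge\tol$, with $(p_1\times p_2)^{(F)}=p_1^{(F_1)}\times p_2^{(F_2)}$.

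If the algorithm reaches line~\ref{line:return-closeness}, then with high probability $\|p^{(F)}\|_2^2\le 20c^2\tau_1\tau_2$ (because $L\le10c^2\tau_1\tau_2$) and the product norm is at most $4c^2\tau_1\tau_2$. Both are within a constant factor of the tester's norm parameter, so after adjusting constants the closeness tester rejects. Any earlier exit is \textsc{reject} or \textsc{inaccurate information}, both of which are acceptable in this case.

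The main obstacle is this norm-parameter bookkeeping: the norm bound handed to the closeness tester must hold for both inputs in every branch where the tester is reached. This is resolved by the explicit joint-norm check and by choosing the parameter $4c^2\tau_1\tau_2$ with slack.
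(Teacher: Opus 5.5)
Your proposal is correct and follows essentially the same route as the paper: condition on the norm estimates being accurate, use Fact~\ref{fact:augFlatCloseness} plus Markov in the accurate-prediction case, and bound $\|p^{(F)}\|_2^2$ in the product case via the passed marginal-norm check rather than the prediction. The sample-complexity balancing of $s_1$ against the closeness-tester cost $O(nm\sqrt{\tau_1\tau_2}/\tol^2)$ also matches. Two of your hedges are unnecessary. First, $p_1\times p_2$ is itself a product distribution, so farness transfers with no constant loss. Second, Fact~\ref{fact:closenessTest} only needs $b$ to dominate the \emph{smaller} of the two norms, which $4c^2\tau_1\tau_2$ already does, so no constants need adjusting. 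Also, $s_1$ is dominated because of the $\min$ in its definition, not because $s_1\le m\predError$, which is false in general.
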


\begin{proof}
~\paragraph{Correctness}

We show that Algorithm~\ref{alg:augIndTest2D} satisfies the three requirements of an augmented independence tester, described in Definition~\ref{def:augIndTester}. Throughout this proof, we set the constants to be $c = 120$ and $c' = 160$. 

The correctness proof of Algorithm~\ref{alg:augIndTest2D} begins by showing that, with high probability, the estimates of the $\ell_2^2$ norms of each flattened distribution are accurate. Conditioned on this events, the algorithm satisfies the three requirements of an augmented independence tester: (i) if the prediction is $\predError$-close to $\dist$, the algorithm does not output \textsc{inaccurate information} w.h.p.; (ii) if $\dist$ is far from a product distribution, the algorithm does not output \textsc{accept}; (iii) if $\dist$ is a product distribution, the algorithm does not output \textsc{reject}. 


By Fact~\ref{fact:estimateNorm}, the estimates $L_1, L_2$ and $L$ of the norms of the flattened distributions $\distOneFlat, \distTwoFlat$ and $\distFlat$, respectively, are within a constant factor of the true norms with high probability. Specifically, by Fact~\ref{fact:estimateNorm} and a union bound, the following hold with probability at least $1-1/40$:
\begin{equation}
    \frac{\norm{\distOneFlat}_2^2}{2} \leq L_1 \leq \frac{3\norm{\distOneFlat}_2^2}{2} ~~~\text{,}~~~ \frac{\norm{\distTwoFlat}_2^2}{2} \leq L_{2} \leq \frac{3\norm{\distTwoFlat}_2^2}{2}
    ~~~\text{and}~~~ 
    \frac{\norm{\distFlat}_2^2}{2} \leq L \leq \frac{3\norm{\distFlat}_2^2}{2}.
    \label{eqn:estimate-norm-succeeds}
\end{equation}
We will condition on this event throughout the remainder of the proof.

We now prove that when the prediction is $\predError$-close to the distribution, the probability of outputting \textsc{inaccurate information} conditioned on \eqref{eqn:estimate-norm-succeeds} is at most $0.025$. Assume that $\TV{\dist}{\pred} \leq \predError$. Observe the algorithm may only output \textsc{inaccurate information} in Line~\ref{line:inaccurate}, when $L_1 > c \tau_1$ or $L_2 > c \tau_2$. 
Applying a union bound and the fact that, by conditioning on \eqref{eqn:estimate-norm-succeeds}, we can upper bound $L_1$ and $L_2$ by constant multiples of $\norm{\distOneFlat}_2^2$ and $\norm{\distTwoFlat}_2^2$, the probability of outputting \textsc{inaccurate information} is at most:
\begin{align}
    &\Pr{\textsc{inaccurate information} \mid \TV{\dist}{\pred} \leq \predError, \eqref{eqn:estimate-norm-succeeds}} \\
    &\quad = \Pr{L_1 > c \tau_1 ~\text{or}~ L_2 > c \tau_2 \mid \TV{\dist}{\pred} \leq \predError, \eqref{eqn:estimate-norm-succeeds}} \\
    &\quad \leq \Pr{\frac{3}{2}\norm{\distOneFlat}_2^2 > c \tau_1 \mid \TV{\dist}{\pred} \leq \predError} \nonumber \\
    & \quaaaaaad + \Pr{\frac{3}{2}\norm{\distTwoFlat}_2^2 > c\tau_2 \mid \TV{\dist}{\pred} \leq \predError} 
    \label{eqn:2D-UB-correctness-probIIUnionBound}
\end{align}
By Fact~\ref{fact:augFlatCloseness}, in the case that $\TV{\dist}{\pred} \leq \predError$,
the expected $\ell_2^2$-norms of $\distOneFlat$ and $\distTwoFlat$ are upper bounded by $\tau_1$ and $\tau_2$, respectively. Combined with a Markov Inequality, this fact implies a bound on \eqref{eqn:2D-UB-correctness-probIIUnionBound} of:
\begin{align}
    &\Pr{\textsc{inaccurate information} \mid \TV{\dist}{\pred} \leq \predError} \\
    &\quaad \leq \frac{\E{ \frac{3}{2} \norm{\distOneFlat}_2^2 \mid \TV{\dist}{\pred} \leq \predError}}{c \tau_1} + \frac{\E{ \frac{3}{2} \norm{\distTwoFlat}_2^2 \mid \TV{\dist}{\pred} \leq \predError}}{c \tau_2} \\
    &\quaad \leq \frac{3\tau_1}{2c\tau_1} + \frac{3\tau_1}{2 c \tau_2} \\
    &\quaad = 1/40,
    \label{eqn:2D-UB-correctness-probIIMarkov}
\end{align}
where the final equality holds by substituting $c = 120$.

Next, we show that conditioned on \eqref{eqn:estimate-norm-succeeds}, the failure probability of the closeness tester is at most $1/80$. By Fact~\ref{fact:closenessTest}, this holds provided the parameter $b = 4 c^2 \tau_1 \tau_2$ upper bounds minimum of the $\ell_2^2$-norms of the two input distributions. 
Our algorithm guarantees that this condition holds by validating the norms of the marginals and the distribution itself in Line~\ref{line:testNormsP1andP2} and Line~\ref{line:testNormP}. 
Specifically, the closeness tester is only invoked at Line~\ref{line:return-closeness} only if Line~\ref{line:testNormsP1andP2} and Line~\ref{line:testNormP} evaluate to false. This implies that when the closeness tester is invoked, we have the following bounds:
\begin{equation}
    L_1 \leq c\tau_1, ~~~ L_2 \leq c \tau_2, ~~~\text{and}~~~ L \leq 10c^2 \tau_1 \tau_2.
\end{equation}
Combined with \eqref{eqn:estimate-norm-succeeds}, these bounds imply bounds on the squared norms of the flattened distributions:
\begin{equation}
    \norm{\distOneFlat}_2^2 \leq 2c\tau_1, ~~~ \norm{\distTwoFlat}_2^2 \leq 2c\tau_2, ~~~\text{and}~~~ \norm{\distFlat}_2^2 \leq 20c^2 \tau_1 \tau_2.
\end{equation}
Further, because the $\ell_2^2$-norm of $\distOneFlat \times \distTwoFlat$ is the product of the squared norms of its marginals, it is upper bounded by $4 c^2 \tau_1 \tau_2$. 
Therefore, the minimum of the squared norms of the distributions given to the standard closeness tester is at most $4 c^2 \tau_1 \tau_2$, which matches our setting of the parameter $b$. We can therefore apply Fact~\ref{fact:closenessTest} and guarantee that the failure probability of the closeness tester is at most $1/80$.

We now prove that if $\dist$ is $\tol$-far from a product distribution, Algorithm~\ref{alg:augIndTest2D} outputs \textsc{accept} with probability at most $1/80$. Observe that the algorithm outputs \textsc{accept} only in Line~\ref{line:return-closeness}, if the standard closeness tester on $\distFlat$ and $\distOneFlat \times \distTwoFlat$ outputs \textsc{accept}. 
However, when $\dist$ is $\tol$-far from all product distributions, $\distFlat$ must be $\tol$-far from $\distOneFlat \times \distTwoFlat$, due to the following observations: (i) if $\dist$ is $\tol$-far from any product distribution, then it is necessarily $\tol$-far from the product of its marginals, $\dist_1 \times \dist_2$; (ii) by the construction of $F$, $\prodDistFlat = \distOneFlat \times \distTwoFlat$; (iii) flattening preserves total variation distance \citep{DiakonikolasK16}. Therefore, the inputs to the closeness tester are $\tol$-far and the probability that the closeness tester outputs \textsc{accept} is at most $1/80$.

Finally, we show that if $\dist$ is a product distribution, the algorithm outputs \textsc{reject} with probability at most $1/40$, conditioned on \eqref{eqn:estimate-norm-succeeds}. Observe that the algorithm will output \textsc{reject} if any of the following events occur: (i) $\hat{s}_1 > c' s_1$ or $\hat{s}_2 > c' s_2 $ (Line~\ref{line:reject-PoiSamples}); (ii) the closeness tester outputs \textsc{reject} (Line~\ref{line:return-closeness}); (iii) $L > 10c^2  \tau_1 \tau_2$ (Line~\ref{line:reject-normsFar});. We show that, conditioned on \eqref{eqn:estimate-norm-succeeds} and $\dist$ being a product distribution, (i) and (ii) occur with low probability, while (iii) does not occur.

First, recall that $\hat{s}_1$ and $\hat{s}_2$ are Poisson random variables with means $s_1$ and $s_2$, respectively. By Markov's Inequality and a union bound, we can therefore bound the probability that $\hat{s}_1 > c' s_1$ or $\hat{s}_2 > c' s_2$ by $1/80$. Further, if $\dist = \dist_1 \times \dist_2$, then $\distFlat = \prodDistFlat$, and the closeness tester will output \textsc{reject} with probability at most $1/80$. Finally, to prove that (iii) will not occur, observe that if $\dist$ is a product distribution, then the flattened distribution $\distFlat$ is equal to the product of its flattened marginals, $\distOneFlat \times \distTwoFlat$. In particular, the two distributions must have the same squared norm. However, as we argued when proving that the closeness tester succeeds, we can bound the squared norm of the product of the flattened marginals as 
\begin{equation}
    \norm{\distOneFlat \times \distTwoFlat}_2^2 \leq 4 c^2 \tau_1^2 \tau_2^2.
    \label{eqn:2D-UB-correctness-normOfProd}
\end{equation}
Conditioned on \eqref{eqn:estimate-norm-succeeds}, the estimate $L$ is upper-bounded by a constant multiple of the $\ell_2^2$-norm of $\distFlat$. Combining this fact with \eqref{eqn:2D-UB-correctness-normOfProd}, we have
\begin{equation}
    L \leq \frac{3}{2} \norm{\distFlat}_2^2  = \frac{3}{2} \norm{\distOneFlat \times \distTwoFlat}_2^2 \leq 6 c^2 \tau_1^2 \tau_2^2.
\end{equation}
Therefore, if $\dist$ is a product distribution, $L > 10  c^2 \tau_1^2 \tau_2^2$ will never evaluate to true and the algorithm will not output \textsc{reject} at Line~\ref{line:reject-normsFar}. Ultimately, by a union bound over events (i) and (ii), the probability that the algorithm outputs \textsc{reject} on a product distribution is at most $1/80$, conditioned on \eqref{eqn:estimate-norm-succeeds}.

We have thus far shown that the estimates of the norms satisfy \eqref{eqn:estimate-norm-succeeds} with probability at least $1-1/40$. Conditioned on this event, we have shown that: (i) the tester outputs inaccurate information when $\TV{\dist}{\pred} \leq \predError$ with probability at most $1/40$; (ii) the tester outputs \textsc{accept} when $\dist$ is at least $\tol$-far from product with probability at most $1/80$; and (iii) the tester outputs \textsc{reject} when $\dist$ is a product distribution with probability at most $1/40$. By a union bound over the failure of the estimates of the norms and the conditional failure of the remainder of the tester, the total failure probability in each case is at most $1/20$. Therefore, Algorithm~\ref{alg:augIndTest2D} is an $(\predError, \tol, 0.1)$-augmented independence tester. 




~ \paragraph{Sample complexity}

Algorithm~\ref{alg:augIndTest2D} must draw samples from $\dist$ in three instances: when flattening the distributions, when estimating the squared norms of the flattened distributions, and when running the standard closeness tester on the flattened distributions.

To construct the flattening, we draw at most $c's_1  + c's_2  = \OO(s_1 + s_2)$ samples. 
Given the construction of the flattening $F_1$ described in Fact~\ref{fact:augFlatCloseness}, the domain size of $\distOneFlat$ is:
\begin{equation}
    \sum_{i=1}^\dimPOne \left( \floor{\dimPOne \pred_1(i)} + N_i + 1 \right)
    \leq \sum_{i=1}^\dimPOne \left( \dimPOne \pred_1(i)+ 1 \right) + \sum_{i=1}^\dimPOne N_i + \sum_{i=1}^\dimPOne 1 = 3\dimPOne + s_1 = \OO(\dimPOne).
\end{equation}
Similarly, the domain size of $\distTwoFlat$ is at most $3\dimPTwo + s_2 = \OO(\dimPTwo)$. Consequently, the domain size of $\distFlat$ is $\OO(\dimPOne \dimPTwo)$.

By Fact~\ref{fact:estimateNorm}, estimating the $\ell_2$-norm of a distribution over a domain of size $n$ requires $\OO\left(\sqrt{n}\right)$ samples. Therefore, to estimate the norms of our flattened distributions $\distOneFlat, \distTwoFlat$ and $\distFlat$ we must draw  $\OO\left(\sqrt{\dimPOne}\right), \OO\left(\sqrt{\dimPTwo}\right)$ and $\OO\left(\sqrt{\dimPOne \dimPTwo}\right)$ samples, respectively.

Applying Fact~\ref{fact:closenessTest}, the standard closeness tester draws $\OO( \dimPOne \dimPTwo \sqrt{\tau_1 \tau_2} / \tol^2)$ samples from both $\distFlat$ and $\distOneFlat \times \distTwoFlat$. Note that we may draw one sample from $\distOneFlat \times \distTwoFlat$ by drawing two independent samples from $\distFlat$ and returning the first coordinate of one point and the second coordinate of the other. Therefore, the overall sample complexity of the closeness tester is $\OO({\dimPOne \dimPTwo}\sqrt{\tau_1 \tau_2} / \tol^2)$.

Ultimately, substituting $\tau_1$ and $\tau_2$ with their realizations (in Line~\ref{line:setTau}) and including each component, the sample complexity of Algorithm~\ref{alg:augIndTest2D} is:
\begin{align}
    \OO &\left( s_1 + s_2 + \sqrt{\dimPOne} + \sqrt{\dimPTwo} + \sqrt{\dimPOne \dimPTwo} + \frac{\dimPOne \dimPTwo}{\tol^2} \sqrt{\left(\frac{2\predError}{s_1}+\frac{4}{\dimPOne}\right)\left(\frac{2\predError}{s_2}+\frac{4}{\dimPTwo} \right)} \right) \\
    &= \OO \left( s_1 + \dimPTwo \predError +  \sqrt{\dimPOne \dimPTwo} + \frac{\dimPOne \dimPTwo}{\tol^2} \sqrt{\left(\frac{\predError}{s_1}+\frac{1}{\dimPOne}\right)\left(\frac{\predError}{\dimPTwo \predError}+\frac{1}{\dimPTwo} \right)} \right) \\
    &= \OO \left( s_1 + \dimPTwo \predError +  \sqrt{\dimPOne \dimPTwo} + \frac{\dimPOne \sqrt{\dimPTwo}}{\tol^2} \sqrt{\left(\frac{\predError}{s_1}+\frac{1}{\dimPOne}\right)} \right),
\end{align}
where the second line follows by substituting $s_2 = \dimPTwo \predError$. Recall that $s_1 = \min\left( \frac{\dimPOne^{2/3} \dimPTwo^{1/3} \predError^{1/3}}{\tol^{4/3}}, \dimPOne \predError \right)$. Substituting each of these possibilities, the overall sample complexity of Algorithm~\ref{alg:augIndTest2D} is:
\begin{equation}
    \label{eq:2Dsample_complexity}
    \OO \left( \max\left( \frac{\dimPOne^{2/3} \dimPTwo^{1/3} \predError^{1/3}}{\tol^{4/3}}, \frac{\sqrt{\dimPOne \dimPTwo}}{\tol^2} \right)\right).
\end{equation}


\end{proof}

%% file: appendix/app09-HighDimensionalTester.tex
\section{Upper Bounds for Multivariate Augmented Independence Testing}
\label{sec: d_Dim_UB}

We first note that the two-dimensional augmented independence tester can be extended to the three-dimensional setting as a direct generalization of the two-dimensional case; the description and sample complexity analysis of this 3D tester are deferred to Appendix~\ref{ApxSec: 3D_Tester}.
The key idea underlying our high-dimensional construction is to partition the coordinates into at most three groups in a nearly balanced manner, such that every group containing more than one coordinate has a total domain size of at most $\sqrt{\dimTotal}$, where $\dimTotal$ denotes the size of the full domain. Since the number of groups is at most three, we can use either a two- or three-dimensional augmented independence tester to verify independence among the groups. Moreover, the bound on each group’s domain size ensures that we can efficiently learn an empirical distribution that is $\tol$-close to the true distribution by using only $\OO(\sqrt{\dimTotal}/\tol^2)$ samples by Fact~\ref{fact: EmpiricalDist}. Independence within each group is then tested using Algorithm~\ref{alg:test-ind-by-learning}, whose correctness and sample complexity are analyzed in Section~\ref{subsec: TestIndByLearning}. 
We now describe the partitioning strategy that completes the construction of Algorithm~\ref{alg:aug-ind-d} as an augmented independence tester for $d$-dimensional distributions. Specifically, we isolate the coordinate with the largest domain size (assumed to be $n_1$) and partition the remaining coordinates into one or two additional groups, depending on whether \(n_1\) is larger or smaller than \(\sqrt{\dimTotal}\):

If \(n_1 \ge \sqrt{\dimTotal}\), the remaining coordinates are grouped into a single block, \(B = \{2,3,\dots,d\}\), whose induced domain size satisfies \(\dimTotal_B \le \sqrt{\dimTotal}\). In this case, the distribution \(\distHighDim\) is viewed as a two-dimensional distribution over \([n_1] \times [\dimTotal_B]\), and a two-dimensional augmented independence tester is applied to test independence between the first coordinate and the block \(B\). Conditioned on acceptance, the algorithm then invokes Algorithm~\ref{alg:test-ind-by-learning} to verify independence among the coordinates in \(B\).

Otherwise, if \(n_1 < \sqrt{\dimTotal}\), the remaining coordinates are partitioned into two disjoint blocks \(B\) and \(C\). Let $t$ be the smallest index such that $\prod_{i=1}^t n_i > \sqrt{\dimTotal}$ while $\prod_{i=1}^{t{-}1} n_i \le \sqrt{\dimTotal}$. We define $B=\{2,3,\ldots,t\}$ and $C=\{t{+}1,\ldots,d\}$. The first inequality ensures that $\dimTotal_C \le \sqrt{\dimTotal}$, and multiplying the second inequality by $n_t/n_1$ (given $n_t\le n_1$) ensures that $\dimTotal_B \le \sqrt{\dimTotal}$. The distribution is then viewed as a three-dimensional distribution over \([n_1] \times [\dimTotal_B] \times [\dimTotal_C]\), and a three-dimensional augmented independence tester is applied. If this test outputs \textsc{accept}, Algorithm~\ref{alg:test-ind-by-learning} is subsequently used to test independence within blocks \(B\) and \(C\) separately, and the algorithm outputs \textsc{accept} only if both tests succeed.

\begin{algorithm}
    \caption{Augmented Independence Tester for $d$-Dimensional Distributions}
    \label{alg:aug-ind-d}

    \LinesNumbered
    \SetAlgoLined

    \SetKwProg{AugTestD}{Augmented-Independence-Tester-$d$-Dimensional}{:}{end}
    \SetKwFunction{AugTestTwoD}{Augmented-Independence-Tester-2D}
    \SetKwFunction{AugTestThreeD}{Augmented-Independence-Tester-3D}
    \SetKwFunction{TestLearn}{TestIndependenceByLearning}
    \SetKwFunction{Proj}{\ensuremath{\pi}}

    \KwIn{sample access to $\distHighDim$ over $\prod_{i=1}^d [n_i]$ (assume $n_1 \ge n_2 \ge \cdots \ge n_d \ge 2$); prediction $\predHighDim$; estimate of prediction error $\predError \in [0,1]$; proximity parameter $\tol \in (0,1)$; failure probability $\probFail = 0.1$}

    \AugTestD{$(\distHighDim,\predHighDim,\predError,\tol,\probFail)$}{

        $\tolPrime \leftarrow \tol/12$  
        
        $\probFailPrime \leftarrow \probFail/5$  
        
        $\dimTotal \leftarrow \prod_{i=1}^d n_i$  

        \uIf{$n_1 \ge \sqrt{\dimTotal}$}{
            $B \leftarrow \{2,3,\dots,d\}$  
            $\dimTotal_B \leftarrow \prod_{i\in B} n_i$  
            
            view $\distHighDim$ as a two-dimensional distribution over $[n_1] \times \prod_{i\in B}[n_i]$  
            
            $out \leftarrow$ \AugTestTwoD$\big(n_1, \dimTotal_B, \distHighDim, \predHighDim, \predError, \tolPrime, \probFailPrime\big)$  
            \tcp{Test if $\distHighDim = \distHighDim_1 \times \distHighDim_B$}

            \If{$out \in \{\textsc{Inaccurate Information},\textsc{Reject}\}$}{
                \KwRet $out$  
            }

            $\distHighDim_B \leftarrow \Proj_B(\distHighDim)$  
            \tcp{Project samples of $\distHighDim$ onto coordinates in $B$}

            \KwRet \TestLearn$\big(B,\distHighDim_B,\tolPrime,\probFailPrime\big)$  
            \tcp{Test $\distHighDim_B = \prod_{i\in B}\distHighDim_i$}
        }
        \Else{
            choose the smallest $t$ such that $\prod_{i=1}^{t} n_i \ge \sqrt{\dimTotal}$  

            $B \leftarrow \{2,\dots,t\}$,\quad
            $C \leftarrow \{t{+}1,\dots,d\}$  

            $\dimTotal_B \leftarrow \prod_{i\in B} n_i$,\quad
            $\dimTotal_C \leftarrow \prod_{i\in C} n_i$  

            view $\distHighDim$ as a three-dimensional distribution over
            $[n_1]\times \prod_{i\in B}[n_i]\times \prod_{i\in C}[n_i]$  
            
            $out \leftarrow$ \AugTestThreeD$\big(n_1,\dimTotal_B,\dimTotal_C,\distHighDim,\predHighDim,\predError,\tolPrime,\probFailPrime\big)$  
            \tcp{Test if $\distHighDim = \distHighDim_1 \times \distHighDim_B \times \distHighDim_C$}

            \If{$out \in \{\textsc{Inaccurate Information},\textsc{Reject}\}$}{
                \KwRet $out$  
            }

            $\distHighDim_B \leftarrow \Proj_B(\distHighDim)$ 
            \tcp{Project samples of $\distHighDim$ onto coordinates in $B$}
            
            $\distHighDim_C \leftarrow \Proj_C(\distHighDim)$  
            \tcp{Project samples of $\distHighDim$ onto coordinates in $C$}

            $out_B \leftarrow$ \TestLearn$\big(B,\distHighDim_B,\tolPrime,\probFailPrime\big)$  
            \tcp{Test $\distHighDim_B = \prod_{i\in B}\distHighDim_i$}

            $out_C \leftarrow$ \TestLearn$\big(C,\distHighDim_C,\tolPrime,\probFailPrime\big)$  
            \tcp{Test $\distHighDim_C = \prod_{i\in C}\distHighDim_i$}

            \uIf{$out_B=\textsc{Reject}$ \textbf{or} $out_C=\textsc{Reject}$}{
                \KwRet \textsc{Reject}  
            }
            \Else{
                \KwRet \textsc{Accept}  
            }
        }
    }
\end{algorithm}

\subsection{Testing Independence By Learning}
\label{subsec: TestIndByLearning}
In this section, we describe a learning-based procedure for testing independence within a set of coordinates $S$ by constructing an empirical histogram of the induced distribution over $S$, computing its univariate marginals, and verifying whether the learned distribution factors as the product of its marginals. The pseudo-code of our algorithm is provided in Algorithm~\ref{alg:test-ind-by-learning}, and its correctness and sample complexity are analyzed in Theorem~\ref{thrm: LearningError}.

\begin{algorithm}
    \caption{\textsc{TestIndependenceByLearning}}
    \label{alg:test-ind-by-learning}

    \LinesNumbered
    \SetAlgoLined

    \SetKwProg{TestLearn}{TestIndependenceByLearning}{:}{end}
    \SetKwFunction{TVFunc}{\ensuremath{\|\cdot\|_{\mathsf{TV}}}}

    \KwIn{sample access to $\distHighDim_S$ over $\prod_{i\in S}[n_i]$; index set $S \subseteq [d]$ with $\dimTotal_S \coloneqq \prod_{i\in S} n_i$; proximity parameter $\tol \in (0,1)$; failure probability $\probFail$}

    \TestLearn{$(S,\distHighDim_S,\tol,\probFail)$}{

        $\probFailPrime \leftarrow \probFail/(|S|{+}1)$  
        
        $\eta \leftarrow \tol/7$  
        \tcp{Budget: learning accuracy $\eta$ and product-check slack}

        $t \leftarrow \Theta\!\left(\frac{\dimTotal_S+\log(1/\probFailPrime)}{\eta^2}\right)$  
        \tcp{Histogram learning on a domain of size $\dimTotal_S$}

        Draw $t$ i.i.d.\ samples $X_S^{(1)},\dots,X_S^{(t)} \sim \distHighDim_S$  

        Construct the empirical distribution $\distEmpHighDim_S$ over $\prod_{i\in S}[n_i]$ from $\{X_S^{(j)}\}_{j=1}^t$  

        Compute the univariate marginals $\{\distEmpHighDim_i\}_{i\in S}$ and the product distribution
        $\distEmpHighDim_{\Pi} \leftarrow \prod_{i\in S}\distEmpHighDim_i$  

        $\Delta \leftarrow \|\distEmpHighDim_S - \distEmpHighDim_{\Pi}\|_\mathsf{TV}$  
        \tcp{Exact computation since $\distEmpHighDim_S$ is explicit}

        \uIf{$\Delta \le 6\eta$}{
            \KwRet \textsc{Accept}  
        }
        \Else{
            \KwRet \textsc{Reject}  
        }
    }
\end{algorithm}


\begin{theorem}
    \label{thrm: LearningError}
    Let $S$ be a set of coordinates, and let $\distHighDim_S$ be a distribution over $\prod_{i\in S} [n_i]$ with total domain size $\dimTotal_S \coloneqq \prod_{i\in S} n_i$. For any proximity parameter $\tol \in(0,1)$ and constant failure probability $\probFail$, there exists an algorithm which, given $t=\Theta\left(\frac{\dimTotal_S}{\tol^2}\right)$ i.i.d. samples from $\distHighDim_S$, distinguishes with probability at least $1{-}\probFail$ between the case that $\distHighDim_S$ is a product distribution and the case that $\distHighDim_S$ is $\tol$-far (in total variation distance) from any product distribution.
\end{theorem}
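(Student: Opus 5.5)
We analyze Algorithm~\ref{alg:test-ind-by-learning} directly. The whole argument rests on one event: the empirical distribution $\distEmpHighDim_S$ is $\eta$-close to $\distHighDim_S$ in total variation, where $\eta=\tol/7$. By Fact~\ref{fact: EmpiricalDist}, i.e.\ the standard histogram-learning bound, $t=\Theta((\dimTotal_S+\log(1/\probFailPrime))/\eta^2)=\Theta(\dimTotal_S/\tol^2)$ samples suffice for $\TV{\distEmpHighDim_S}{\distHighDim_S}\le\eta$ with probability at least $1-\probFail$. We use this with constant $\probFail$. We condition on this event throughout. Everything after it is deterministic, since $\Delta$ is computed exactly from $\distEmpHighDim_S$.

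\textbf{Marginals.} Each empirical marginal $\distEmpHighDim_i$ is the projection of $\distEmpHighDim_S$ onto coordinate $i$, and projection does not increase total variation distance. Hence $\TV{\distEmpHighDim_i}{\distHighDim_i}\le\eta$ for every $i\in S$, with no extra samples or union bound.

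\textbf{Completeness.} Suppose $\distHighDim_S=\prod_{i\in S}\distHighDim_i$. By the triangle inequality,
\[
\Delta\le \TV{\distEmpHighDim_S}{\distHighDim_S}+\TV{\distHighDim_S}{\distEmpHighDim_\Pi}\le \eta+\TV{\textstyle\prod_i\distHighDim_i}{\textstyle\prod_i\distEmpHighDim_i}.
\]
The main obstacle is bounding the last term. The naive subadditivity bound for products is $\sum_{i\in S}\TV{\distHighDim_i}{\distEmpHighDim_i}\le|S|\eta$. That bound grows with $|S|$, which may be as large as $\log\dimTotal_S$, so it cannot stay below $5\eta$.

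To get a dimension-free bound, I would avoid reaching the marginals through the joint distribution. Instead I would exploit that, under a product $\distHighDim_S$, the empirical marginals are built from independent coordinate samples. The $i$-th marginal is learned from $t$ samples on a domain of size $n_i$, so its error is about $\sqrt{n_i/t}$. I would then bound $\TV{\prod_i\distHighDim_i}{\prod_i\distEmpHighDim_i}$ through Hellinger distance:
\[
H^2\Big(\prod_i\distHighDim_i,\prod_i\distEmpHighDim_i\Big)\le\sum_i H^2(\distHighDim_i,\distEmpHighDim_i).
\]
In expectation, $\E{H^2(\distHighDim_i,\distEmpHighDim_i)}\lesssim n_i/t$. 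Summing gives
\[
\sum_{i\in S} n_i/t\le \dimTotal_S/t\lesssim\eta^2,
\]
where $\sum_i n_i\le\prod_i n_i$ because every $n_i\ge 2$. Markov's inequality then makes this sum at most a small constant times $\eta^2$ with constant probability. Converting Hellinger back to TV gives $\TV{\prod_i\distHighDim_i}{\prod_i\distEmpHighDim_i}\le 5\eta$ once the constant in $t$ is chosen large enough. Therefore $\Delta\le 6\eta$ and the algorithm accepts.

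\textbf{Soundness.} Suppose $\distHighDim_S$ is $\tol$-far from every product distribution. Since $\distEmpHighDim_\Pi$ is a product distribution,
\[
\Delta\ge \TV{\distHighDim_S}{\distEmpHighDim_\Pi}-\TV{\distHighDim_S}{\distEmpHighDim_S}\ge \tol-\eta=6\eta.
\]
To get the strict inequality $\Delta>6\eta$ needed for rejection, either tighten the learning accuracy slightly (e.g.\ learn to $\eta/2$) or make the acceptance threshold strict. Either adjustment only changes constants, and the algorithm then rejects. A union bound over the learning event and the Markov step leaves total failure probability at most $\probFail$.
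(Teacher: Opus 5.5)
Your proof is correct. The overall structure and the soundness half match the paper: learn the joint histogram to accuracy $\eta=\epsilon/7$, then compare it with the product of its own marginals. The difference is in how you bound $\|\prod_{i\in S}p_i-\prod_{i\in S}\hat p_i\|_{\mathrm{TV}}$ in the completeness case.

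The paper stays in total variation throughout:
\begin{itemize}
\item it applies the per-coordinate high-probability bound $\|\hat p_i-p_i\|_{\mathrm{TV}}\le\sqrt{(n_i+\log((|S|+1)/\delta))/t}$ from Corollary~\ref{cor:empirical_bound};
\item it union-bounds over $|S|+1$ events and adds these errors linearly via the hybrid argument;
\item it then verifies numerically that $\sum_i\sqrt{n_i/N_S}+|S|\sqrt{\log((|S|+1)/\delta)/N_S}\le 5$, using $n_j\ge 2$, a ratio test in $|S|$, and the restriction $\delta\ge 10^{-4}$.
\end{itemize}

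Your Hellinger route makes the per-coordinate errors add in quadrature rather than linearly. This gives $\|\prod_i p_i-\prod_i\hat p_i\|_{\mathrm{TV}}=O\bigl(\sqrt{\sum_i n_i/t}\bigr)$. The elementary inequality $\sum_i n_i\le\prod_i n_i$ then replaces the paper's case analysis in $|S|$ entirely, and no union bound over coordinates is needed. The price is that Markov's inequality gives a $1/\delta$ rather than $\log(1/\delta)$ dependence in the constant of $t$. That is harmless here because $\delta$ is a constant.

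Two small remarks. First, the independence of the empirical marginals under the product hypothesis is never actually used. Linearity of expectation already gives $\mathbb{E}\sum_i H^2(p_i,\hat p_i)\le\sum_i n_i/t$ for any $p_S$. Second, your explicit handling of the boundary case $\Delta=6\eta$ in soundness is more careful than the paper, which asserts the strict inequality $\Delta>\epsilon-\eta$ without justification.
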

\begin{proof}
    We prove the theorem by showing that Algorithm~\ref{alg:test-ind-by-learning} correctly distinguishes product distributions from distributions that are $\tol$-far from independence using the stated number of samples. To this end, we first show that the empirical distribution and its marginals are close to the true distribution and its corresponding marginals in Lemma~\ref{lem: EmpiricalDist_B}. Then this fact is used to prove the correctness of Algorithm~\ref{alg:test-ind-by-learning}.
\begin{restatable}{lem}{EmpDist}
\label{lem: EmpiricalDist_B}
Let $S\subset[d]$ be any set of coordinates, and let $\distHighDim_S$ denote the induced distribution over $\prod_{i\in S} [n_i]$, whose domain has size $\dimTotal_S \coloneqq \prod_{i\in S} n_i$. For any proximity parameter $\tolDPrime\in(0,1)$ and constant failure parameter $\probFail \ge 10^{-4}$, using $t=\frac{\dimTotal_S+\log((|S|{+}1)/\probFail)}{\tolDPrime^2}$ i.i.d. samples, we can construct the empirical distribution $\distEmpHighDim_S$ such that, with probability at least $1{-}\probFail$,  the following two statements hold \emph{simultaneously}:
\begin{enumerate}[label=(\roman*)]
    \item \(\|\distEmpHighDim_S-\distHighDim_S\|_\mathsf{TV} \leq \tolDPrime.\) \label{eq: Joint_TV_distance}
    \item \(\sum_{i\in S}\|\distEmpHighDim_i-\distHighDim_i\|_\mathsf{TV} \leq 5\tolDPrime,\) \label{eq: Sum_Marginal_TV_distances}
\end{enumerate}
Here $\distEmpHighDim_i$ and $\distHighDim_i$ denote the univariate marginals of $\distEmpHighDim_S$ and $\distHighDim_S$ on coordinate $i$, respectively.
\end{restatable}
\begin{proof}
Set $\probFailPrime \coloneqq \probFail/(|S|{+}1)$.
We will prove that the two inequalities \ref{eq: Joint_TV_distance}--\ref{eq: Sum_Marginal_TV_distances} hold \emph{simultaneously} with probability at least $1{-}\probFail$ by establishing $|S|{+}1$ high-probability events and applying a union bound.\\
Let $\mathcal{E}_0=\{\|\distEmpHighDim_S-\distHighDim_S\|_\mathsf{TV} \leq \tolDPrime\}$ denote the event in~\ref{eq: Joint_TV_distance}. Apply Fact~\ref{fact: EmpiricalDist} to the induced distribution $\distHighDim_S$ over a domain of size $\dimTotal_S$ with failure probability $\probFailPrime$. With the stated choice of $t$, this yields
\[
\Pr{\mathcal{E}_0}=\Pr{\|\distEmpHighDim_S-\distHighDim_S\|_{\mathsf{TV}} \le \tolDPrime} \ge 1-\probFailPrime.
\]
A trivial bound on the total variation distance across all marginals is $\sum_{i\in S}\|\distEmpHighDim_i-\distHighDim_i\|_\mathsf{TV} \leq |S|{\cdot}\tolDPrime$, which is generally too loose for our purposes. To establish the tighter upper bound stated in \ref{eq: Sum_Marginal_TV_distances}, we leverage the fact that for each $i\in S$, the univariate marginal $\distEmpHighDim_i$ of the empirical histogram $\distEmpHighDim_S$ coincides with the empirical distribution obtained by projecting the same $t$ samples onto coordinate $i$. Therefore, we may apply Corollary~\ref{cor:empirical_bound} to the univariate marginal $\distHighDim_i$ (whose domain size is $n_i$) with failure probability $\probFailPrime$, obtaining for every $i\in S$,
\[
\Pr{\|\distEmpHighDim_i-\distHighDim_i\|_{\mathsf{TV}} \le \sqrt{\frac{n_i+\log\!\left(\frac{|S|+1}{\probFail}\right)}{t}}} \ge 1-\probFailPrime.
\]
Let $\mathcal{E}_i$ denote the above event.
By a union bound over the $|S|{+}1$ events $\mathcal{E}_0$ and $\{\mathcal{E}_i\}_{i\in S}$,
\[
\Pr{\mathcal{E}_0 \cap \bigcap_{i\in S}\mathcal{E}_i} \ge 1{-}(|S|{+}1)\probFailPrime = 1{-}\probFail.
\]
Condition on this intersection event. Summing the marginal bounds over $i\in S$ gives
\[
\sum_{i\in S}\|\distEmpHighDim_i-\distHighDim_i\|_{\mathsf{TV}}
\le
\sum_{i\in S} \sqrt{\frac{n_i+\log\!\left(\frac{|S|+1}{\probFail}\right)}{t}}.
\]
Since $t\ge \dimTotal_S/\tolDPrime^2$, we have $1/\sqrt{t} \le \tolDPrime/\sqrt{\dimTotal_S}$. Thus,

\begin{align*}
    \sum_{i\in S} \sqrt{\frac{n_i+\log\!\left(\frac{|S|+1}{\probFail}\right)}{t}}
    &\le \tolDPrime \sum_{i\in S} \sqrt{\frac{n_i+\log\!\left(\frac{|S|+1}{\probFail}\right)}{\dimTotal_S}}\\
    &\le \tolDPrime \sum_{i\in S}\left( \sqrt{\frac{n_i}{\dimTotal_S}} + \sqrt{\frac{\log\!\left(\frac{|S|+1}{\probFail}\right)}{\dimTotal_S}} \right) \tag{($\sqrt{a+b} \le \sqrt{a}+\sqrt{b}$)}\\
    &= \tolDPrime \sum_{i\in S} \left( \sqrt{\frac{1}{\prod_{j\in S,\, j\neq i} n_j}} + \sqrt{\frac{\log\!\left(\frac{|S|+1}{\probFail}\right)}{\prod_{j\in S} n_j}} \right)
    && \tag{definition of $\dimTotal_S$}\\
    &\le \tolDPrime \left(\frac{|S|}{\sqrt{2}^{\,|S|-1}} + \frac{|S| \sqrt{\log\!\left(\frac{|S|{+}1}{\probFail}\right)}}{\sqrt{2}^{\,|S|}}\right)
    && \tag{$\forall j,\; n_j \ge 2$}\\
    &\le 5\tolDPrime.
\end{align*}
where the bracketed expression is increasing as $\probFail$ decreases, so it suffices to bound the expression at $\probFail=10^{-4}$. To control the dependence on $|S|$, consider numerical sequences $a_d \coloneqq \frac{d}{\sqrt{2}^{d-1}}$ and $b_d \coloneqq \frac{d \sqrt{\log((d+1)/\probFail)}}{\sqrt{2}^d}$. A direct ratio test shows that for all $d\ge3$ we have $a_{d+1}/a_d<1$ and $b_{d+1}/b_d<1$, and hence the bracketed expression is decreasing in $|S|$ for $|S|\ge3$. Therefore, its maximum over all admissible $|S|$ is attained at $|S|=3$. Substituting $|S|=3$ and $\probFail=10^{-4}$ yields the numerical bound 5.\\
To summarize, on the intersection event $\mathcal{E}_0 \cap \bigcap_{i\in S}\mathcal{E}_i$ (which holds with probability at least $1{-}\probFail$), both \ref{eq: Joint_TV_distance} and \ref{eq: Sum_Marginal_TV_distances} hold simultaneously, concluding the proof.
\end{proof}
\textbf{Correctness proof for Algorithm~\ref{alg:test-ind-by-learning}:}
First, suppose that $\distHighDim_S$ is a product distribution, i.e. $\distHighDim_S = \prod_{i\in S} \distHighDim_i$. We will show that its empirical histogram $\distEmpHighDim_S$ is $6\tolDPrime$-close to the product of its marginals, $\distEmpHighDim_{\Pi} \coloneqq \prod_{i\in S} \distEmpHighDim_i$, with probability at least $1{-}\probFail$ using Lemma~\ref{lem: EmpiricalDist_B}.
By the triangle inequality and a standard hybrid (telescoping) argument for product measures, we have:
    \begin{align}
        \left\|\distEmpHighDim_S-\distEmpHighDim_{\Pi} \right\|_\mathsf{TV} &\leq \left\|\distEmpHighDim_S-\distHighDim_S\right\|_\mathsf{TV} + \left\|\distHighDim_S-\distEmpHighDim_{\Pi} \right\|_\mathsf{TV} \notag\\
        &= \left\|\distEmpHighDim_S-\distHighDim_S\right\|_\mathsf{TV} + \left\|\prod_{i\in S}\distHighDim_i-\prod_{i\in S}\distEmpHighDim_i\right\|_\mathsf{TV} \notag\\
        &\leq \left\|\distEmpHighDim_S-\distHighDim_S\right\|_\mathsf{TV} + \sum_{i\in S} \left\|\distHighDim_i-\distEmpHighDim_i\right\|_\mathsf{TV} \notag\\
        &\leq \tolDPrime + 5\tolDPrime = 6\tolDPrime 
        \label{eq: 6Eta_closeness}
    \end{align}
Next, suppose that $\distHighDim_S$ is $\tol$-far from any product distribution, specifically $\distEmpHighDim_\Pi$. We can prove that $\distEmpHighDim_S$ is $(\tol{-}\tolDPrime)$-far from the product of its marginals with probability at least $1{-}\probFail$:
    \begin{align}
        \left\|\distEmpHighDim_S-\distEmpHighDim_{\Pi} \right\|_\mathsf{TV} &= \left\|\left(\distHighDim_S-\distEmpHighDim_{\Pi}\right)-\left(\distHighDim_S-\distEmpHighDim_S\right)\right\|_\mathsf{TV} \notag\\
        &\geq \left\|\distHighDim_S-\distEmpHighDim_\Pi\right\|_\mathsf{TV} - \left\|\distHighDim_S-\distEmpHighDim_S\right\|_\mathsf{TV} \notag \\
        &> \tol {-} \tolDPrime 
        \label{eq: Eps-Eta_farness}
    \end{align}
Combining \eqref{eq: 6Eta_closeness} and \eqref{eq: Eps-Eta_farness}, if we choose $\tolDPrime=\tol/7$, then the completeness region and the soundness region do not overlap. Hence, with probability at least $1{-}\probFail$, Algorithm~\ref{alg:test-ind-by-learning} correctly tests independence over the coordinates in $S$.

\textbf{Analysis of sample complexity:} We first note that once the empirical distribution $\distEmpHighDim_S$ is estimated, all subsequent steps of the algorithm are purely computational and require no additional samples from $\distHighDim_S$. The number of samples required to learn $\distEmpHighDim_S$ is:\\
\begin{align*}
    \OO\left(\frac{\dimTotal_S+\log((|S|+1)/\probFail)}{\tol^2}\right) = \OO\left(\frac{\dimTotal_S+\log(|S|+1)+\log(1/\probFail)}{\tol^2}\right) = \OO\left(\frac{\dimTotal_S}{\tol^2}\right),
\end{align*}
where the final equality follows since $\dimTotal_S = \prod_{i\in S} n_i \ge 2^{|S|}$ and hence $\log(|S|+1) = O(\log \dimTotal_S) = o(\dimTotal_S)$, and $\probFail$ is a constant.
\end{proof}

\subsection{Analysis of the Multivariate Augmented Independence Tester}
\label{subsec:d_Dim_correctness}

\begin{theorem}
    \label{thrm: HighDim}
    Let $\distHighDim$ be a distribution over $\prod_{i=1}^d[n_i]$, and $\predHighDim$ a prediction of $\distHighDim$ over the same domain. Assume w.l.o.g. that $n_1 \ge n_2 \ge \cdots \ge n_d \ge 2$. Let $\dimTotal \coloneqq \prod_{i=1}^d n_i$ denote the total domain size. For all $\tol\in(0,1)$ and $\predError\in[0,1]$, there exists an algorithm with sample access to $\distHighDim$ and explicit access to $\predHighDim$ that is an $(\predError,\tol,\probFail=0.1)$-augmented independence tester with sample complexity $O\left(\max\left\{ \frac{\sqrt{\dimTotal}}{\tol^2}, \frac{n_1^{1/3} \dimTotal^{1/3}\predError^{1/3}}{\tol^{4/3}} \right\}\right)$.
\end{theorem}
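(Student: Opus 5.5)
We will prove Theorem~\ref{thrm: HighDim} by showing that Algorithm~\ref{alg:aug-ind-d} is the desired tester. Throughout we use $\tolPrime=\tol/12$ and $\probFailPrime=\probFail/5$. The structure has three steps: a structural decomposition lemma for independence, a verification of the three requirements of Definition~\ref{def:augIndTester} by a union bound over the subroutines, and a bookkeeping of the sample complexity in the two cases of the partition.

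\textbf{Step 1: decomposition lemma.} Let the coordinates be split into groups $G_1,\dots,G_k$ with $k\le 3$. If $\distHighDim$ is $\tol$-far from every product distribution, then either $\distHighDim$ is $\Omega(\tol)$-far from $\prod_j \distHighDim_{G_j}$, or some $\distHighDim_{G_j}$ is $\Omega(\tol)$-far from $\prod_{i\in G_j}\distHighDim_i$. To prove this, suppose neither holds. The hybrid argument, already used in the proof of Theorem~\ref{thrm: LearningError}, gives
\begin{equation*}
\Big\|\prod_j \distHighDim_{G_j}-\prod_j\prod_{i\in G_j}\distHighDim_i\Big\|_{\mathsf{TV}}\le\sum_j\Big\|\distHighDim_{G_j}-\prod_{i\in G_j}\distHighDim_i\Big\|_{\mathsf{TV}} .
\end{equation*}
The triangle inequality then bounds the distance from $\distHighDim$ to $\prod_i\distHighDim_i$ by at most $(k+1)\cdot$(the threshold). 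Allocating $\tolPrime=\tol/12$ to each test leaves enough slack for the factor $k+1\le 4$ and for the constant-factor loss between ``far from all products'' and ``far from the product of marginals.''

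\textbf{Step 2: correctness.}
\begin{itemize}
\item \emph{Completeness.} If $\distHighDim$ is a product distribution, then it is also a product across the groups, and each $\distHighDim_{G_j}$ is a product of its coordinate marginals. Hence the 2D or 3D augmented tester rejects with probability at most $\probFailPrime$, by Theorem~\ref{thm:2D-UB} or its 3D analogue in Appendix~\ref{ApxSec: 3D_Tester}. Each call to Algorithm~\ref{alg:test-ind-by-learning} rejects with probability at most $\probFailPrime$, by Theorem~\ref{thrm: LearningError}.
\item \emph{Soundness.} Step 1 shows that at least one subroutine faces an $\tolPrime$-far instance, so that subroutine outputs \textsc{accept} with small probability.
\item \emph{The \textsc{inaccurate information} requirement.} Only the 2D/3D tester can output this. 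The regrouping is a bijection of the domain, so $\TV{\distHighDim}{\predHighDim}\le\predError$ is preserved, and that tester's guarantee applies directly.
\end{itemize}
A union bound over at most three subroutine failures gives total failure probability at most $3\probFailPrime<\probFail/2$.

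\textbf{Step 3: sample complexity.}

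\emph{Case $n_1\ge\sqrt{\dimTotal}$.} The 2D tester on $[n_1]\times[\dimTotal_B]$ with $n_1\ge \dimTotal_B$ costs
\begin{equation*}
O\!\left(\frac{\sqrt{\dimTotal}}{\tol^2}+\frac{n_1^{2/3}\dimTotal_B^{1/3}\predError^{1/3}}{\tol^{4/3}}\right),
\end{equation*}
and $n_1^{2/3}\dimTotal_B^{1/3}=n_1^{1/3}\dimTotal^{1/3}$. If $B$ is a single coordinate, the within-group check is unnecessary. Otherwise learning costs $O(\dimTotal_B/\tol^2)=O(\sqrt{\dimTotal}/\tol^2)$.

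\emph{Case $n_1<\sqrt{\dimTotal}$.} Here $\dimTotal_B,\dimTotal_C\le\sqrt{\dimTotal}$, so learning again costs $O(\sqrt{\dimTotal}/\tol^2)$. The main obstacle is confirming that the 3D tester's complexity on $[n_1]\times[\dimTotal_B]\times[\dimTotal_C]$ is $O\!\left(\sqrt{\dimTotal}/\tol^2+(\max_\ell N_\ell)^{1/3}\dimTotal^{1/3}\predError^{1/3}/\tol^{4/3}\right)$, where $N_\ell$ are the three group sizes. This has to be read off from the appendix analysis. It requires that the constant-factor domain blowup from flattening contributes only $2^{O(1)}$ for three coordinates. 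Since $\dimTotal_B$ may exceed $n_1$, the bound yields a term $\dimTotal_B^{1/3}\dimTotal^{1/3}$ rather than $n_1^{1/3}\dimTotal^{1/3}$, and this is the delicate point. I would first try to show the largest-group term is dominated. If $\dimTotal_B\le\sqrt{\dimTotal}$, then
\begin{equation*}
\frac{\dimTotal_B^{1/3}\dimTotal^{1/3}\predError^{1/3}}{\tol^{4/3}}\le\frac{\dimTotal^{1/2}}{\tol^{4/3}}.
\end{equation*}
This bound is at most $\sqrt{\dimTotal}/\tol^2$ because $\tol<1$. In this regime the $\sqrt{\dimTotal}/\tol^2$ term absorbs everything and the stated bound follows. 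Finally, rescaling $\tol$ by the constant $12$ changes the bound only by constants.
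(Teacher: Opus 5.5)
Your algorithm, partition, handling of \textsc{inaccurate information}, and sample-complexity bookkeeping all match the paper. This includes resolving the $\dimTotal_B^{1/3}\dimTotal^{1/3}$ term via $\dimTotal_B\le\sqrt{\dimTotal}$. The soundness step, however, has a genuine gap.

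Because you compare $p$ with $\prod_i p_i$, your decomposition only tells you that some block $p_{G_j}$ is $\Omega(\tol)$-far from the product of its \emph{own} marginals $\prod_{i\in G_j}p_i$. Theorem~\ref{thrm: LearningError} only promises rejection when $p_{G_j}$ is $\tolPrime$-far from \emph{every} product distribution. Converting between these two notions is not a constant-factor loss for the multi-coordinate blocks $B,C$. The hybrid argument only gives $\|p_{G_j}-\prod_{i\in G_j}p_i\|_{\mathsf{TV}}\le(|G_j|+1)\min_{q}\|p_{G_j}-q\|_{\mathsf{TV}}$, with the minimum over product $q$. Some growth in $|G_j|$ is unavoidable: mix the uniform distribution on $\{0,1\}^m$ with a point mass of weight $\delta$ on the all-ones vector. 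The result is $\delta$-close to a product, but for large $m$ it is $\Omega(\min\{1,\delta\sqrt m\})$-far from the product of its marginals. Since $|B|$ can be of order $\log\dimTotal$ (e.g.\ all $n_i=2$), your claim that $\tolPrime=\tol/12$ ``leaves enough slack'' for this loss is false. So ``some subroutine faces an $\tolPrime$-far instance'' does not follow for the learning-based tests.

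The paper avoids this by choosing a different comparison product. Take $r_B,r_C$ to be product distributions nearest to $p_B,p_C$, and set $r=p_1\times r_B\times r_C$. The same hybrid bound then gives $\tol\le\|p-p_1\times p_B\times p_C\|_{\mathsf{TV}}+\|p_B-r_B\|_{\mathsf{TV}}+\|p_C-r_C\|_{\mathsf{TV}}$, and there are two cases:
\begin{itemize}
\item If the first term is at least $\tol/3$, the three-factor analogue of Lemma~\ref{lem:FarFromMargImpliesFarProd} (loss factor $4$) makes $p$ $\tol/12$-far from all products over $[n_1]\times[\dimTotal_B]\times[\dimTotal_C]$.
\item Otherwise $p_B$ or $p_C$ is $\tol/3$-far from every product, which is exactly the hypothesis of Theorem~\ref{thrm: LearningError}.
\end{itemize}
The singleton block $\{1\}$ contributes nothing, which is why three terms suffice to land on $\tol/12$. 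Your count of $k+1=4$ terms times the group-level loss $4$ would only give $\tol/16$.

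Alternatively, you can keep your decomposition and re-prove soundness of Algorithm~\ref{alg:test-ind-by-learning} against the own-marginal product. Let $\tilde p_S$ be the empirical joint and $\tilde p_\Pi$ the product of empirical marginals. If $p_S$ is $\epsilon_0$-far from $\prod_{i\in S}p_i$, Lemma~\ref{lem: EmpiricalDist_B} gives $\|\tilde p_S-\tilde p_\Pi\|_{\mathsf{TV}}\ge\epsilon_0-6\eta$. This exceeds the threshold $6\eta$ once $\eta<\epsilon_0/12$.

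Finally, your union bound is off: $3\probFailPrime=0.06>\probFail/2$. Use instead that the augmented tester rejects a product with probability at most $\probFailPrime/2$, so the total is $\probFailPrime/2+2\probFailPrime=\probFail/2$.
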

\begin{proof} 
    We first prove that Algorithm~\ref{alg:aug-ind-d} succeeds in augmented independence testing with high probability, and then upper bound its sample complexity.
    
    \noindent\textbf{Proof of correctness:}
    We verify that Algorithm~\ref{alg:aug-ind-d} satisfies the three conditions in Definition~\ref{def:augIndTester} to prove the correctness:\\
    (i) If $\distHighDim$ is a product distribution, the algorithm can incorrectly \textsc{reject} at two steps. First, the algorithm invokes either the 2D or 3D augmented independence tester. By the correctness of these testers, the probability of rejection in this step is at most $\probFailPrime/2 = 0.01$. Second, the algorithm may reject during the learning-based independence tests on the sets $B$ or $C$. Since $\distHighDim$ is a product distribution, its marginals $\distHighDim_B$ and $\distHighDim_C$ are also product distributions. By Theorem~\ref{thrm: LearningError}, each call to Algorithm~\ref{alg:test-ind-by-learning} rejects with probability at most $\probFailPrime = 0.02$. By a union bound over all possible rejection events, the total probability that Algorithm~\ref{alg:aug-ind-d} outputs \textsc{Reject} is at most $(0.01+0.02+0.02) \le \frac{\probFail}{2}$\\
    (ii) if $\distHighDim$ is $\tol$-far from any product distribution, we demonstrate that one of the sequential tests detects this farness and outputs \textsc{accept} only with low probability. Given that the algorithm's global acceptance follows a conjunctive rule, this individual low probability serves as an upper bound on the total probability that the algorithm outputs \textsc{accept}. We present the argument for the 3D case; the 2D case follows analogously. Consider two arbitrary product distributions over sets $B$ and $C$, denoted as $r_B = \prod_{i\in B} r_i$ and $r_C = \prod_{i\in C} r_i$. Form a product distribution over the same domain of $\distHighDim$ as $r=\distHighDim_1\times r_2 \times \ldots \times r_d$. Then, we have:\\
    \begin{align}
        \tol \le \|\distHighDim-r\| &\le \|\distHighDim-\distHighDim_1 \times \distHighDim_B \times \distHighDim_C\| + \|\distHighDim_1 \times \distHighDim_B \times \distHighDim_C - \distHighDim_1 \times r_B \times r_C\| \notag \\
        &\le \|\distHighDim-\distHighDim_1 \times \distHighDim_B \times \distHighDim_C\| + \| \distHighDim_B - r_
        B\| + \| \distHighDim_C - r_C\|
        \label{eq: soundness_proof}
    \end{align}
    Therefore, at least one of the three terms on the right-hand side of \eqref{eq: soundness_proof} must be greater than $\tol/3$. We analyze each case: 
    \begin{itemize}
        \item If $\|\distHighDim-\distHighDim_1 \times \distHighDim_B \times \distHighDim_C\| \ge \tol/3$, then by contrapositive of Lemma~\ref{lem:FarFromMargImpliesFarProd}, $\distHighDim$ is $\frac{\tol}{12}$-far from any product distribution over the domain $[n_1]\times[\dimTotal_B]\times[\dimTotal_C]$. Since the 3D augmented tester is called with parameters $\tolPrime = \frac{\tol}{12}$ and $\probFailPrime = \frac{\probFail}{5}$, it will \textsc{accept} with probability at most $\frac{\probFailPrime}{2}=0.01$
        \item If $\| \distHighDim_B - r_B\| \ge \tol/3 > \tolPrime=\tol/12$, then $\distHighDim_B$ is $\tolPrime$-far from any product distribution. By Theorem~\ref{thrm: LearningError}, Algorithm~\ref{alg:test-ind-by-learning} applied to set $B$ outputs \textsc{accept} with probability at most $\probFailPrime = 0.02$.
        \item If $\| \distHighDim_C - r_C\| \ge \tol/3$, then the same argument applies to set $C$, and Algorithm~\ref{alg:test-ind-by-learning} outputs \textsc{accept} with probability at most $\probFailPrime = 0.02$.
    \end{itemize}
    (iii) Algorithm~\ref{alg:aug-ind-d} can output \textsc{inaccurate information} only during the initial invocation of the 2D or 3D augmented independence tester. Since total variation distance contracts under marginalization, any global TV guarantee immediately implies the same guarantee for the marginals used by the 2D/3D call. By the correctness guarantees of these testers, this occurs with probability at most $\probFailPrime/2 = 0.01$. This completes the proof of correctness.\\
    
    \noindent \textbf{Analysis of sample complexity:} In the first case we know that $n_1 \geq \sqrt{\dimTotal} \geq \dimTotal_B$. Therefore, Theorem~\ref{conj:AugTest2D-UB} can be used to calculate the sample complexity for performing the two-dimensional augmented testing with proximity parameter $\tolPrime = \tol/12$ as follows:
    \begin{align}
        \label{eq: SC_2D_Tester}
        \OO\left(\max\left\{ \frac{\sqrt{n_1 \dimTotal_B}}{(\tolPrime)^2},\frac{n_1^{2/3} \dimTotal_B^{1/3}\predError^{1/3}}{(\tolPrime)^{4/3}}\right\}\right) 
        = \OO\left(\max\left\{ \frac{\sqrt{\dimTotal}}{\tol^2},\frac{\dimPOne_1^{1/3} \dimTotal^{1/3}\predError^{1/3}}{\tol^{4/3}}\right\}\right)
    \end{align}
    In the second case we know that $n_1,\dimTotal_B, \dimTotal_C \le \sqrt{\dimTotal}$ by construction. Let $n_2 = N_B, n_3=N_C$ denote the domain sizes in sample complexity of Algorithm~\ref{alg:augIndTest3D}. The sample complexity for performing the three-dimensional augmented testing with proximity parameter $\tolPrime = \tol/12$ is as follows:
   \begin{align}
        \label{eq: SC_3D_Tester}
        \OO\left(\max_l\left\{ \frac{\sqrt{\dimTotal}}{(\tolPrime)^2},\frac{n_l^{1/3} \dimTotal^{1/3}\predError^{1/3}}{(\tolPrime)^{4/3}}\right\}\right)
        \leq  \OO\left(\max\left\{ \frac{\sqrt{\dimTotal}}{\tol^2},\frac{\dimTotal^{1/6} \dimTotal^{1/3}\predError^{1/3}}{\tol^{4/3}}\right\}\right) = \OO \left( \frac{\sqrt{\dimTotal}}{\tol^2} \right)
    \end{align}
    For both cases, by Theorem~\ref{thrm: LearningError}, the number of samples required to learn the empirical distribution of $\distHighDim_B$ or $\distHighDim_C$ with accuracy $\tolDPrime = \tolPrime/7 = \tol/84$ is also bounded by
    \begin{align}
        \label{eq: SC_Learner}
        \OO\left(\frac{\dimTotal_{B}}{\tolDPrime^2}\right) + \OO\left(\frac{\dimTotal_{C}}{\tolDPrime^2}\right)\leq \OO\left(\frac{\sqrt{\dimTotal}}{\tol^2}\right),
    \end{align}
    where the last inequality follows by $\dimTotal_B, \dimTotal_C \le \sqrt{\dimTotal}$.
     Consequently, the sample complexity of Theorem~\ref{thrm: HighDim} is achieved by combining \eqref{eq: SC_2D_Tester}, \eqref{eq: SC_3D_Tester}, and \eqref{eq: SC_Learner}.
\end{proof}  

%% file: appendix/app10-2D-LB.tex
In this section, we establish a lower bound for augmented independence testing of two-dimensional distributions that matches the upper bound given in Section~\ref{sec:2D}.
\begin{theorem}
    \label{thm:2D-LB}
    Let $\distLB$ be a distribution over $[\dimPOne]\times[\dimPTwo]$ with $\dimPOne \geq \dimPTwo \geq 2$ and $\log \dimPOne \leq \dimPTwo$. Suppose $\predError \in [0, 1]$ and $\tol \in (0, 1)$ are given. Any $(\predError,\tol,\probFailure \coloneqq 0.1)$-augmented independence tester must use $\Omega\left(\max\left\{\frac{\sqrt{\dimPOne\dimPTwo}}{\tol^2},  \frac{\dimPOne^{2/3}\dimPTwo^{1/3}\predError^{1/3}}{\tol^{4/3}}\right\}\right)$ samples to succeed.
    
\end{theorem}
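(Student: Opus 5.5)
We prove the two terms of the maximum separately. Since an $(\predError,\tol,0.1)$-augmented tester must handle every instance, a lower bound for any sub-family of instances is a lower bound for the tester, and the maximum follows.

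\textbf{The $\sqrt{\dimPOne\dimPTwo}/\tol^2$ term.} We reduce from standard independence testing, using the hard instance of \cite{ADK15, DiakonikolasK16}. That instance distinguishes the uniform distribution $U$ over $[\dimPOne]\times[\dimPTwo]$ from a random perturbation that is $\tol$-far from every product distribution, and it needs $\Omega(\sqrt{\dimPOne\dimPTwo}/\tol^2)$ samples. Every distribution in that family is within $O(\tol)$ of $U$ in total variation. We give the tester the prediction $\pred=U$. We would like the tester to be forced to answer, which requires $\TV{\dist}{U}\le\predError$ for every member of the family.

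When $\predError\gtrsim\tol$, we rescale the perturbation so that the family stays $\tol$-far from product but is $\predError$-close to $U$. The tester then may not output \textsc{inaccurate information} on either family, with probability above $0.05$ each. An augmented tester therefore distinguishes the two cases, and the standard lower bound applies.

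When $\predError$ is small, the prediction carries essentially no information. The reason is that in the yes-case ($\dist=U$) we have $\TV{\dist}{\pred}=0$, so the tester must output \textsc{accept} with probability at least $0.9$. In the no-case it may output \textsc{reject} or \textsc{inaccurate information}, but not \textsc{accept}. So declaring ``product'' exactly when the output is \textsc{accept} yields a standard independence tester, and the $\Omega(\sqrt{\dimPOne\dimPTwo}/\tol^2)$ bound applies for every $\predError$.

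\textbf{The $\dimPOne^{2/3}\dimPTwo^{1/3}\predError^{1/3}/\tol^{4/3}$ term.} We adapt the construction of \cite{DiakonikolasK16} along the lines sketched in the technical overview, and assume $\predError\gtrsim\tol$ (otherwise this term is dominated by the first one). Build two random families over $[\dimPOne]\times[\dimPTwo]$ as follows.
\begin{itemize}
\item A random set of roughly $\predError\dimPOne$ heavy rows is chosen. Each heavy row is uniform over $[\dimPTwo]$ and carries total mass about $1/2$ spread evenly among the heavy rows. These rows are identical in both families.
\item The remaining light rows carry the other half of the mass, with each row's total mass fixed. In family $\mathcal{D}_0$ each light row is uniform over the columns. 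In family $\mathcal{D}_1$ each light row is a random $\pm\tol$ perturbation that preserves the column marginal.
\end{itemize}
The masses must be tuned so that every $\mathcal{D}_0$ member is exactly product, namely (row marginal)$\times$uniform. Every $\mathcal{D}_1$ member should be $\Omega(\tol)$-far from product, and every member of both families should satisfy $\TV{\dist}{U}\le\predError$. The last condition is the reason heavy rows can make up only an $O(\predError)$ fraction, and it pins down their individual mass. With $\pred=U$, an augmented tester must then distinguish $\mathcal{D}_0$ from $\mathcal{D}_1$.

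Next we Poissonize with $\Poi(k)$ samples, so that rows, and entries within a row, are independent. We then bound the mutual information $I(X;\text{samples})$ row by row, where $X$ is the family indicator, following \cite{DiakonikolasK16}. A row contributes information only through collisions inside it. Because the heavy/light label is hidden and the heavy rows are the same in both families, a light row seen few times looks like a heavy row seen few times. The contribution of a row with expected count $\lambda$ should be about $\min(\lambda^2\tol^4/\dimPTwo,\ \lambda^3\tol^4/\dimPTwo^2,\dots)$, using the hiding trick of the Valiant-style ``heavy rows look like light rows'' argument.

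The total mutual information must be $o(1)$ whenever $k\ll \dimPOne^{2/3}\dimPTwo^{1/3}\predError^{1/3}/\tol^{4/3}$. Balancing light and heavy masses against $k$ gives the extra factor $\predError^{1/3}$, since the number of heavy rows scales with $\predError\dimPOne$. The hypothesis $\log\dimPOne\le\dimPTwo$ is used in concentration arguments: row masses must concentrate, and the family-$\mathcal{D}_1$ distances must be $\Omega(\tol)$ with high probability.

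\textbf{Main obstacle.} We expect the main difficulty to be the mutual-information calculation, specifically the term-by-term bound on the per-row information. This includes showing that the heavy rows, together with the randomness of which rows are heavy, cancel the low-order terms (single samples and pairs). Our first approach would be to write the per-row sample-count distribution under each family as a Poisson mixture and bound the information by a $\chi^2$-type sum. This mirrors Lemma-level computations in \cite{DiakonikolasK16}, with the heavy-row mass and probability reparametrized by $\predError$.
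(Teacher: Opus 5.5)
Your argument for the $\sqrt{nm}/\epsilon^2$ term is the paper's. With prediction $U$, the uniform yes-instance has $d_{\mathrm{TV}}(p,U)=0$, so the tester must \textsc{accept} with probability at least $0.9$. Mapping \textsc{reject} and \textsc{inaccurate information} to ``far'' then yields a standard tester. This works for every $\alpha$, so your rescaling paragraph is unnecessary.

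The second term has a genuine gap in the hard instance itself. Take your construction: about $\alpha n$ heavy rows carrying total mass $1/2$. Then $d_{\mathrm{TV}}(p,U)\ge p(H)-|H|/n\approx 1/2-\alpha$, so $U$ is not $\alpha$-accurate, and since $H$ is random, no fixed prediction is. The tester may therefore answer \textsc{inaccurate information} on both families and is never forced to separate them.

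Suppose you instead enforce $d_{\mathrm{TV}}(p,U)\le\alpha$ with about $\alpha n$ heavy rows. The same inequality gives $p(H)\le 2\alpha$, so each heavy row has mass $O(1/n)$, the same order as a light row. Nothing is masked, and the information bound only recovers $\sqrt{nm}/\epsilon^2$.

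The missing idea is to tie the heavy-row mass to the sample budget $k$. The DK16 heavy rows should be thinned from about $k$ to about $\alpha k$ while each keeps mass $1/k$. In the paper, each row is independently heavy with probability $\Theta(\alpha k/n)$ and then has mass $1/k$. This gives $\Theta(\alpha k)$ heavy rows of total mass $\Theta(\alpha)$, which keeps the product family $\alpha$-close to $U$. Each heavy row then receives $\mathrm{Poi}(1)$ samples. In this regime a collision in a light row (probability about $(k/n)^2$) is hidden among collisions in heavy rows (probability about $\alpha k/n$).

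Concretely, for every count vector $v$ with $\|v\|_1\ge 2$, the light-row probability is $O(k/(n\alpha))$ times the heavy-row probability. So the per-row information is $O(k/(n\alpha))\cdot I(X;A\mid c_i=1/n)=O(\epsilon^4k^3/(n^3m\alpha))$. Summing over $n$ rows gives $O(\epsilon^4k^3/(n^2m\alpha))$, which is exactly where $\alpha^{1/3}$ comes from. Your heuristic per-row rate $\min(\lambda^2\epsilon^4/m,\lambda^3\epsilon^4/m^2,\dots)$ contains no $\alpha$ and cannot produce this factor. The correct rate is $\lambda^3\epsilon^4/(m\alpha)$ with $\lambda=k/n$.

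A second error, which is fixable: you require \emph{both} families to be $\alpha$-close to $U$. The $\pm\epsilon$ perturbation of the light rows already moves $\Theta(\epsilon)$ mass, so this forces $\alpha=\Omega(\epsilon)$. You then dismiss $\alpha\ll\epsilon$ by claiming the second term is dominated there, but that is false. The second term dominates exactly when $\alpha>\sqrt{m/n}/\epsilon^2$, which lies far below $\epsilon$ whenever $m<\epsilon^6 n$, a regime the hypotheses allow.

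The remedy is the observation you already used for the first term, and which the paper uses: only the product family ($X=0$) must be $\alpha$-close to the prediction. In the far family, \textsc{inaccurate information} counts as a correct ``far'' answer, so no closeness is needed there.
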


\begin{proof}

We distinguish between two cases, depending on the relationship between $\dimPOne, \dimPTwo, \predError$ and $ \tol$. Each cases establishes one term of the desired lower bound. To prove the first term, we give a reduction from standard independence testing. To prove the second term of the lower bound, we use an information-theoretic approach to show that distinguishing between two families of distributions---with one family consisting of product distributions and the other of distributions that are far from any product distribution---requires at least the stated sample complexity.

\subsection{Case 1: $\boldsymbol{\alpha \leq \frac{\sqrt{\dimPTwo}}{\sqrt{\dimPOne}\tol^2}}$.}
\label{sec:2D-LB-reduction}
    In this case, the dominant factor of the lower bound is $\Omega\left( \sqrt{\dimPOne \dimPTwo}/{\tol^2} \right)$. This bound follows from a reduction using the hard instance of standard independence testing. Specifically, let $\AA_{aug}$ be an augmented independence tester. We construct a standard independence tester, $\AA_{std}$, that draws samples from a distribution $\dist$ and passes these samples to $\AA_{aug}$. $\AA_{std}$ sets the prediction for $\AA_{aug}$ to be the uniform distribution, the accuracy estimate to $\predError \leq \sqrt{\dimPTwo}/ (\dimPTwo \tol^2)$, the proximity parameter to $\tol$, and the failure probability to $\probFailure$. If $\AA_{aug}$ outputs \textsc{reject} or \textsc{inaccurate information}, $\AA_{std}$ outputs \textsc{reject}. Otherwise, if $\AA_{aug}$ outputs \textsc{accept}, $\AA_{std}$ outputs \textsc {accept}.

    Existing lower bounds for standard independence testing (e.g. \cite{ADK15, DiakonikolasK16}) construct a hard instance consisting of the uniform distribution and a distribution which is $\tol$-far from any product distribution. We consider this same hard instance.
    
    In the \textsc{accept} case of the hard instance, when the distribution is uniform, the distance between the distribution and the prediction is $0$ and therefore satisfies $\TV{\dist}{\pred} \leq \predError$. Consequently, $\AA_{aug}$ outputs \textsc{inaccurate information} with probability at most $\probFailure / 2$. It also outputs \textsc{reject} with probability at most $\probFailure / 2$, implying that the probability that $\AA_{std}$ outputs \textsc{reject} is at most $\probFailure$. In the \textsc{reject} case of the hard instance, when the distribution is $\tol$-far from a product distribution, $\AA_{aug}$---and consequently, $\AA_{std}$---outputs \textsc{accept} with probability at most $\probFailure / 2$. Therefore, $\AA_{std}$ successfully tests the independence of this hard instance and the lower bound for standard independence testing of $\Omega\left( \sqrt{\dimPOne \dimPTwo}/{\tol^2} \right)$ extends to augmented independence testing.

\subsection{Case 2: $\boldsymbol{\alpha > \frac{\sqrt{\dimPTwo}}{\sqrt{\dimPOne}\tol^2}}$.}

\label{sec:2D-LB-mututalInfo}
    In this case, the dominant factor of the lower bound is 
    \begin{equation}
        \Omega\left( \frac{\dimPOne^{2/3} \dimPTwo^{1/3} \predError^{1/3}}{\tol^{4/3}}\right).
    \end{equation}
    We prove this bound by constructing two families of distributions that an augmented independence tester must be able to differentiate, then showing that these families are information-theoretically indistinguishable unless the number of samples is sufficiently large.

    Let $\numSamples$ be an integer such that there exists a sufficiently small constant $c > 0$ such that
    \begin{equation}
        \numSamples \leq c \cdot \left(\frac{\dimPOne^{2/3}\dimPTwo^{1/3}\predError^{1/3}}{\tol^{4/3}}\right).
        \label{eqn:chooseSmallK}
    \end{equation}
    Given $\numSamples$, we can construct the hard instance of augmented independence testing as follows. 
    Let $\predLB$ be the uniform distribution over $[\dimPOne] \times [\dimPTwo]$ and let $\tolMeas \in (0, 1)$ and $\predErrorMeas \in [0, 1]$, to be defined later. Then, construct $\distLB$ according to the following procedure:
    \begin{enumerate}
        \item Let $X$ be either 0 or 1 with equal probability.
        \item Draw $c_i$ as:
            \begin{equation}
                c_i = 
                \begin{cases}
                    \frac{1}{\numSamples} & \text{w.p.} ~ \frac{\predErrorMeas\numSamples}{\dimPOne} \\
                    \frac{1}{\dimPOne} & \text{w.p.} ~ 1 - \frac{\predErrorMeas \numSamples}{\dimPOne}
                \end{cases}.
                \label{eqn:drawC_i}
            \end{equation}
        \item If $X=0$, let $\prodLB(i, j) = \frac{1}{\dimPTwo}$ for all $i \in [\dimPOne], j \in [\dimPTwo]$.
        \item If $X=1$, for all $i \in [\dimPOne], j \in [\dimPTwo]$, draw $\farLB(i, j)$ as:
            \begin{equation}
                \farLB(i, j) = 
                \begin{cases}
                    \frac{1}{\dimPTwo} & \text{ if } c_i = \frac{1}{\numSamples} \\
                    \frac{1+ \tolMeas_{ij}}{\dimPTwo} & \text{ if } ~ c_i = \frac{1}{\dimPOne}, \text{ where } \tolMeas_{ij} = 
                    \begin{cases}
                        \tolMeas & \text{ w.p. } 1/2 \\
                        -\tolMeas & \text{ w.p. } 1/2  \\
                    \end{cases}
                \end{cases}.
                \label{eqn:drawMeasure}
            \end{equation}
        \item For all $i \in [\dimPOne], j \in [\dimPTwo]$, define $\prodMeasLB(i, j)$ as: 
            \begin{equation}
                \prodMeasLB(i, j) = c_i \cdot \prodLB(i, j)
                \label{eqn:defineQMeasure}
            \end{equation}
        \item Construct a distribution $\distLB$ as:
            \begin{equation}
                \distLB(i, j) = \frac{\prodMeasLB(i, j)}{s_i\cdot C},
                \label{eqn:normalizeMeasure}
            \end{equation}
        where
            \begin{equation}
                s_i = \sum_{j=1}^\dimPTwo \prodLB(i, j) \quad \text{ and } \quad C = \sum_{i=1}^\dimPOne c_i. 
                \label{eqn:measureNormalizationTerms}
            \end{equation}
    \end{enumerate}

    Assume that there exists an algorithm, $\AA$, for augmented independence testing which succeeds with probability at least $0.9$. Given a set of $\SS$ samples from a distribution $\dist$ over $[\dimPOne] \times [\dimPTwo]$, prediction $\pred$, and estimate of the prediction error $\predError$, $\AA$ returns \textsc{accept}, \textsc{reject}, or \textsc{inaccurate information}. 

    We will show that, upon receiving samples from $\distLB$, this tester must distinguish between $X=0$ and $X=1$ with high probability.
    However, instead of drawing a sample set $\SS$ from $\distLB$ directly, we draw $a_{i, j} \sim \Poi(\numSamples \cdot \prodMeasLB(i, j))$ and let $a_{i, j}$ be the number of copies of $(i, j)$ in $\SS$ for all $(i, j) \in [\dimPOne] \times [\dimPTwo]$. As long as $\abs{\SS} \leq \numSamples / 100$, this process is equivalent to drawing $\OO(\numSamples)$ samples from $\distLB$. 
    In Lemma~\ref{lem:SValid}, we show that, with probability at least 0.99: (i) $\abs{\SS} \geq \numSamples / 100$; (ii) if $X=0$, $\distLB$ is a product distribution and $\predError$-close to the prediction $\pred$; 
    (iii) if $X=1$, then $\distLB$ is far from a product distribution. 

    \begin{restatable}{lem}{SValid}
    \label{lem:SValid}
        Let $\dimPOne \geq \dimPTwo$ be sufficiently large integers such that $\dimPTwo \geq \log(\dimPOne)$. Let $\numSamples$ be an integer and let $\tol\in (0, 1/192), \predError \in (0, 1)$. Assume $\tolMeas = 192\tol$ and $\predErrorMeas = 2\predError/3$. Given $\tolMeas, \predErrorMeas$,  let $X$, $\prodMeasLB$ and $\distLB$ be drawn as in equations \eqref{eqn:drawC_i} to \eqref{eqn:measureNormalizationTerms}. Let $\predLB$ be the uniform distribution over $[\dimPOne] \times [\dimPTwo]$.
        Let $\SS$ be a multiset such that the number of copies of $(i, j) \in \SS$ is $\Poi(k \cdot \prodMeasLB(i, j))$ for all $(i, j) \in [\dimPOne] \times [\dimPTwo]$. Then, with probability at least $0.95$, the following events all hold:
        \begin{enumerate}
            \item $\abs{\SS} \geq \numSamples / 100$.
            \item If $X=0$, $\distLB$ is a product distribution and is at least $\predError$-close to $\predLB$.
            \item If $X=1$, $\distLB$ is at least $\tol$-far from all product distributions.
        \end{enumerate}
    \end{restatable}
    The proof of Lemma~\ref{lem:SValid} is deferred to Appendix~\ref{Apdx: SValid}. If $\distLB$ and $\SS$ meet all three criterion described in the lemma, we say that $(\distLB, \SS)$ is \textit{valid}. When $(\distLB, \SS)$ is \textit{valid}, we observe that $\AA$ can distinguish between $X=0$ and $X=1$ with probability at least $0.925$. Specifically, assume that there exists an algorithm $\AA'$ that receives $\SS$ from a valid construction $(\distLB, \SS)$, along with a prediction $\predLB$ and parameters $\tolMeas$ and $\predErrorMeas$ and outputs
    \begin{equation}
        \AA'(\SS, \pred, \predError, \tol) = \begin{cases}
            0 & \text{ if } ~ \AA(\SS, \pred, \predErrorMeas, \tolMeas) = \textsc{accept} \\
            1 & \text{ if } ~ \AA(\SS, \pred, \predErrorMeas, \tolMeas) = \textsc{reject} \text{ or } \textsc{inaccurate information}.
        \end{cases}
        \label{eqn:constructA'}
    \end{equation}
    If $X = 0$, then, because $(\distLB, \SS)$ is valid, $\distLB$ is a product distribution and $\predLB$ is $\predErrorMeas$-close to $\distLB$. As $\AA$ is an $(\tolMeas, 0.1)$-augmented independence tester, this implies that the probability that $\AA$ outputs  \textsc{reject} or \textsc{inaccurate information} is at most $0.05$. Applying a union bound, the probability that $\AA$ outputs neither \textsc{reject} nor \textsc{inaccurate information} is at least 0.9. Similarly, if $X=1$, $\distLB$ is $\tol'$-far from a product distribution and the probability that $\AA$ outputs \textsc{accept} is at most 0.05. By the law of total probability, we ultimately have:
    \begin{equation}
        \Pr{\AA'(\SS, \pred, \predErrorMeas, \tolMeas) = X \mid (\distLB, \SS) ~\text{valid}} \geq 0.925.
        \label{eqn:mutualInfoLB-successLowerBound}
    \end{equation}

    However, by an information-theoretic argument, we can prove that \textit{no} algorithm recovers $X$ on input $\SS$ with probability greater than $0.51$. Specifically, we will show that, given $\numSamples$ as chosen in \eqref{eqn:chooseSmallK}, the mutual information between the sample set and $X$ is $o(1)$. A standard result in information theory then implies that no algorithm can recover $X$ with better-than-random probability.

    Represent $\SS$ as a set of $\dimPOne$ random count vectors $A_i = (a_{i,1}, a_{i,2},\cdots, a_{i,\dimPTwo})$.
    Then, we will show that the mutual information between $X$ and this set is small.
    First, we apply the chain rule for mutual information and exploit the conditional independence of $A_i$'s given $X$:
    \begin{equation}
        \label{eqn:mutualInfoLB-chainRule}
        I(X;A_1,A_2,\cdots,A_\dimPOne) 
        = \sum_{i=1}^\dimPOne I(X;A_i\mid A_{1:i-1}) 
        \leq \sum_{i=1}^\dimPOne I(X;A_i)=n I(X;A), 
    \end{equation}
    As each $A_i$ follows the same distribution, let $A$ represent an arbitrary $A_i$ and rewrite \eqref{eqn:mutualInfoLB-chainRule} as:
    \begin{equation}
        \label{eqn:mutualInfoLB-oneA}
        I(X;A_1,\cdots,A_\dimPOne) \leq \dimPOne \cdot I(X;A).
    \end{equation}
    In Lemma~\ref{lem: MI}, we bound the mutual information between $X$ and $A$:
    \begin{restatable}{lem}{MI}
        \label{lem: MI}
        Let $\predError \in [0, 1], \tol \in (0, 1), \numSamples \in \mathbb{N}$.
        Let $X$ be either $0$ or $1$ with equal probability. Let $\prodMeasLB$ be a measure over $[\dimPOne] \times [\dimPTwo]$ constructed as in \eqref{eqn:defineQMeasure} using $X$. For an arbitrary $i \in [\dimPOne]$, let $A \coloneqq A_i$ be a vector of $\dimPTwo$ counts, such that $a_{i, j} \sim \Poi(\numSamples \prodMeasLB(i, j))$ for all $j \in [\dimPTwo]$.
        Then, the mutual information between $X$ and $A$ is bounded as
        \begin{equation}
            I(X;A) \leq \OO\left( \frac{\tol^4 \numSamples^3}{\dimPOne^3 \dimPTwo \predError} \right).
        \end{equation}
    \end{restatable}
    \noindent We defer the proof of Lemma~\ref{lem: MI} to Appendix~\ref{Apdx: MI_lemma}.
    Combining Lemma~\ref{lem: MI} with \eqref{eqn:mutualInfoLB-oneA}, we have the following bound on $I(X;A_1,\cdots,A_\dimPOne) $:
    \begin{equation}
        I(X;A_1,\cdots,A_n) \leq \dimPOne \cdot \OO\left(\frac{\tol^4\numSamples^3}{\dimPOne^3\dimPTwo\predError}\right)
        \leq \OO\left(\frac{\tol^4\numSamples^3}{\dimPOne^2\dimPTwo\predError}\right).
        \label{eqn:mutualInfoLB-bigOh}
    \end{equation}
    When $\numSamples$ is $o \left(\frac{\dimPOne^{2/3}\dimPTwo^{1/3}\predError^{1/3}}{\tol^{4/3}}\right)$, as established in \eqref{eqn:chooseSmallK},  this bound implies that $I(X;A_1,\cdots,A_n)  = o(1)$. 
    We can thus apply Fact~\ref{fact:mutualInfoMustBeLarge} and conclude that no algorithm can infer the value of $X$ with a probability exceeding 0.51. Specifically, the probability that $\AA'$ can recover $X$ on any sample set $\SS$, valid or not, is less than 0.51. Combining this bound with the fact that $(\distLB, \SS)$ is valid with probability at least 0.99 by Lemma~\ref{lem:SValid}, we have the following bound on the probability that $\AA$ recovers $X$ on a valid input:
    \begin{align}
        \Pr{\AA'(\SS, \pred, \predErrorMeas, \tolMeas) = X \mid (\distLB, \SS) ~\text{valid}} 
        &= \frac{\Pr{\AA'(\SS, \pred, \predErrorMeas, \tolMeas) = X, ~ (\distLB, \SS) ~\text{valid}}}{\Pr{(\distLB, \SS) ~\text{valid}}} \\
        &\leq \frac{\Pr{\AA'(\SS, \pred, \predErrorMeas, \tolMeas) = X}}{\Pr{(\distLB, \SS) ~\text{valid}}} \\
        &< \frac{0.51}{0.95}
    \end{align}
    This contradicts with \eqref{eqn:mutualInfoLB-successLowerBound}, which established, under the assumption that an augmented independence tester succeeds on $\SS$, that $\AA'$ can recover $X$ on a valid input with probability at least $0.925$. Therefore, there is no augmented independence tester that receives $\numSamples = o \left(\frac{\dimPOne^{2/3}\dimPTwo^{1/3}\predError^{1/3}}{\tol^{4/3}}\right)$ samples and succeeds on $\distLB$.
    \end{proof}

%% file: appendix/app11-highD-LB.tex
    In this section, we give lower bounds for augmented independence testing in higher dimensions by proving that $\numDims$-dimensional independence testing is at least as hard as two-dimensional independence testing. These lower bounds are tight, matching the upper bounds given in Theorem~\ref{thrm: HighDim}.
    \begin{theorem}
    \label{thm:LB-d-dim-testing}
        Let $\distLB$ be a distribution over the product domain  $\prod_{i=1}^d[\dimPOne_i]$, and $\predLB$ be a prediction of $\distLB$ over the same domain. Assume w.l.o.g. that $n_1 \geq n_2 \geq \ldots \geq n_\numDims \geq 2$. Let $\dimTotal \coloneqq \prod_{i=1}^d n_i$ denote the total domain size. Fix parameters $\predError \in [0,1]$, and $\tol \in (0,1/3)$. Any $(\predError,\tol,\probFailure)$-augmented independence tester must use $\Omega\left(\max_j\left\{ \frac{\sqrt{\dimTotal}}{\tol^2},\frac{\dimPOne_j^{1/3} \dimTotal^{1/3} \predError^{1/3}}{\tol^{4/3}}\right\}\right)$ samples to succeed. 
    \end{theorem}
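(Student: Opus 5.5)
The plan is to reduce from the two-dimensional lower bound, Theorem~\ref{thm:2D-LB}, by reshaping a bivariate hard instance into a $\numDims$-dimensional one. The two terms of the bound are handled separately. Since $\max_j n_j = n_1$, the second term is maximized at $j=1$, so it suffices to prove $\Omega(\sqrt{\dimTotal}/\tol^2)$ and $\Omega(n_1^{1/3}\dimTotal^{1/3}\predError^{1/3}/\tol^{4/3})$.

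\textbf{Reshaping step.} Fix a split of the coordinates into $\{1\}$ and $B=\{2,\dots,\numDims\}$, and set $N \coloneqq n_1$ and $M \coloneqq \dimTotal_B = \dimTotal/n_1$. If $n_1 < M$, swap the roles so the larger side is listed first; the two-dimensional bounds are symmetric in this respect after relabeling. Fix a bijection $\phi : [M] \to \prod_{i\in B}[n_i]$. A sample $(i,j)$ from a distribution $\distLB$ on $[N]\times[M]$ is mapped to $(i,\phi(j))$, giving a distribution $\distLB'$ on $\prod_i [n_i]$. The uniform prediction maps to the uniform prediction, so $\TV{\distLB'}{\predLB'} = \TV{\distLB}{\predLB}$. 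If $\distLB$ is a product, then $\distLB'$ is uniform (see below), hence a full $\numDims$-wise product. For soundness, suppose $\distLB$ is $\tol$-far from every product on $[N]\times[M]$. Every $\numDims$-wise product distribution is in particular a product across the split $\{1\}$ versus $B$. Therefore $\distLB'$ is $\tol$-far from all $\numDims$-wise products.

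\textbf{Using the structure of the hard instances.} The subtle point is completeness. A general product on $[N]\times[M]$ becomes $p_1 \times q$ with $q$ an arbitrary distribution on $\prod_{i\in B}[n_i]$, which need not factor across $B$. For this reason the reduction does not work for arbitrary inputs. In both cases of Theorem~\ref{thm:2D-LB}, however, the accept-side distribution has a uniform second coordinate. In Case~1 it is the uniform distribution. In Case~2 it is $\prodLB(i,j)=1/\dimPTwo$ times row weights $c_i$, so each row's conditional distribution is uniform and the distribution is $\mathrm{(row\ marginal)}\times \mathrm{Unif}([M])$. Uniform on $\prod_{i\in B}[n_i]$ is a product of uniforms, so $\distLB'$ is a genuine $\numDims$-wise product. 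Consequently a $\numDims$-dimensional augmented tester run on reshaped samples (output passed through unchanged) solves exactly the hard distinguishing tasks in the proof of Theorem~\ref{thm:2D-LB}. I would rerun that proof, the reduction in Case~1 and the mutual-information argument with Lemma~\ref{lem:SValid} in Case~2, with the tester replaced by this composite. This yields $\Omega(\max\{\sqrt{NM}/\tol^2,\ N^{2/3}M^{1/3}\predError^{1/3}/\tol^{4/3}\})$.

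\textbf{Substitution and side conditions.} With $N=n_1$ and $M=\dimTotal/n_1$ we get $\sqrt{NM}=\sqrt{\dimTotal}$ and $N^{2/3}M^{1/3}=n_1^{1/3}\dimTotal^{1/3}$, which is the claimed bound. If $n_1<\dimTotal/n_1$, the second term of the swapped instance is $M^{2/3}N^{1/3}$, which is even larger. This only strengthens the bound, so using the unswapped instance suffices, or one can note that the upper bound caps it. I expect the main obstacle to be the hypotheses of Theorem~\ref{thm:2D-LB}, namely $N\ge M$ and $\log N \le M$. When $n_1 \ge \sqrt{\dimTotal}$ we have $N\ge M$, and $\log N\le M$ fails only in degenerate regimes. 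There I would either fall back on Case~1 alone or regroup coordinates to get a more balanced split. When $n_1<\sqrt{\dimTotal}$, the target term $n_1^{1/3}\dimTotal^{1/3}\predError^{1/3}/\tol^{4/3}\le \dimTotal^{1/2}/\tol^{4/3}$ is dominated by $\sqrt{\dimTotal}/\tol^2$. In that regime only the Case~1 reduction is needed, applied to a split with $N\ge M$ and both sides at least $2$, for example by grouping coordinates so that the larger group comes first.
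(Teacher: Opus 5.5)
Your reduction is essentially the paper's. You reshape the hard instance on $[n_1]\times[\dimTotal/n_1]$ into $\prod_i[n_i]$. You use the fact that both accept instances have a uniform column side, so they reshape to genuine $\numDims$-wise products. Soundness comes from every $\numDims$-wise product being a product across $\{1\}$ versus $B=\{2,\dots,\numDims\}$. Your handling of $n_1<\sqrt{\dimTotal}$ through the Case~1 instance alone is correct, and it makes explicit what the paper's case split does implicitly: there $\sqrt{\dimTotal/n_1}/(\sqrt{n_1}\,\tol^2)>1\ge\predError$, so Case~2 is never invoked.

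One assertion should be deleted because it is false: the swap is not symmetric. If $B$ is put on the row side, the Case~2 accept instance reshapes to $\mu_B\times\mathrm{Unif}([n_1])$, where $\mu_B$ is the random heavy-row distribution on $\prod_{i\in B}[n_i]$. This is not a $\numDims$-wise product, so the $\numDims$-dimensional tester is under no obligation to accept it; your own completeness argument needs $B$ to be the uniform column side. Consistently, the bound $(\dimTotal/n_1)^{2/3}n_1^{1/3}\predError^{1/3}/\tol^{4/3}$ you attribute to the swapped instance contradicts Theorem~\ref{thrm: HighDim}. For $\numDims=3$, $n_1=n_2=n_3=n$, $\predError=1$ and constant $\tol$, it is $n^{5/3}$, against an upper bound of $O(n^{3/2})$. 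So ``the upper bound caps it'' signals a contradiction, not a remedy.

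The same defect rules out your ``regroup for a more balanced split'' fallback when $\log n_1>\dimTotal/n_1$: with this hard instance the row side must be a single coordinate. In that regime your argument only certifies the $\sqrt{\dimTotal}/\tol^2$ term. The paper's proof has the same limitation, since it silently inherits the hypothesis $\log n\le m$ of Theorem~\ref{thm:2D-LB}. The regime is also not degenerate (e.g.\ $\numDims=2$, $n_2=2$, $n_1$ large).
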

    
    \begin{proof}
        We reduce the hard instance of two-dimensional independence testing (as described in Section~\ref{sec:2D-LB}) to $\numDims$-dimensional independence testing. Assume that there is an algorithm, $\AA_{\numDims}$, for $\numDims$-dimensional augmented independence testing of distributions over $[\dimPOne_1] \times \ldots [\dimPOne_\numDims]$. 
        Consider the following algorithm for two-dimensional augmented independence testing which has sample access to a distribution $\tilde{\distLB}$ over $[\dimPOne_1] \times \left[\prod_{i=2}^\numDims \dimPOne_i\right]$ and takes as input a prediction $\hat{\tilde{\distLB}}$, a proximity parameter $\tol$ and an estimate of the prediction error $\predError$:
        \begin{enumerate}
            \item Draw $\numSamples$ samples from $\tilde{\distLB}$.
            \item Reshape each sample to a sample over $\left[\dimPOne_1\right] \times \left[\dimPOne_2\right] \times \ldots \times \left[\dimPOne_\numDims \right]$ by leaving the first coordinate unchanged and mapping the second coordinate to a tuple in $\left[\dimPOne_2\right] \times \ldots \times \left[\dimPOne_\numDims \right]$. This mapping unflattens the second dimension into dimensions $2$ to $\numDims$, varying the last dimension the fastest.
            \item Reshape $\hat{\tilde{\distLB}}$ to $\hat{{\distLB}}$ via the same mapping.
            \item Return the result of $\AA_\numDims$ on the reshaped samples, the reshaped prediction, $\tol$ and $\predError$.
        \end{enumerate}
        
        We complete our reduction by proving that this algorithm succeeds on the hard instance of two-dimensional augmented independence testing. Let $\tilde{\distLB}$ be the hard instance of two-dimensional independence testing over $[\dimPOne_1] \times \left[\prod_{i=2}^\numDims \dimPOne_i\right]$  with proximity parameter $\tol$. If $\alpha \leq \frac{\sqrt{\prod_{i=2}^\numDims \dimPOne_i}}{\sqrt{\dimPOne_1} \tol^2}$, consider $\tilde{\distLB}$ as the hard instance of the first case of the two-dimensional lower bound, described in Section~\ref{sec:2D-LB-reduction}. Else, consider $\tilde{\distLB}$ as the hard instance of the second case of the two-dimensional lower bound, described in Section~\ref{sec:2D-LB-mututalInfo}. Let $\distLB$ be the distribution of the reshaped samples. We prove that if the $\numDims$-dimensional tester succeeds on $\distLB$, then the two-dimensional tester succeeds on $\tilde{\distLB}$. 
        
        First, observe that the reshaping described is a mapping between domain elements of the distributions and does not affect the probabilities of these elements.
        Because total variation distance is preserved under this reshaping, the error of the reshaped prediction, or its distance from the reshaped distribution, matches the error of the original prediction. 

        Then, consider the \textsc{accept} instance of each case of the two-dimensional lower bound. 
        For the first case of this bound (Section~\ref{sec:2D-LB-reduction}), recall that the \textsc{accept} instance is the uniform distribution. Under the reshaping, $\distLB$ remains uniform and, consequently, independent.
        In the second case (Section~\ref{sec:2D-LB-mututalInfo}), the \textsc{accept} instance is the product of two independent distributions, where the first marginal is random and the second is uniform. Reshaping this uniform second marginal produces a uniform---and therefore, a product---distribution over $[\dimPOne_2] \times \ldots \times [\dimPOne_\numDims]$. Because the first marginal was independent of the original second marginal, it must be independent of the reshaped marginals. Therefore, the reshaped distribution is a product distribution.

        For the \textsc{reject} instance, in both cases of the lower bound, we know that $\tilde{\distLB}$ is $\tol$-far from any product distribution and want to show that $\distLB$ is also $\tol$-far from any product distribution. We will prove the contrapositive of this implication: if there exists a product distribution $q$ over $[\dimPOne_1]~\times~\ldots~\times~[\dimPOne_\numDims]$ such that $\TV{\distLB}{q} \leq \tol$, there must exist a product distribution $\tilde{q}$ over 
        $[\dimPOne_1] \times \left[\prod_{i=2}^\numDims \dimPOne_i\right]$ such that $\TV{\tilde{\distLB}}{\tilde{q}} \leq \tol$. Assume such a distribution $q$ exists. Following the mapping described above, reshape $q$ to a distribution $\tilde{q}$ over $[\dimPOne_1] \times \left[\prod_{i=2}^\numDims \dimPOne_i\right]$. Because $q$ is a product distribution, its coordinates are all mutually independent. In particular, its first coordinate is independent of the tuple of the remaining coordinates and $\tilde{q}$, whose marginals correspond to the distributions of the  first coordinate and the tuple of remaining coordinates, is a product distribution.
        Further, because the reshaping preserves total variation distance, $\TV{\tilde{p}}{\tilde{q}} = \TV{p}{q} \leq \epsilon$. Therefore, $\tilde{p}$ is $\tol$-close to a product distribution and the contrapositive is proved.
    
        Ultimately, we have: (i) the distance between the distribution and the prediction is unchanged after the reshaping; (ii) when $\tilde{\distLB}$ is a product distribution, $\distLB$ is also a product distribution; and (iii) when $\tilde{\distLB}$ is $\tol$-far from all product distributions over its support, $\distLB$ is far from all product distributions over its 
        support. As $\AA_\numDims$ is an augmented $\numDims$-dimensional independence tester, $\AA_2$ can distinguish between $\distLB$ being a product distribution or being $\tol$-far from a product distribution. 
        In other words, a correct decision by $\AA_\numDims$ on the reshaped distribution $\distLB$ leads to a correct decision by $\AA_2$ on the hard instance $\tilde{\distLB}$. Therefore, $\AA_2$ is an augmented two-dimensional independence tester that succeeds on $\tilde{\distLB}$. We can thus lower bound the sample complexity of augmented $\numDims$-dimensional independence testing of distributions  by the sample complexity of augmented independence testing of $\tilde{\distLB}$. Substituting the bounds established in Section~\ref{sec:2D-LB}, we have that any algorithm for augmented independence testing over $[\dimPOne_1] \times \ldots [\dimPOne_\numDims]$ with proximity $\tol$ and prediction error $\predError$ requires at least $\Omega\left(\max_j\left\{ \frac{\sqrt{\dimTotal}}{\tol^2},\frac{\dimPOne_j^{1/3} \dimTotal^{1/3}\predError^{1/3}}{\tol^{4/3}}\right\}\right)$ samples.

        
    \end{proof}

%% file: 99-acknowledgements.tex
R.S. acknowledges partial support from the Ken Kennedy Institute Research Cluster Fund, the Ken Kennedy Institute Computational Science and Engineering Recruiting Fellowship, funded by the Energy HPC Conference and the Rice University Department of Computer Science, and the Ken Kennedy Institute 2025/26 Andrew Ladd
Memorial Excellence in Computer Science Graduate Fellowship.

%% file: Appendix.tex
\appendix
\section*{Appendix}
\addcontentsline{toc}{section}{Appendix}



\ifcolt

\section{Three-Dimensional Augmented Independence Testing}
\label{ApxSec: 3D_Tester}
\input{appendix/app13-ThreeDimTester}

\section{Lower Bounds for Bivariate Augmented Independence Testing}
\label{sec:2D-LB}

\input{appendix/app10-2D-LB}
\section{Lower Bounds for Multivariate Augmented Independence Testing}
\label{sec: d_Dim_LB}
\input{appendix/app11-highD-LB}
\fi

\section{Background Results}
\label{Apdx: usefulFacts}
\input{appendix/app06-usefulFacts}

\section{Three-Dimensional Augmented Independence Testing}
\label{ApxSec: 3D_Tester}

\input{appendix/app13-ThreeDimTester}
\section{Omitted Proofs from Section~\ref{sec:2D-LB}}

\subsection{Proof of Lemma~\ref{lem:SValid}}
\label{Apdx: SValid}
\input{appendix/app03-validProof}

\subsection{Proof of Lemma~\ref{lem: MI}}
\label{Apdx: MI_lemma}
\input{appendix/app04-mutualInfoProof}

\subsection{Proof of Lemma~\ref{lem: MIFormula}}
\label{Apdx: MI_Formula}
\input{appendix/app05-formulaMutualInfoProof}

\subsection{Auxiliary Lemmas}

\input{appendix/app12-informationTheoryLemmas}

%% file: appendix/app13-ThreeDimTester.tex
In this section, we consider the problem of testing independence for a three-dimensional discrete distribution $\dist$ supported on $[\dimPOne_1]\times[\dimPOne_2]\times[\dimPOne_3]$, given a prediction distribution $\pred$ over the same domain. Let $\dist_1, \dist_2,$ and $\dist_3$ denote the respective marginals of $\dist$. Without loss of generality, assume that $\dimOne \geq \dimTwo \geq \dimThree$.

\begin{algorithm}[!ht]
    \caption{Augmented Independence Tester for Three-Dimensional Distributions}
    \label{alg:augIndTest3D}

    \LinesNumbered
    \SetAlgoLined

    \SetKwProg{AugTestThreeD}{Augmented-Independence-Tester-3D}{:}{end}
    \SetKwFunction{Poi}{Poi}
    \SetKwFunction{CloseTest}{Closeness-Tester}

    \KwIn{sample access to a distribution $\dist$ over $[\dimOne]\times[\dimTwo]\times[\dimThree]$ with marginals $\dist_1,\dist_2,\dist_3$; prediction $\pred$ over the same domain; estimate of prediction error $\predError$; proximity parameter $\tol$; confidence $\probFailure$}

    \AugTestThreeD{$(\dimOne,\dimTwo,\dimThree,\dist,\pred,\predError,\tol)$}{

        $s_1 \leftarrow \min\!\left(\dimOne\predError, \dimOne^{2/3}\dimTwo^{1/3}\dimThree^{1/3}\predError^{1/3}\tol^{-4/3}\right)$ ,  
        $s_2 \leftarrow \dimTwo\predError$ , 
        $s_3 \leftarrow \dimThree\predError$  

        $\hat{s}_1 \leftarrow \Poi(s_1)$, 
        $\hat{s}_2 \leftarrow \Poi(s_2)$, 
        $\hat{s}_3 \leftarrow \Poi(s_3)$  

        \If{ $\hat{s}_1 > 240s_1 $ or $\hat{s}_2 > 240s_2 $ or $\hat{s}_3 > 240 s_3$}{ 
            \KwRet \textsc{reject} 
        }
        \For{$\ell \in \{1,2,3\}$}{
        $S_\ell \leftarrow \hat{s}_\ell$ samples from $\dist_{\ell}$  

        \For{$i_\ell \in [n_\ell]$}{     

            $N_{i_\ell} \leftarrow$ frequency of $i_\ell$ in $S_\ell$  

            $b_{i_\ell}^{(\ell)} \leftarrow \lfloor n_\ell \cdot \pred_\ell(i_\ell)\rfloor + N_{i_\ell} + 1$   \tcp{See Equation~\ref{eqn:background-augFlatBuckets}}
        }
        $F_\ell \leftarrow$ flattening of $[n_\ell]$ given buckets $b_{i_\ell}^{(\ell)}$ for all $i_\ell$  \tcp{See Section~\ref{sec:augFlat2D}}

        $L_\ell \leftarrow$ \EstNorm $\left(\dist_{\ell}^{(F_\ell)} ,\sum_{i_\ell=1}^{n_\ell} b_{i_\ell}^{(\ell)}, 1/180 \right)$   \tcp{Algorithm of Fact~\ref{fact:estimateNorm}}
        
        $\tau_\ell \leftarrow \frac{2 \predError}{s_\ell} + \frac{4}{n_\ell}$ \label{line:setTau3D} 
        }

        \If{ $\exists\, \ell \text{ s.t. } L_\ell > {180\tau_\ell}$ }{
            \KwRet \textsc{inaccurate information}  
        }
        
        $F \leftarrow$ flattening of $[\dimOne] \times [\dimTwo] \times [\dimThree]$ given buckets $b_{i_\ell}^{(\ell)}$ for all $i_\ell$ and for $\ell \in\{1,2,3\}$   

        $L \leftarrow$ \EstNorm $\left(\distFlat, \prod_{\ell=1}^{3} \left( \sum_{i_\ell=1}^{n_\ell} b_{i_\ell}^{(\ell)}\right), 1/180 \right)$  

        \If{$L > {(6\cdot180^3)\tau_1 \tau_2 \tau_3}$ }{
            \KwRet \textsc{reject} }

        \KwRet \CloseTest $\left( \dist^{(F)}, \dist_1^{(F_1)} \times \dist_2^{(F_2)} \times \dist_3^{(F_3)}, (6\cdot180^3) \cdot {\tau_1 \tau_2 \tau_3} , \tol, 1/120 \right)$ 
    }
\end{algorithm}
The correctness proof for Algorithm~\ref{alg:augIndTest3D} follows from an analogous argument to the two-dimensional case. Next, we analyze the sample complexity of this algorithm. It must draw samples from $\dist$ to flatten the distributions, estimate the squared norms of the flattened distributions, and run the standard closeness tester on the flattened distributions.

To construct the flattening, we draw at most $240(s_1  + s_2  + s_3) = \OO(s_1 + s_2 + s_3)$ samples.

Given the construction of the flattening $F_\ell$ described in Fact~\ref{fact:augFlatCloseness}, the domain size of $\dist_\ell^{(F_\ell)}$ is:
\begin{equation}
    \sum_{i_\ell=1}^{n_\ell} \left( \floor{n_\ell \cdot \pred_{\ell}(i_\ell)} + N_{i_\ell} + 1 \right)
    \leq \sum_{i_\ell=1}^{n_\ell} \left( n_\ell \cdot \pred_\ell(i_\ell)+ 1 \right) + \sum_{i_\ell=1}^{n_\ell} N_i + \sum_{i_\ell=1}^{n_\ell} 1 = 3n_\ell + s_\ell = \OO(n_\ell).
\end{equation}
Consequently, the domain size of $\distFlat$ is $\OO(n_1n_2n_3)$.
By Fact~\ref{fact:estimateNorm}, estimating the $\ell_2$-norm of a distribution over a domain of size $n$ requires $\OO\left(\sqrt{n}\right)$ samples. Therefore, to estimate the norms of our flattened distributions $\distOneFlat, \distTwoFlat, \dist_3^{(F_3)}$ and $\distFlat$ we must draw  $\OO\left(\sqrt{\dimOne}\right), \OO\left(\sqrt{\dimTwo}\right), \OO\left(\sqrt{\dimThree}\right)$ and $\OO\left(\sqrt{\dimOne \dimTwo\dimThree}\right)$ samples, respectively.

Applying Fact~\ref{fact:closenessTest}, the standard closeness tester draws $\OO( \dimOne \dimTwo \dimThree \sqrt{\tau_1 \tau_2 \tau_3} / \tol^2)$ samples from both $\distFlat$ and $\distOneFlat \times \distTwoFlat \times \dist_3^{(F_3)}$. Note that we may draw one sample from $\distOneFlat \times \distTwoFlat \times \dist_3^{(F_3)}$ by drawing three independent samples from $\distFlat$ and returning the first coordinate of one point, the second coordinate of another, and the third coordinate of the last point. Therefore, the overall sample complexity of the closeness tester is $\OO({\dimOne \dimTwo \dimThree}\sqrt{\tau_1 \tau_2 \tau_3} / \tol^2)$.

Ultimately, substituting $\tau_1$, $\tau_2$, and $\tau_3$ with their realizations (in Line~\ref{line:setTau3D}), considering $s_2=\dimTwo\predError$ and $s_3=\dimThree\predError$, and including each component, the sample complexity of Algorithm~\ref{alg:augIndTest3D} is:
\begin{align}
    &\OO \left( s_1 + s_2 + s_3 + \sqrt{\dimOne} + \sqrt{\dimTwo} + \sqrt{\dimThree} + \sqrt{\dimOne \dimTwo \dimThree} + \frac{\dimOne \dimTwo \dimThree}{\tol^2} \sqrt{\left(\frac{2\predError}{s_1}+\frac{4}{\dimOne}\right)\left(\frac{6}{\dimTwo} \right) \left(\frac{6}{\dimThree} \right)} \right) \notag \\
    = &\OO \left( s_1 + (\dimTwo+\dimThree) \predError +  \sqrt{\dimOne \dimTwo \dimThree} + \frac{\dimOne \sqrt{\dimTwo \dimThree}}{\tol^2} \sqrt{\left(\frac{\predError}{s_1}+\frac{1}{\dimOne}\right)} \right).
\end{align}
Recall that $s_1 = \min\left( \dimOne \predError, \frac{\dimOne^{2/3} \dimTwo^{1/3} \dimThree^{1/3} \predError^{1/3}}{\tol^{4/3}} \right)$. Substituting each of these possibilities, the overall sample complexity of Algorithm~\ref{alg:augIndTest3D} is:
\begin{equation}
    \label{eq:3Dsample_complexity}
    \OO \left( \max \left\{ \frac{\sqrt{\dimOne\dimTwo\dimThree}}{\tol^2}, \frac{\dimOne^{2/3}\dimTwo^{1/3}\dimThree^{1/3}\predError^{1/3}}{\tol^{4/3}}\right\}\right)
\end{equation}


%% file: appendix/app06-usefulFacts.tex
In the following, we list useful facts, definitions, lemmas and theorems used throughout our proofs.

\subsection{Results in Distribution Testing}

\begin{lemma}[\cite{BatuFFKRW01, LRR13}]
\label{lem:FarFromMargImpliesFarProd}
    Let $p$ and $q$ be a distribution over $[\dimPOne] \times [\dimPTwo]$ and let $\tol \in [0, 1]$. Let $p_1$ and $p_2$ denote the marginals of $p$. If $q$ is a product distribution and $\TV{p}{q} \leq \tol/3$, then $\TV{p}{p_1 \times p_2} \leq \tol$.
\end{lemma}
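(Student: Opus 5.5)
We want: if $q$ is a product distribution with $\TV{p}{q}\le \tol/3$, then $\TV{p}{p_1\times p_2}\le\tol$. The plan is a triangle inequality through $q$, followed by control of the distance between the product of $p$'s marginals and $q$. Write $q=q_1\times q_2$, where $q_1,q_2$ are the marginals of $q$; this is legitimate since $q$ is a product distribution.

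First, decompose:
\begin{equation*}
\TV{p}{p_1\times p_2}\le \TV{p}{q}+\TV{q_1\times q_2}{p_1\times p_2}.
\end{equation*}
The first term is at most $\tol/3$ by hypothesis.

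Second, bound the product term by a hybrid argument:
\begin{equation*}
\TV{q_1\times q_2}{p_1\times p_2}\le \TV{q_1\times q_2}{p_1\times q_2}+\TV{p_1\times q_2}{p_1\times p_2}=\TV{q_1}{p_1}+\TV{q_2}{p_2}.
\end{equation*}
The equality holds because tensoring both distributions with the same factor leaves total variation unchanged. Concretely, $\sum_{i,j}\abs{q_1(i)-p_1(i)}q_2(j)=\sum_i\abs{q_1(i)-p_1(i)}$, and likewise for the second term.

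Third, use that marginalization is a contraction for total variation: $\abs{p_1(i)-q_1(i)}=\abs{\sum_j (p(i,j)-q(i,j))}\le\sum_j\abs{p(i,j)-q(i,j)}$. Summing over $i$ gives $\TV{p_1}{q_1}\le\TV{p}{q}\le\tol/3$, and the same argument gives $\TV{p_2}{q_2}\le\tol/3$. Combining the three steps yields $\TV{p}{p_1\times p_2}\le \tol/3+\tol/3+\tol/3=\tol$.

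There is no real obstacle; the argument is elementary. The only point needing care is the second step, namely that the product-of-marginals map is $2$-Lipschitz in TV, which follows from the identity $\TV{a\times c}{b\times c}=\TV{a}{b}$ used above. This holds under either the $\tfrac12\ell_1$ or the $\ell_1$ normalization of $\TV{\cdot}{\cdot}$, so the constant $3$ is consistent with whichever convention the paper uses.
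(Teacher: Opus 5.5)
Your proof is correct. The paper gives no proof of its own; it cites the lemma from \cite{BatuFFKRW01, LRR13}, and your argument is the standard one behind that citation: a triangle inequality through $q=q_1\times q_2$, the hybrid bound on the distance between the two product distributions, and the contraction of total variation under marginalization. Together these give $3\cdot\tol/3=\tol$.
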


\begin{fact}[\cite{GR11, AIRS24}]   
\label{fact:estimateNorm}
    There exists an algorithm which, given sample access to an unknown distribution $p$ over $[\dimPOne]$, draws $\OO\left( \sqrt{\dimPOne} \log(1/\probFail) \right)$ samples from $p$
    and outputs an estimate $L$ of the $\ell_2^2$-norm of $p$ such that with probability at least $1-\probFail$,
    \begin{equation}
        \frac{\norm{p}_2^2}{2} \leq L \leq \frac{3 \norm{p}_2^2}{2}.
    \end{equation}
\end{fact}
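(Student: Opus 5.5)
The statement to prove is Fact~\ref{fact:estimateNorm}: estimate $\norm{p}_2^2$ to within a factor in $[1/2,3/2]$ using $\OO(\sqrt{\dimPOne}\log(1/\probFail))$ samples. I would use the standard collision-counting estimator of \cite{GR11}. Draw $t$ samples $X_1,\dots,X_t$ from $p$ and let
\[
Y = \binom{t}{2}^{-1}\sum_{a<b}\mathbf{1}[X_a = X_b].
\]
Since $\Pr{X_a=X_b} = \norm{p}_2^2$, the estimator is unbiased, $\E{Y} = \norm{p}_2^2$. The plan has two parts. First, bound the variance so that Chebyshev gives a constant-probability guarantee $|Y - \norm{p}_2^2| \le \norm{p}_2^2/2$. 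Second, amplify the success probability to $1-\probFail$ by taking the median of $\OO(\log(1/\probFail))$ independent repetitions.

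For the variance, I would expand $\mathrm{Var}\bigl(\sum_{a<b}\mathbf{1}[X_a=X_b]\bigr)$ over pairs of pairs. Disjoint pairs are independent and contribute nothing. Identical pairs contribute at most $\binom{t}{2}\norm{p}_2^2$. Pairs sharing exactly one index contribute at most $t^3\bigl(\norm{p}_3^3 - \norm{p}_2^4\bigr) \le t^3\norm{p}_3^3$. This gives
\[
\mathrm{Var}(Y) \lesssim \frac{\norm{p}_2^2}{t^2} + \frac{\norm{p}_3^3}{t}.
\]
Chebyshev then needs $\mathrm{Var}(Y) \le c\,\norm{p}_2^4$ for a small constant $c$, which amounts to two conditions:
\[
\text{(i)}\quad t^2 \gtrsim \frac{1}{\norm{p}_2^2}, \qquad \text{(ii)}\quad t \gtrsim \frac{\norm{p}_3^3}{\norm{p}_2^4}.
\]

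Condition (i) is easy. By Cauchy--Schwarz, $\norm{p}_2^2 \ge 1/\dimPOne$, so $t = C\sqrt{\dimPOne}$ suffices. The main obstacle is condition (ii). Here I would use $\norm{p}_3^3 \le \norm{p}_2^3$, which holds because $\norm{p}_3 \le \norm{p}_2$. This gives
\[
\frac{\norm{p}_3^3}{\norm{p}_2^4} \le \frac{1}{\norm{p}_2} \le \sqrt{\dimPOne}.
\]
So $t = C\sqrt{\dimPOne}$ with a large enough constant $C$ handles both terms. This is exactly why the domain size (or an upper bound on it, as the algorithm supplies via $\sum_i b_i$) must be passed to the estimator as input.

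With the variance bounded, Chebyshev gives success probability at least $3/4$ for a single run. A Chernoff bound on the number of good runs then shows that the median of $\OO(\log(1/\probFail))$ runs lies in $[\norm{p}_2^2/2,\, 3\norm{p}_2^2/2]$ with probability at least $1-\probFail$. The total sample count is $\OO(\sqrt{\dimPOne}\log(1/\probFail))$, as claimed.
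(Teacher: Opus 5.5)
Your proposal is correct and is the standard collision-counting argument of \cite{GR11}: an unbiased pairwise-collision statistic, a variance bound using $\norm{p}_3^3 \le \norm{p}_2^3$ and $\norm{p}_2^2 \ge 1/\dimPOne$ so that $t = \OO(\sqrt{\dimPOne})$ samples suffice for Chebyshev, and then median amplification over $\OO(\log(1/\probFail))$ runs. The paper does not prove this fact itself but cites \cite{GR11, AIRS24}, and your argument is the one behind those citations.
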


\begin{fact}[\cite{ChanDVV14, ADKR19}]   
\label{fact:closenessTest}
    There exists an algorithm, \begin{equation}
    \label{def:closenessTest}
        \textsc{ClosenessTest}(p, q, b, \tol, \probFail),
    \end{equation}
    which is given sample access to distribution $p$ and $q$ over $[n]$, along with parameters $b$, $\tol$ and $\probFail$. If $b \geq \min(\norm{p}_2^2, \norm{q}_2^2)$, this algorithm can distinguish between $p = q$ or $\TV{p}{q} > \tol$ with probability at least $1-\probFail$ using $\OO(n \sqrt{b} / \tol^2 \log(1/\probFail))$ samples. 
\end{fact}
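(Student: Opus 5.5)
The plan is to use the $\ell_2$-statistic of \cite{ChanDVV14} under Poissonization, then Chebyshev's inequality, and finally median amplification to reach failure probability $\probFail$. I will draw $\Poi(k)$ samples from each of $p$ and $q$, with $k = C\, n\sqrt{b}/\tol^2$ for a large constant $C$. Write $X_i, Y_i$ for the counts of element $i$; these are independent $\Poi(kp_i)$ and $\Poi(kq_i)$. The statistic is
\[
Z = \sum_{i=1}^n \left( (X_i - Y_i)^2 - X_i - Y_i \right).
\]
The first step is the standard moment computation:
\[
\E{Z} = k^2 \norm{p-q}_2^2,
\qquad
\mathrm{Var}[Z] = \sum_{i=1}^n \left( 4k^3 (p_i-q_i)^2 (p_i+q_i) + 2k^2 (p_i+q_i)^2 \right).
\]
Write $D \coloneqq \norm{p-q}_2^2$. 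The test accepts iff $Z \le k^2 \tol^2/n$. In the far case, $\TV{p}{q} > \tol$ gives $\norm{p-q}_1 > 2\tol$, so Cauchy--Schwarz yields $D \ge 4\tol^2/n$. The threshold therefore lies at most halfway up to $\E{Z}$ in that case.

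The one delicate point is that $b$ bounds only the \emph{smaller} of the two norms. Without loss of generality let $\norm{q}_2^2 \le b$. Since $p_i + q_i = (p_i - q_i) + 2q_i$, we get $(p_i+q_i)^2 \le 2(p_i-q_i)^2 + 8q_i^2$, and summing gives
\[
\sum_{i=1}^n (p_i+q_i)^2 \le 2D + 8b .
\]
For the cubic term, Cauchy--Schwarz together with $\sum_i (p_i-q_i)^4 \le D^2$ gives
\[
\sum_{i=1}^n (p_i-q_i)^2 (p_i+q_i) \le D\sqrt{2D + 8b}.
\]
Hence $\mathrm{Var}[Z] = O\!\left(k^3 D^{3/2} + k^3 D\sqrt b + k^2 D + k^2 b\right)$.

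Completeness ($p=q$) is the easy case: then $D=0$ and $\mathrm{Var}[Z] \le 8k^2 b$. Chebyshev bounds the acceptance error by $O(b n^2/(k^2\tol^4))$, which is a small constant for our choice of $k$.

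Soundness needs $\mathrm{Var}[Z] \le \E{Z}^2/100$, which is $k^4D^2$ up to constants, so I will compare each variance term against it. The $k^3D^{3/2}$ and $k^2D$ terms need $k \gtrsim 1/\sqrt D$, and $1/\sqrt{D} \le \sqrt n/\tol$. The $k^3 D\sqrt b$ and $k^2 b$ terms need $k \gtrsim \sqrt b/D$, and $\sqrt{b}/D \le n\sqrt b/(4\tol^2)$. Both requirements are covered by $k = C n\sqrt b/\tol^2$, using $b \ge \min(\norm{p}_2^2,\norm{q}_2^2) \ge 1/n$, which gives $n\sqrt b \ge \sqrt n$. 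The actual sample count is $\Poi(k)$, which is $O(k)$ with high probability; I truncate if it exceeds a constant multiple of $k$, at the cost of a small constant in the error.

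For amplification, I repeat the test $O(\log(1/\probFail))$ times and take the majority vote. A Chernoff bound then gives failure probability at most $\probFail$ with total sample complexity $O(n\sqrt b\log(1/\probFail)/\tol^2)$.

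I expect the main obstacle to be the variance control when only $\min(\norm{p}_2^2,\norm{q}_2^2)$ is bounded. The large norm of the other distribution has to be absorbed into $D$, and this is exactly what the decomposition $p_i+q_i = (p_i-q_i)+2q_i$ accomplishes.
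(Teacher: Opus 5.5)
The paper does not prove this statement; it imports it as a fact from the literature. Your argument is correct and is essentially the standard proof. You use the Poissonized statistic of \cite{ChanDVV14} with its exact mean $k^2\norm{p-q}_2^2$ and variance $\sum_i \left(4k^3(p_i-q_i)^2(p_i+q_i)+2k^2(p_i+q_i)^2\right)$, apply Chebyshev at the threshold $k^2\tol^2/n$ (using $\TV{p}{q}=\frac{1}{2}\norm{p-q}_1$, so $D>4\tol^2/n$), and amplify by majority vote to get the $\log(1/\probFail)$ factor.

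The one place that needs more than the original $\ell_2$ tester of \cite{ChanDVV14}, which assumes both norms are bounded, is the hypothesis that $b$ bounds only the smaller norm. Your step $\sum_i(p_i+q_i)^2\le 2D+8\norm{q}_2^2$, together with $\sum_i(p_i-q_i)^4\le D^2$, is the usual way this min-norm version is obtained, e.g.\ in \cite{DiakonikolasK16}.

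The constants check out. With $k=Cn\sqrt{b}/\tol^2$, $D>4\tol^2/n$ and $b\ge 1/n$, each of your four variance terms is at most $O(1/C)\cdot k^4D^2$. The completeness error is at most $8/C^2$.
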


\begin{fact}[\cite{AIRS24}]
\label{fact:augFlatCloseness}
    Let $\predError \in [0, 1]$ and $\nu \leq 1$ be two given parameters. Let $p$ and $\pred$ be distributions over $[\dimPOne]$ such that $\TV{p}{\pred} \leq \predError$. Assume that we draw a set of $\Poi(s)$ samples from $p$ and construct a flattened distribution, $p^{(F)}$, with buckets chosen as in:
    \begin{equation}
    b_i \coloneqq \floor{\frac{\pred(i)}{\nu}} + N_i + 1.
    \label{eqn:background-augFlatBuckets-nu}
\end{equation}
where $N_i$ is the frequency of the element $i$ in the sample set. 
    In expectation, this flattening increases the domain size of the distribution by $\frac{1}{\nu} + s$.
    Further, the expected $\ell_2^2$-norm of $p^{(F)}$ is bounded as:
    \begin{equation}
        \E{\norm{p^{(F)}}^2_2} \leq \frac{2\predError}{s} + 4\nu.
    \end{equation}
\end{fact}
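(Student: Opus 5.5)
The statement is Fact~\ref{fact:augFlatCloseness}. I would prove it directly from the bucket definition \eqref{eqn:background-augFlatBuckets-nu}, treating the two claims separately: first the domain-size increase, then the expected squared norm.

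\emph{Domain size.} The flattened domain has size $\sum_{i} b_i$. Bounding each floor by its argument gives
\[
\sum_i b_i \le \sum_i \frac{\pred(i)}{\nu} + \sum_i N_i + \dimPOne = \frac{1}{\nu} + |\SS| + \dimPOne .
\]
Since $\E{|\SS|}=s$, the domain grows in expectation by at most $\frac1\nu + s$.

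\emph{Expected $\ell_2^2$-norm.} Recall $\norm{p^{(F)}}_2^2=\sum_i p(i)^2/b_i$. I would split each mass as $p(i)=m_i+e_i$, where $m_i\coloneqq\min(p(i),\pred(i))$ is the part explained by the prediction and $e_i\coloneqq (p(i)-\pred(i))_+$ is the excess. Then $\sum_i e_i=\TV{p}{\pred}\le\predError$. Using $p(i)^2\le 2m_i^2+2e_i^2$, it suffices to bound the two resulting sums separately, each with one of the two lower bounds on $b_i$.

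\emph{Prediction-explained part.} Since $\floor{x}+1\ge x$, we have $b_i\ge \pred(i)/\nu$. Together with $m_i\le\pred(i)$ this gives, deterministically,
\[
\frac{m_i^2}{b_i}\le \frac{m_i\,\pred(i)\,\nu}{\pred(i)}=\nu\, m_i .
\]
Summing over $i$ gives at most $\nu$, so this part contributes at most $2\nu\le 4\nu$.

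\emph{Excess part.} Here I use $b_i\ge N_i+1$. By Poissonization, $N_i\sim\Poi(s\,p(i))$, and it suffices to use the single-coordinate marginal law. The standard computation gives
\[
\E{\tfrac{1}{N_i+1}}=\frac{1-e^{-\lambda}}{\lambda}\le \frac1\lambda, \qquad \lambda = s\,p(i).
\]
Hence $\E{e_i^2/b_i}\le e_i^2/(s\,p(i))\le e_i/s$, using $e_i\le p(i)$. Summing yields at most $\predError/s$, so this part contributes at most $2\predError/s$.

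Combining the two parts gives $\E{\norm{p^{(F)}}_2^2}\le 2\predError/s+2\nu$, which is within the claimed bound $\frac{2\predError}{s}+4\nu$.

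\emph{Main obstacle.} The only delicate step is the excess part. One must choose the decomposition so that the $\frac1{N_i+1}$ moment is paired with $e_i^2$ rather than $p(i)^2$. Pairing with $p(i)^2$ would give $\sum_i p(i)/s = 1/s$, with no dependence on $\predError$. The inequality $e_i\le p(i)$ is what converts $e_i^2/p(i)$ into $e_i$ and thereby recovers the $\predError$ dependence.
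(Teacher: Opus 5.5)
The paper gives no proof of this statement. It imports it as a black box from \cite{AIRS24}, so there is no in-paper argument to compare against.

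Your proposal is a correct, self-contained proof. Splitting $p(i)=m_i+e_i$ with $m_i=\min(p(i),\hat{p}(i))$ and $e_i=(p(i)-\hat{p}(i))_+$, and using $(a+b)^2\le 2a^2+2b^2$, lets you apply each lower bound on $b_i$ where it helps:
\begin{itemize}
\item $b_i\ge \hat{p}(i)/\nu$ deterministically gives $m_i^2/b_i\le \nu m_i$;
\item $b_i\ge N_i+1$, together with $\mathbb{E}[1/(N_i+1)]=(1-e^{-\lambda})/\lambda\le 1/(s\,p(i))$ and $e_i\le p(i)$, gives $\mathbb{E}[e_i^2/b_i]\le e_i/s$.
\end{itemize}
This yields $2\predError/s+2\nu$, slightly sharper than the stated $4\nu$.

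An equally short alternative is a case split on each $i$:
\begin{itemize}
\item if $p(i)\le 2\hat{p}(i)$, then $p(i)^2/b_i\le 2\nu\,p(i)$;
\item if $p(i)>2\hat{p}(i)$, then $p(i)<2(p(i)-\hat{p}(i))$, so $p(i)^2\,\mathbb{E}[1/(N_i+1)]\le p(i)/s<2(p(i)-\hat{p}(i))/s$.
\end{itemize}
This gives the same $2\predError/s+2\nu$; your decomposition avoids the case analysis.

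Two degenerate cases deserve one line each in a write-up, since both terms vanish there:
\begin{itemize}
\item if $\hat{p}(i)=0$, your quotient $m_i\hat{p}(i)\nu/\hat{p}(i)$ is undefined, but $m_i=0$;
\item if $p(i)=0$, then $\lambda=0$, but $e_i=0$.
\end{itemize}

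Reading the domain-size claim as an upper bound, ``increases by at most $1/\nu+s$,'' is the right interpretation. Because of the floors, the exact expected increase is $\sum_i\lfloor \hat{p}(i)/\nu\rfloor+s$.
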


\subsection{Results in Information Theory}
\begin{fact}[Contrapositive of Lemma~3.2 of \cite{DiakonikolasK16}]
\label{fact:mutualInfoMustBeLarge}
    Let $X$ be a uniform random bit and let $A$ be a random variable. If $I(X; A) < 2\cdot 10^{-4}$, then for all functions $f$, $\Pr{f(A) = X} < 0.51$.
\end{fact}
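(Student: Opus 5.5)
The plan is to bound the success probability of any decision rule by the total variation distance between the two conditional laws of $A$, and then bound that distance by the mutual information using Pinsker's inequality.

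First I reduce to a distance. Let $P_0$ and $P_1$ denote the laws of $A$ conditioned on $X=0$ and $X=1$, and let $M=\tfrac12(P_0+P_1)$ be the law of $A$. For any (measurable, possibly randomized) $f$, set $E=\{f(A)=0\}$; for a randomized $f$, replace $\mathbf 1_E$ by the probability of outputting $0$. Then
\[
\Pr{f(A)=X} \;=\; \tfrac12 P_0(E) + \tfrac12\bigl(1-P_1(E)\bigr) \;=\; \tfrac12 + \tfrac12\bigl(P_0(E)-P_1(E)\bigr) \;\le\; \tfrac12 + \tfrac12\TV{P_0}{P_1}.
\]
Since $X$ is a uniform bit, the mutual information decomposes as an average of divergences to the mixture:
\[
I(X;A) \;=\; \tfrac12\,\mathrm{KL}(P_0\,\|\,M) + \tfrac12\,\mathrm{KL}(P_1\,\|\,M).
\]
I will work in nats. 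If the paper's convention is bits, converting to nats multiplies the value by $\ln 2<1$, so the hypothesis $I(X;A)<2\cdot 10^{-4}$ still holds in nats.

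Next I connect the two quantities. Because $M$ is the midpoint of $P_0$ and $P_1$, we have $P_0-M=\tfrac12(P_0-P_1)$, and hence
\[
\TV{P_0}{P_1} \;=\; \TV{P_0}{M}+\TV{P_1}{M}.
\]
Pinsker's inequality gives $\TV{P_b}{M}\le\sqrt{\mathrm{KL}(P_b\|M)/2}$ for each $b\in\{0,1\}$. Write $a_b=\mathrm{KL}(P_b\|M)$. Concavity of the square root then gives
\[
\TV{P_0}{P_1}\;\le\;\sqrt{a_0/2}+\sqrt{a_1/2}\;\le\;2\sqrt{(a_0+a_1)/4}\;=\;\sqrt{2I(X;A)}.
\]

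Combining the two displays yields $\Pr{f(A)=X}\le \tfrac12+\sqrt{I(X;A)/2}$. Under the hypothesis $I(X;A)<2\cdot10^{-4}$, we get $\sqrt{I/2}<\sqrt{10^{-4}}=0.01$, so $\Pr{f(A)=X}<0.51$, and the inequality is strict as required.

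The only delicate points are bookkeeping rather than genuine obstacles:
\begin{itemize}
\item getting the factor of $2$ right in the midpoint identity $\TV{P_0}{P_1}=\TV{P_0}{M}+\TV{P_1}{M}$;
\item checking that the units of mutual information (bits versus nats) only help;
\item handling randomized $f$, which works by averaging over the internal randomness, independent of $X$, so the linear bound above applies unchanged.
\end{itemize}
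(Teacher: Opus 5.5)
The paper gives no proof of this fact; it imports it as the contrapositive of Lemma~3.2 of \cite{DiakonikolasK16}. Your argument is a correct, self-contained proof, and it takes a different route from the typical proof of such a lemma.

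The typical route uses data processing and Fano. One has $I(X;A)\ge I(X;f(A))$. For a uniform bit with $\Pr{f(A)=X}\ge 0.51$, Fano gives $H(X\mid f(A))\le h(0.49)$, where $h$ is the binary entropy. Hence $I(X;A)\ge 1-h(0.49)\approx 2.9\cdot 10^{-4}$ bits.

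You instead go through total variation. You bound the success probability by $\tfrac12+\tfrac12\TV{P_0}{P_1}$ and write the mutual information as the Jensen--Shannon divergence. Pinsker, the midpoint identity and concavity then give $\Pr{f(A)=X}\le\tfrac12+\sqrt{I(X;A)/2}$, with $I$ in nats. Each step checks out:
\begin{itemize}
\item the factor in $\TV{P_0}{P_1}=\TV{P_0}{M}+\TV{P_1}{M}$ is right;
\item the final inequality is strict, as required;
\item the unit conversion is handled correctly: the paper works in bits, as the $\ln 2$ in the proof of Lemma~\ref{lem: MIFormula} shows, and passing to nats only shrinks $I$.
\end{itemize}

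As for what each route buys: Fano is exact for the binary channel $X\to f(A)$ and needs only one numerical evaluation of $h$. Your route uses only standard facts about total variation, and it ties the decision advantage directly to the quantity used in Le Cam-style testing lower bounds. Numerically the two agree to leading order: $1-h(\tfrac12+\delta)\approx 2\delta^2/\ln 2$ bits, which is $2\delta^2$ nats. At $\delta=0.01$ this is exactly the $2\cdot10^{-4}$ nats your Pinsker bound requires.
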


\subsection{Results in \textcolor{black}{Sampling}}
\begin{fact}
\label{fact: EmpiricalDist}
Let $q$ be any distribution over $[m]$. For any $\tol, \probFail \in (0,1)$, given $t \ge \frac{m+\log(1/\probFail)}{\tol^2}$ independent samples $X_1,\ldots,X_t \sim q$, the empirical distribution $\tilde{q}_i \coloneqq \frac{1}{t}\sum_{j=1}^t \mathbb{I}\left\{X_j=i\right\}$ satisfies $\|q-\tilde{q}\|_{\mathsf{TV}} \leq \tol$ with probability at least $1{-}\probFail$.
\end{fact}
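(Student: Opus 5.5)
The plan is to split $\|q-\tilde q\|_{\mathsf{TV}}$ into its expectation and its fluctuation around that expectation. I bound the expectation by a variance computation and the fluctuation by McDiarmid's bounded-differences inequality. The two bounds are then merged with a Cauchy--Schwarz step so that the exact threshold $t \ge (m+\log(1/\probFail))/\tol^2$ suffices with no hidden constant.

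\emph{Step 1 (expectation).} Write $\|q-\tilde q\|_{\mathsf{TV}} = \frac12\sum_{i=1}^m |q_i-\tilde q_i|$. Each $t\tilde q_i$ is $\mathrm{Bin}(t,q_i)$, so Jensen gives $\mathbb{E}|q_i-\tilde q_i| \le \sqrt{\mathrm{Var}(\tilde q_i)} \le \sqrt{q_i/t}$. Cauchy--Schwarz then gives $\sum_i \sqrt{q_i} \le \sqrt{m}$. Together these yield
\[
\mathbb{E}\|q-\tilde q\|_{\mathsf{TV}} \le \frac12\sqrt{m/t}.
\]

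\emph{Step 2 (concentration).} View $f(X_1,\dots,X_t) = \|q-\tilde q\|_{\mathsf{TV}}$ as a function of the $t$ independent samples. Replacing one sample moves mass $1/t$ from one coordinate of $\tilde q$ to another. This changes $\sum_i|q_i-\tilde q_i|$ by at most $2/t$, so $f$ changes by at most $1/t$. McDiarmid's inequality gives
\[
\Pr\left[f \ge \mathbb{E}f + u\right] \le \exp(-2tu^2).
\]
Choosing $u = \sqrt{\log(1/\probFail)/(2t)}$ makes the right-hand side equal to $\probFail$.

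\emph{Step 3 (combining).} With probability at least $1-\probFail$,
\[
f \le \frac12\sqrt{\frac{m}{t}} + \frac{1}{\sqrt2}\sqrt{\frac{\log(1/\probFail)}{t}}.
\]
By Cauchy--Schwarz, $a\sqrt{x}+b\sqrt{y} \le \sqrt{a^2+b^2}\,\sqrt{x+y}$. Applying this with $a=\tfrac12$, $b=\tfrac1{\sqrt2}$, $x=m/t$, $y=\log(1/\probFail)/t$ gives
\[
f \le \sqrt{3/4}\,\sqrt{\frac{m+\log(1/\probFail)}{t}} \le \sqrt{3/4}\,\tol < \tol,
\]
where the middle inequality uses the hypothesis on $t$.

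\emph{Main obstacle.} The only delicate point is getting the constant exactly $1$ in the sample bound rather than an unspecified $O(\cdot)$. This requires the sharp $1/t$ bounded-difference constant for the TV (half-$\ell_1$) normalization, and merging the two terms via Cauchy--Schwarz instead of the cruder $\sqrt{x}+\sqrt{y}$ bound.
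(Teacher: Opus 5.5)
Your argument is correct, but it takes a different route from the paper.

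\textbf{The paper's route.} The paper proves the fact in one step by citing the Bretagnolle--Huber--Carol inequality for the multinomial count vector, $\Pr\bigl[\sum_i |t\tilde q_i - t q_i| \ge 2t\epsilon\bigr] \le 2^m e^{-2t\epsilon^2}$. It then checks that $t \ge (m+\log(1/\delta))/\epsilon^2$ makes the right-hand side at most $\delta$. There the dependence on $m$ comes from the $2^m$ factor. That factor is what a union bound of Hoeffding's inequality over all events $A \subseteq [m]$ produces, since $\|q-\tilde q\|_{\mathsf{TV}} = \max_{A}(\tilde q(A)-q(A))$.

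\textbf{Your route.} You replace that union bound by a mean bound plus a single concentration step. The parameter $m$ enters only through Jensen and Cauchy--Schwarz, in $\mathbb{E}\|q-\tilde q\|_{\mathsf{TV}} \le \frac{1}{2\sqrt t}\sum_i \sqrt{q_i} \le \frac12\sqrt{m/t}$. McDiarmid with bounded-difference constant $1/t$ then controls the fluctuation. Your constants check out: you end at $\frac{\sqrt3}{2}\epsilon$, and the argument works whether $\log$ is natural or base $2$.

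\textbf{What each buys.} The paper's version is shorter, given the citation. Yours is self-contained from standard tools and cleanly separates the bias term $\sqrt{m/t}$ from the deviation term $\sqrt{\log(1/\delta)/t}$. Its intermediate bound $\frac{1}{2\sqrt t}\sum_i\sqrt{q_i}$ also adapts to $q$: it is far smaller than $\sqrt{m/t}$ when $q$ is concentrated on few elements. The worst-case $2^m$ union bound cannot capture that.
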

\begin{proof}
    Each element $t\cdot \tilde{q}_i$ follows a binomial distribution with parameters $t$ and $q_i$. Therefore, the vector $\left(t\tilde{q}_1, \cdots, t\tilde{q}_m\right)$ is multinomially distributed with parameters $t$ and $(q_1,\cdots, q_m)$. The Bretagnolle-Huber-Carol inequality can be applied to bound the total variation distance as follows:
    \begin{align}
        &\Pr{\sum_{i=1}^m |t\cdot\tilde{q}_i-t\cdot q_i| \geq 2t\tol} \leq 2^m e^{-2t\tol^2} \notag\\
        &\Pr{\sum_{i=1}^m |\tilde{q}_i- q_i| \geq 2\tol} \leq \probFail \qquad \text{if  } t\geq \frac{m+\log{1/\probFail}}{\tol^2}\notag\\
        \Rightarrow \quad &\Pr{\|\tilde{q}-q\|_\mathsf{TV} \leq \tol} \geq 1{-}\probFail \qquad \text{if  } t\geq \frac{m+\log{1/\probFail}}{\tol^2} \label{eq: learning_num_samples} 
    \end{align}
\end{proof}
\begin{corollary}
\label{cor:empirical_bound}
Let $\distTwoD$ be any distribution over $[m]$. For any $\probFail \in (0,1)$ and any number of samples $t$, the empirical distribution $\predEmp$ satisfies
\begin{equation}
    \Pr{\|\predEmp-\distTwoD\|_\mathsf{TV} \leq \sqrt{\frac{m+\log{1/\probFail}}{t}}} \geq 1{-}\probFail 
    \label{eq: learning_distance}
\end{equation}
\end{corollary}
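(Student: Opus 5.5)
The statement to prove is Corollary~\ref{cor:empirical_bound}, and I would derive it directly from Fact~\ref{fact: EmpiricalDist} by inverting the relation between the accuracy and the number of samples. Fix $m$, $\probFail \in (0,1)$ and any integer $t \ge 1$. Define
\[
\tol_t \coloneqq \sqrt{\frac{m+\log(1/\probFail)}{t}},
\]
so that, by construction, $t = \frac{m+\log(1/\probFail)}{\tol_t^2}$. In particular $t \ge \frac{m+\log(1/\probFail)}{\tol_t^2}$ holds with equality.

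I would split into two cases according to the size of $\tol_t$.

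\textbf{Case $\tol_t < 1$.} Here $\tol_t \in (0,1)$, since $m \ge 1$ gives $\tol_t > 0$. This is exactly the hypothesis of Fact~\ref{fact: EmpiricalDist} with accuracy $\tol_t$. The fact then gives
\[
\Pr{\|\predEmp-\distTwoD\|_{\mathsf{TV}} \le \tol_t} \ge 1-\probFail,
\]
which is the claim.

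\textbf{Case $\tol_t \ge 1$.} Total variation distance between any two distributions is at most $1$. So the event $\|\predEmp-\distTwoD\|_{\mathsf{TV}} \le \tol_t$ holds deterministically, with probability $1 \ge 1-\probFail$.

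The only point needing care is that Fact~\ref{fact: EmpiricalDist} is stated for $\tol \in (0,1)$. The case split above handles this, so I expect no real obstacle.

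I would also check the Bretagnolle--Huber--Carol step inside Fact~\ref{fact: EmpiricalDist} for consistency of constants. With $t\tol^2 = m+\log(1/\probFail)$ we get
\[
2^m e^{-2t\tol^2} = 2^m e^{-2m}\,\probFail^2 \le \probFail,
\]
since $2^m e^{-2m} \le 1$ and $\probFail^2 \le \probFail$. So the fact applies at exactly the threshold sample size $t$, which is what the corollary uses.
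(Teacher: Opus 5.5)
Your proposal is correct and is essentially the paper's argument: the paper gives no separate proof, treating the corollary as an immediate consequence of Fact~\ref{fact: EmpiricalDist} obtained by solving $t = (m+\log(1/\probFail))/\tol^2$ for $\tol$. Your case split, which handles $\sqrt{(m+\log(1/\probFail))/t} \ge 1$ via the trivial bound $\|\cdot\|_{\mathsf{TV}} \le 1$, is extra care the paper omits, since the fact is only stated for accuracy in $(0,1)$.
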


%% file: appendix/app03-validProof.tex
\SValid*

\begin{proof}

    Let $\HH$ be the set of ``heavy'' rows---or the subset of $[\dimPOne]$ such that $c_i = \frac{1}{\numSamples}$.     
    To show that $\SS$ is valid with high probability, we will prove that the following events occur with high probability, and show that, conditioned on these events, $\SS$ is valid. These events are:
    \begin{enumerate}
        \item For all $i \in \HH^\mathsf{c}$, $\abs{\sum_{j =1}^\dimPTwo \frac{\tolMeas_{ij}}{\dimPTwo}} \leq \frac{\tol}{2}$.
        \label{eqn:IndTestLBNormCond}
        \item For all $j \in [\dimPTwo]$, $\abs{\sum_{i \in \HH^\mathsf{c}} \frac{\tolMeas_{ij}}{\dimPOne}} \leq \frac{\tol}{2}$.  \label{eqn:IndTestLBSumTolCond}
        \item $\abs{\HH} \leq \frac{3}{2} \predErrorMeas \numSamples$. 
        \label{eqn:IndTestLBHeavyCond}
    \end{enumerate}
    We can show using standard concentration inequalities that, with high probability, each of these events occurs.
    First, applying a Hoeffding bound and a union bound over all $i$, we find that Event~\ref{eqn:IndTestLBNormCond} occurs with probability at least 0.99:
    \begin{equation}
        \Pr{\exists i \in  \HH^\mathsf{c} \; : \abs{\sum_{j=1}^\dimPTwo \frac{\tolMeas_{ij}}{\dimPTwo}} \geq \tolMeas \; \sqrt{\frac{2}{\dimPTwo} \log \left(50 \dimPOne \right)}} \leq 0.01 \;. 
    \end{equation}
    Similarly, by a Hoeffding bound and a union bound over all $j$, we find that Event~\ref{eqn:IndTestLBSumTolCond} occurs with probability at least 0.99:
    \begin{equation}
        \Pr{\exists j \in  [\dimPTwo] \; : \abs{\sum_{i \in \HH^\mathsf{c}} \frac{\tolMeas_{ij}}{\dimPOne}} \geq \tolMeas \; \sqrt{\frac{2}{\dimPOne} \log \left( 50 \dimPTwo \right)}} \leq 0.01 \;.
    \end{equation}
    Finally, using a Chernoff bound and the fact that the expectation of $\abs{\HH}$ is $\predErrorMeas \numSamples$, we can bound the probability that $\abs{\HH}$ is large as:
    \begin{equation}
        \Pr{\abs{\HH} \geq \left(1+ \sqrt{\frac{3}{\predErrorMeas \numSamples} \log\left( 100 \right)} \right) \predErrorMeas \numSamples} \leq 0.01 \;.
        \label{eqn:ValidProof-sizeH-notConstant}
    \end{equation}
    Remark that \eqref{eqn:ValidProof-sizeH-notConstant} directly implies that Event~\ref{eqn:IndTestLBHeavyCond} occurs with high probability. In particular, since we are establishing the second term of the lower bound, we have $\predError \ge \frac{\sqrt{\dimPTwo}}{\sqrt{\dimPOne}\tol^2}$. Because the first term of the lower bound necessarily applies to the number of samples in this case, we also have $\numSamples \ge \frac{\sqrt{\dimPOne\dimPTwo}}{\tol^2}$. Combining these two inequalities yields $\predError\numSamples > \frac{\dimPTwo}{\tol^4}$. For sufficiently large $\dimPTwo$ and small $\tol$, this bound implies $\frac{\predErrorMeas \numSamples}{2} > 3\log (100)$. Therefore, \eqref{eqn:ValidProof-sizeH-notConstant} implies that $\abs{\HH} \geq \frac{3}{2} \predErrorMeas \numSamples$ with probability at least 0.01.
    
    
    \paragraph{$\mathbf{X=0}$ implies $\mathbf{\distLB}$ product and $\boldsymbol{\predError}$-close to $\mathbf{\predLB}$.}
    First, note that if $X=0$, then $\distLB$ is a product distribution by construction. Further, conditioned on Event \ref{eqn:IndTestLBHeavyCond}, when $X = 0$ and $\predErrorMeas \numSamples \ll 1$, $\distLB = \predLB$. In the case that $\predErrorMeas \numSamples > 1$, we will  show that, $\TV{\distLB}{\predLB} \leq \predError$ with high probability. Consider $\TV{\distLB}{\predLB}$:
    \begin{align}
        \TV{\distLB}{\predLB} &= \frac{1}{2} \sum_{i=1}^\dimPOne \sum_{j=1}^\dimPTwo \abs{\frac{ \prodLB(i, j) c_i}{s_i \cdot C} - \frac{1}{\dimPOne \dimPTwo}} \\ 
        &= \frac{1}{2} \left( \sum_{i \in \HH} \sum_{j=1}^\dimPTwo \abs{\frac{1}{\numSamples \dimPTwo C} - \frac{1}{\dimPOne \dimPTwo}} + \sum_{i \in \HH^\mathsf{c}} \sum_{j=1}^\dimPTwo \abs{\frac{1}{\dimPOne \dimPTwo C} - \frac{1}{\dimPOne \dimPTwo}} \right). \\
        &= \frac{1}{2} \left( \abs{\HH} \abs{ \frac{1}{\numSamples C} - \frac{1}{\dimPOne} } + \abs{\HH^\mathsf{c}} \abs{ \frac{1}{\dimPOne C} - \frac{1}{\dimPOne} } \right) \\
        &= \frac{1}{2} \left( \abs{\HH} \abs{ \frac{1}{\numSamples C} - \frac{1}{\dimPOne C} + \frac{1}{\dimPOne C} - \frac{1}{\dimPOne} } + \left(\dimPOne - \abs{\HH}\right) \abs{ \frac{1}{\dimPOne C} - \frac{1}{\dimPOne} } \right) \label{eqn:IndTestLBPreTriangle}.
    \end{align}
    Applying the triangle inequality, we can upper bound \eqref{eqn:IndTestLBPreTriangle} by
    \begin{align}
        \TV{\distLB}{\predLB} 
        &\leq \frac{1}{2} \left( \abs{\HH} \abs{ \frac{1}{\numSamples C} - \frac{1}{\dimPOne C}} + \abs{\HH} \abs{\frac{1}{\dimPOne C} - \frac{1}{\dimPOne} } + \left(\dimPOne - \abs{\HH}\right) \abs{ \frac{1}{\dimPOne C} - \frac{1}{\dimPOne} } \right) \\
        &= \frac{1}{2} \left( \abs{\HH} \abs{ \frac{1}{\numSamples C} - \frac{1}{\dimPOne C}} + \dimPOne \cdot \abs{ \frac{1}{\dimPOne C} - \frac{1}{\dimPOne} } \right) \\
        &= \frac{1}{2} \left( \frac{\abs{\HH}}{C} \abs{\frac{1}{\numSamples} - \frac{1}{\dimPOne}} + \abs{\frac{1}{C} - 1} \right).
    \end{align}
    Because $\numSamples \leq \dimPOne / 2$, we know that $\abs{1/\numSamples - 1/\dimPOne} = 1/\numSamples - 1/\dimPOne \leq 1/
    \numSamples$. Therefore, we have
    \begin{align}
        \TV{\distLB}{\predLB} \leq \frac{1}{2} \left( \frac{\abs{\HH}}{C \numSamples}  + \abs{\frac{1}{C} - 1} \right) \label{eqn:IndTestLBAlphaTerms}.
    \end{align}
    To prove that \eqref{eqn:IndTestLBAlphaTerms} is upper bounded by $\predError$, we now show that $\frac{\abs{\HH}}{C \numSamples} \leq \OO(\predErrorMeas)$ and $\abs{\frac{1}{C} - 1} \leq \OO(\predErrorMeas)$, then choose $\predErrorMeas$ correspondingly. First, recall that, by Event \ref{eqn:IndTestLBHeavyCond}, we can upper bound $\abs{\HH}$ by $3 \predErrorMeas \numSamples / 2$. 
    A direct calculation shows that $C \geq 1$:
    \begin{equation}
        C = \frac{\abs{\HH}}{\numSamples} + \frac{\dimPOne - \abs{\HH}}{\dimPOne} = 1 + \frac{\abs{\HH}}{\numSamples} \left(1-\frac{\numSamples}{\dimPOne}\right) \geq  1 + \frac{\abs{\HH}}{2\numSamples} \geq 1 \label{eqn:IndTestLB_Clowerbound},
    \end{equation}
    Applying \eqref{eqn:IndTestLB_Clowerbound} and our upper bound on $\abs{\HH}$, we have
    \begin{equation}
        \frac{\abs{\HH}}{C \numSamples} \leq \frac{3 \predErrorMeas \numSamples}{2\numSamples}  = \frac{3}{2}\predErrorMeas. \label{eqn:IndTestLBAlphaUBTerm1}
    \end{equation}
    Further, we can apply Event \ref{eqn:IndTestLBHeavyCond} to upper bound $C$ as 
    \begin{equation}
        C = \frac{\abs{\HH}}{\numSamples} + \frac{\dimPOne - \abs{\HH}}{\dimPOne} = 1 + \frac{\abs{\HH}}{\numSamples}\left(1-\frac{\numSamples}{\dimPOne}\right) \leq  1 + \frac{\abs{\HH}}{\numSamples} \leq 1+\frac{3}{2} \predErrorMeas .
        \label{eqn:IndTestLB_Cupperbound}
    \end{equation}
    We can upper bound $\abs{1-1/C}$ using the bounds $1 \leq C \leq 1+\frac{3}{2}\predErrorMeas$:
    \begin{align}
        \label{eqn:IndTestLBAlphaUBTerm2}
        0 \leq 1-\frac{1}{C} \leq 1-\frac{1}{1+\frac{3}{2}\predErrorMeas} = \frac{\frac{3}{2}\predErrorMeas}{1+\frac{3}{2}\predErrorMeas} \leq \frac{3}{2}\predErrorMeas
    \end{align}
    By \eqref{eqn:IndTestLBAlphaUBTerm1} and \eqref{eqn:IndTestLBAlphaUBTerm2}, we can upper bound \eqref{eqn:IndTestLBAlphaTerms} as
    \begin{equation}
        \TV{\distLB}{\predLB} \leq \frac12 \left( \frac{3}{2}\predErrorMeas + \frac{3}{2}\predErrorMeas \right) = \frac32\predErrorMeas.
    \end{equation}
    Choosing $\predErrorMeas = \frac{2}{3} \predError$ ensures that $\TV{\distLB}{\predLB} \leq \predError$.
    
    \paragraph{$\mathbf{X=1}$ implies $\mathbf{\distLB}$ far from product.}
    The marginals of $\fardistLB(i, j) = \frac{\farLB(i, j)\cdot c_i}{\measSum_i \cdot C}$ are:
    \begin{equation}
        \fardistLB_1(i) = \frac{c_i}{C} \quaaaad \fardistLB_2(j) = \sum_{i=1}^n \frac{\farLB(i, j) \cdot c_i}{\measSum_i C}
    \end{equation}
    We want to show that $\TV{\fardistLB}{\fardistLB_1 \times \fardistLB_2} > \tol$. Again, let $\HH$ be the set of ``heavy'' rows and condition on Events \ref{eqn:IndTestLBNormCond},  \ref{eqn:IndTestLBSumTolCond}, and \ref{eqn:IndTestLBHeavyCond}.
    Consider the distance between $\fardistLB$ and $\fardistLB_1 \times \fardistLB_2$. We have:
    \begin{align}
        \TV{\fardistLB}{\fardistLB_1 \times \fardistLB_2} 
        &= \frac{1}{2} \sum_{i=1}^\dimPOne \sum_{j=1}^\dimPTwo \abs{\frac{\farLB(i, j)}{\measSum_i} \cdot \frac{c_i}{C} - \frac{c_i}{C} \cdot \sum_{i'=1}^\dimPOne \frac{\farLB(i', j)}{\measSum_{i'}} \cdot \frac{c_{i'}}{C}} \\
        &= \frac{1}{2} \sum_{i=1}^\dimPOne \frac{c_i}{C}  \sum_{j=1}^\dimPTwo \abs{\frac{\farLB(i, j)}{\measSum_i} - \sum_{i'=1}^\dimPOne \frac{\farLB(i', j)}{\measSum_{i'}} \cdot \frac{c_{i'}}{C}} \\        
        &= \frac{1}{2} \sum_{i=1}^\dimPOne \frac{c_i}{C} \sum_{j=1}^\dimPTwo \abs{\frac{\farLB(i, j)}{\measSum_i} - \farLB(i, j) + \farLB(i, j) - \frac{1}{\dimPTwo} + \frac{1}{\dimPTwo} - \sum_{i'=1}^\dimPOne \frac{\farLB(i', j)}{\measSum_{i'}} \cdot \frac{c_{i'}}{C}}.
    \end{align}
    Applying the reverse triangle inequality, we can lower bound this quantity as:
    \begin{align}
        \TV{\fardistLB}{\fardistLB_1 \times \fardistLB_2} \geq \frac{1}{2} \sum_{i=1}^\dimPOne \frac{c_i}{C} \sum_{j=1}^\dimPTwo  \abs{ \farLB(i, j) - \frac{1}{\dimPTwo}} - \abs{\frac{\farLB(i, j)}{\measSum_i} - \farLB(i, j)} - \abs{\frac{1}{\dimPTwo} - \sum_{i'=1}^\dimPOne \frac{\farLB(i', j)}{\measSum_{i'}} \cdot \frac{c_{i'}}{C}}.
        \label{eqn:IndTestLBThreeTerms}
    \end{align}
    To establish that $\TV{\fardistLB}{\fardistLB_1 \times \fardistLB_2} > \tol$, we bound each term in the inner summation of \eqref{eqn:IndTestLBThreeTerms} separately for fixed $i$ and $j$, applying the conditioning on Events \ref{eqn:IndTestLBNormCond},  \ref{eqn:IndTestLBSumTolCond} and \ref{eqn:IndTestLBHeavyCond}. Note that applying upper bounds to the second and third terms with negative coefficients yields a lower bound on the TV distance. \\
    The first term is
    \begin{equation}
        \abs{ \farLB(i, j) - \frac{1}{\dimPTwo}} 
        \quad = \begin{cases}
            \abs{ \frac{1}{\dimPTwo} - \frac{1}{\dimPTwo}} & i \in \HH \\
            \abs{ \frac{1 + \tolMeas_{ij}}{\dimPTwo} - \frac{1}{\dimPTwo}} & i \in \HH^\mathsf{c} \\
        \end{cases}
        \quad 
        = \begin{cases}
            0 & i \in \HH \\
            \frac{\tolMeas}{\dimPTwo} & i \in \HH^\mathsf{c} \\
        \end{cases}.
        \label{eqn:IndTestLBTerm1}
    \end{equation}
    The second term is
    \begin{equation}
        \abs{\frac{\farLB(i, j)}{\measSum_i} - \farLB(i, j)} = \farLB(i, j) \abs{\frac{1}{\measSum_i} - 1}.
    \end{equation}
    When $i \in \HH$, $\measSum_i = \sum_{j=1}^\dimPTwo \frac{1}{\dimPTwo} = 1$. We can therefore separate this term based on the cases that $i \in \HH$ and $i \notin \HH$ as
    \begin{equation}
        \abs{\frac{\farLB(i, j)}{\measSum_i} - \farLB(i, j)}
        \quad = \begin{cases}
            0 & i \in \HH \\
            \frac{1 + \tolMeas_{ij}}{\dimPTwo} \abs{\frac{1}{\measSum_i} - 1} & i \in \HH^\mathsf{c}
        \end{cases}.
        \label{eqn:IndTestLBTerm2Unsimplified}
    \end{equation}
    Using Event \ref{eqn:IndTestLBNormCond} that for all $i \in \HH^\mathsf{c}$, $\abs{\sum_{j =1}^\dimPTwo \frac{\tolMeas_{ij}}{\dimPTwo}} \leq \tol / 2$ and the definition $\measSum_i = \sum_{j=1}^\dimPTwo \farLB(i, j) = 1 + \sum_{j =1}^\dimPTwo \frac{\tolMeas_{ij}}{\dimPTwo}$, we have $\measSum_i \in (1 - \tol / 2, 1 + \tol / 2)$ implying that $\abs{1/\measSum_i - 1} \leq \frac{\tol/2}{1- \tol/2} \leq \tol$ and resulting in the simplified bound:
    \begin{equation}
        \abs{\frac{\farLB(i, j)}{\measSum_i} - \farLB(i, j)}
        \quad \leq \begin{cases}
            0 & i \in \HH \\
            \frac{1 + \tolMeas_{ij}}{\dimPTwo} \cdot \tol
            & i \in \HH^\mathsf{c}
        \end{cases}.
        \label{eqn:IndTestLBTerm2}
    \end{equation}
    The third term is
    \begin{align}
        \abs{\frac{1}{\dimPTwo} - \sum_{i'=1}^\dimPOne \frac{\farLB(i', j)}{\measSum_{i'}} \cdot \frac{c_{i'}}{C}} 
        &= \frac{1}{\dimPTwo} \abs{1 - \frac{1}{C} \left( \sum_{i' \in \HH} \frac{1}{\numSamples} + \sum_{i' \in \HH^\mathsf{c}} \frac{1}{\measSum_{i'}} \cdot \frac{1 + \tolMeas_{i'j}}{\dimPOne} \right)} \\
        &= \frac{1}{\dimPTwo}\abs{1 - \frac{1}{C} \left( \sum_{i' \in \HH} \frac{1}{\numSamples} + \sum_{i' \in \HH^\mathsf{c}} \frac{1}{\dimPOne} - \sum_{i' \in \HH^\mathsf{c}} \frac{1}{\dimPOne} + \sum_{i' \in \HH^\mathsf{c}} \frac{1}{\measSum_{i'}} \cdot \frac{1 + \tolMeas_{i'j}}{\dimPOne} \right)} \\
        &= \frac{1}{\dimPTwo}\abs{1 - \frac{1}{C} \left( C - \sum_{i' \in \HH^\mathsf{c}} \frac{1}{\dimPOne} + \sum_{i' \in \HH^\mathsf{c}} \frac{1}{\measSum_{i'}} \cdot \frac{1 + \tolMeas_{i'j}}{\dimPOne} \right)} \\
        &= \frac{1}{\dimPTwo C}\abs{ \sum_{i' \in \HH^\mathsf{c}} \frac{1}{\dimPOne} - \sum_{i' \in \HH^\mathsf{c}} \frac{1}{\measSum_{i'}} \cdot \frac{1 + \tolMeas_{i'j}}{\dimPOne} } \\
        &= \frac{1}{\dimPTwo C}\abs{ \sum_{i' \in \HH^\mathsf{c}} \frac{1}{\dimPOne}\left(1-\frac{1}{\measSum_{i'}}\right)- \sum_{i' \in \HH^\mathsf{c}} \frac{\tolMeas_{i'j}}{\dimPOne \measSum_{i'}}} \\
        &= \frac{1}{\dimPTwo C}\abs{ \sum_{i' \in \HH^\mathsf{c}} \frac{1}{\dimPOne}\left(1-\frac{1}{\measSum_{i'}}\right)+ \sum_{i' \in \HH^\mathsf{c}} \left(1-\frac{1}{\measSum_{i'}}\right)\frac{\tolMeas_{i'j}}{\dimPOne}-\sum_{i' \in \HH^\mathsf{c}} \frac{\tolMeas_{i'j}}{\dimPOne}} \\
        &\leq \frac{1}{\dimPTwo C} \left( \sum_{i' \in \HH^\mathsf{c}} \frac{1}{\dimPOne}\abs{1-\frac{1}{\measSum_{i'}}} + \abs{\sum_{i' \in \HH^\mathsf{c}} \left(1-\frac{1}{\measSum_{i'}}\right)\frac{\tolMeas_{i'j}}{\dimPOne}} + \abs{\sum_{i' \in \HH^\mathsf{c}} \frac{\tolMeas_{i'j}}{\dimPOne}}\right), \label{eqn: IndTestLBTerm3}  
    \end{align}
    where the triangle inequality is applied at the last step. Again, we have for all $i' \in \HH^\mathsf{c}$, $\abs{1-1/\measSum_{i'}} \leq \frac{\tol/2}{1- \tol/2} \leq \tol$, and we can apply Cauchy-Schwarz to bound the middle term in \eqref{eqn: IndTestLBTerm3} as:
    \begin{align}
        \abs{\sum_{i' \in \HH^\mathsf{c}} \left(1-\frac{1}{\measSum_{i'}}\right)\frac{\tolMeas_{i'j}}{\dimPOne}} 
        &\leq \sqrt{\sum_{i' \in \HH^\mathsf{c}} \left(1-\frac{1}{\measSum_{i'}}\right)^2} \cdot \sqrt{\sum_{i' \in \HH^\mathsf{c}}\left(\frac{\tolMeas_{i'j}}{\dimPOne}\right)^2}\\
        &\leq \sqrt{ \abs{\HH^\mathsf{c}}\tol^2} \cdot \sqrt{\abs{\HH^\mathsf{c}}\left(\frac{\tolMeas}{\dimPOne}\right)^2}\\
        &= \frac{\abs{\HH^\mathsf{c}}}{\dimPOne}\tol\tolMeas. \label{eqn: Term3SecondPart}
    \end{align}
   Applying the condition that for all $j \in [\dimPTwo], \abs{\sum_{{i'} \in \HH^\mathsf{c}} \frac{\tolMeas_{i'j}}{\dimPOne}} \leq \tol / 2$ (i.e. Event \ref{eqn:IndTestLBSumTolCond})  together with \eqref{eqn: Term3SecondPart} and $\abs{1-1/\measSum_{i'}} \leq \tol$, we can bound the third term as
    \begin{align}
        \abs{\frac{1}{\dimPTwo} - \sum_{i'=1}^\dimPOne \frac{\farLB(i', j)}{\measSum_{i'}} \cdot \frac{c_{i'}}{C}} 
        &\leq \frac{1}{\dimPTwo C} \left( \sum_{i' \in \HH^\mathsf{c}} \frac{1}{\dimPOne}\abs{1-\frac{1}{\measSum_{i'}}} + \abs{\sum_{i' \in \HH^\mathsf{c}} \left(1-\frac{1}{\measSum_{i'}}\right)\frac{\tolMeas_{i'j}}{\dimPOne}} + \abs{\sum_{i' \in \HH^\mathsf{c}} \frac{\tolMeas_{i'j}}{\dimPOne}}\right) \\
        &\leq \frac{1}{\dimPTwo C} \cdot \left( \frac{\abs{\HH^\mathsf{c}}}{\dimPOne}\tol+\frac{\abs{\HH^\mathsf{c}}}{\dimPOne}\tol\tolMeas + \frac{\tol}{2} \right)\\
        &= \frac{\abs{\HH^\mathsf{c}}}{\dimPOne\dimPTwo C} \left(\tol + \tol\tolMeas\right) + \frac{\tol}{2\dimPTwo C}. \label{eqn:IndTestLBTerm3}
    \end{align}
    Substituting \eqref{eqn:IndTestLBTerm1}, \eqref{eqn:IndTestLBTerm2} and \eqref{eqn:IndTestLBTerm3} back into \eqref{eqn:IndTestLBThreeTerms}, we get the following bound:
    \begin{align}
        \TV{\fardistLB}{\fardistLB_1 \times \fardistLB_2} &\geq \frac{1}{2} \sum_{i=1}^\dimPOne \frac{c_i}{C} \sum_{j=1}^\dimPTwo  \abs{ \farLB(i, j) - \frac{1}{\dimPTwo}} - \abs{\frac{\farLB(i, j)}{\measSum_i} - \farLB(i, j)} - \abs{\frac{1}{\dimPTwo} - \sum_{i'=1}^\dimPOne \frac{\farLB(i', j)}{\measSum_{i'}} \cdot \frac{c_{i'}}{C}}\\
        &\geq \frac{1}{2} \left( \sum_{i \in \HH^\mathsf{c}} \frac{c_i}{C} \sum_{j=1}^\dimPTwo \frac{\tolMeas}{\dimPTwo} 
        - \sum_{i \in \HH^\mathsf{c}} \frac{c_i}{C} \sum_{j=1}^\dimPTwo \frac{1 + \tolMeas_{ij}}{\dimPTwo} \cdot \tol - \sum_{i=1}^\dimPOne \frac{c_i}{C} \sum_{j=1}^\dimPTwo \frac{\abs{\HH^\mathsf{c}}}{\dimPOne\dimPTwo C} \left(\tol + \tol\tolMeas\right)+\frac{\tol}{2\dimPTwo C}\right) \\
        &= \frac{1}{2} \left( \frac{\abs{\HH^\mathsf{c}} \tolMeas}{nC}  - \frac{\abs{\HH^\mathsf{c}}}{nC} \tol - \left( \sum_{i \in \HH^\mathsf{c}} \frac{1}{\dimPOne C}\sum_{j=1}^\dimPTwo \frac{\tolMeas_{ij}}{\dimPTwo} \right)\tol -  \frac{\abs{\HH^\mathsf{c}}}{\dimPOne C}\left(\tol + \tol\tolMeas\right)-\frac{\tol}{2C}\right), 
    \end{align}
    Again, by the conditioning on Event \ref{eqn:IndTestLBNormCond}, we have $\abs{\sum_{j =1}^\dimPTwo \frac{\tolMeas_{ij}}{\dimPTwo}} \leq \tol / 2$ for all $i \in \HH^\mathsf{c}$, implying
    \begin{equation}
        \TV{\fardistLB}{\fardistLB_1 \times \fardistLB_2} \geq \frac{1}{2C} \left( \tolMeas \left( \frac{\abs{\HH^\mathsf{c}}}{n} \left(1-\tol\right)\right) - \frac{\abs{\HH^\mathsf{c}}}{n} \left( \tol + \frac{\tol}{2} + \tol \right) -  \frac{\tol}{2} \right),
        \label{eqn:IndTestLBSubbed}
    \end{equation}
    We can also upper and lower bound $\frac{\abs{\HH^\mathsf{c}}}{\dimPOne}$ as
    \begin{equation}
        \frac{1}{4} \leq 1- \frac{3 \predErrorMeas}{4} \leq 
        \frac{\dimPOne - 3 \predErrorMeas \numSamples/2}{ \dimPOne} \leq \frac{\abs{\HH^\mathsf{c}}}{\dimPOne} \leq \frac{\dimPOne}{\dimPOne} = 1,
        \label{eqn:ValidLB-boundsonHc}
    \end{equation}
    implying \eqref{eqn:IndTestLBSubbed} is bounded as
    \begin{equation}
        \TV{\fardistLB}{\fardistLB_1 \times \fardistLB_2}
        \geq \frac{1}{2C} \left( \frac{\tolMeas}{4}(1-\tol) - 3\tol \right). 
    \end{equation}
    Given the upper bound $C \leq 1+ 3\predErrorMeas / 2  \leq 5/2$ partially established in \eqref{eqn:IndTestLB_Cupperbound}, we ultimately have
    \begin{equation}
        \TV{\fardistLB}{\fardistLB_1 \times \fardistLB_2} \geq \frac{1}{5} \left( \frac{\tolMeas}{4}(1-\tol) - 3\tol \right).
    \end{equation}
    Any $\tolMeas$ satisfying $\tolMeas \geq \frac{72\tol}{1-\tol}$  guarantees that $ \TV{\fardistLB}{\fardistLB_1 \times \fardistLB_2} \geq 3\tol$. For example, by setting $\tolMeas = 192 \tol$, we ensure that $ \TV{\fardistLB}{\fardistLB_1 \times \fardistLB_2} \geq 3\tol$ for all $\tol \leq 1/6$. Applying Lemma~\ref{lem:FarFromMargImpliesFarProd}, this implies that for all product distributions $q$, $\TV{\fardistLB}{q} \geq \tol$.

    \paragraph{$\boldsymbol{\abs{\SS}}$ bounded}
    We now prove that the first event---the size of $\SS$ is lower bounded by $\numSamples / 100$---holds with high probability.
    Let $a_{i, j}$ be the number of copies of $(i, j)$ in $\SS$. By construction, $a_{i, j}$ is $\Poi(\numSamples \cdot \prodMeasLB(i, j))$ and the size of $\SS$ is sum of all $a_{i, j}$. Because the sum of Poisson random variables is itself a Poisson random variable, 
    \begin{equation}
        \abs{\SS} 
        = \sum_{(i, j)} a_{i, j} 
        \sim \Poi\left( \sum_{(i, j)} \numSamples \cdot \prodMeasLB(i, j) \right) 
        = \Poi\left(  \numSamples \cdot \norm{\prodMeasLB}_1 \right)
    \end{equation}
    To prove that $\abs{\SS}$ is at least $\numSamples / 100$ with high probability, we prove via a Hoeffding bound that $\norm{\prodMeasLB}_1$ is $\Theta(1)$ and apply a {Chernoff} bound for Poisson random variables. Consider the norm of $\prodMeasLB$:
    \begin{equation}
        \norm{\prodMeasLB}_1 
        = \sum_{i \in [\dimPOne]} \sum_{j \in \dimPTwo} \abs{c_i \cdot \prodLB(i, j) }
        = \sum_{i \in \HH} c_i \sum_{j \in \dimPTwo} \abs{\prodLB(i, j)} + \sum_{i \in \HH^\mathsf{c}} c_i \sum_{j \in \dimPTwo} \abs{\prodLB(i, j)}.
        \label{eqn:valid-normQ}
    \end{equation}
    When $X=0$, $\prodLB(i, j)$ is always $1/\dimPTwo$. Therefore, \eqref{eqn:valid-normQ} is:
    \begin{equation}
        \norm{\prodMeasLB}_1  
        = \sum_{i \in \HH} c_i \sum_{j \in [\dimPTwo]} \frac{1}{\dimPTwo} + \sum_{i \in \HH^\mathsf{c}} c_i \sum_{j \in [\dimPTwo]} \frac{1}{\dimPTwo}
        = \sum_{i \in \HH} c_i  + \sum_{i \in \HH^\mathsf{c}} c_i
        = C.
        \label{eqn:valid-normQGivenX=0}
    \end{equation}
    On the other hand, when $X=1$, $\prodLB(i, j)$ depends on $i$. Though it remains true that $\prodLB(i, j) = 1/\dimPTwo$ when $i \in \HH$, when $i \in \HH^\mathsf{c}$, $\prodLB(i, j)$ is $(1+\tolMeas_{ij}) / \dimPTwo$. In this case, \eqref{eqn:valid-normQ} is:
    \begin{align}
        \norm{\prodMeasLB}_1  
        &= \sum_{i \in \HH} c_i \sum_{j \in [\dimPTwo]} \frac{1}{\dimPTwo} + \sum_{i \in \HH^\mathsf{c}} c_i \sum_{j \in [\dimPTwo]} \abs{\frac{1+ \tolMeas_{ij}}{\dimPTwo}} \\
        &= \sum_{i \in \HH} c_i \sum_{j \in [\dimPTwo]} \frac{1}{\dimPTwo} + \sum_{i \in \HH^\mathsf{c}} c_i \sum_{j \in [\dimPTwo]} {\frac{1+ \tolMeas_{ij}}{\dimPTwo}} \\
        &= \sum_{i \in \HH} c_i  + \sum_{i \in \HH^\mathsf{c}} c_i + \sum_{i \in \HH^\mathsf{c}} c_i \sum_{j \in [\dimPTwo]} \frac{\tolMeas_{ij}}{\dimPTwo} \\
        &= C + \sum_{i \in \HH^\mathsf{c}} \sum_{j \in [\dimPTwo]} \frac{\tolMeas_{ij}}{\dimPOne\dimPTwo}.
        \label{eqn:valid-normQGivenX=1}
    \end{align}
    As each $i, j$ term in \eqref{eqn:valid-normQGivenX=1} is $\pm \frac{\tol'}{\dimPOne \dimPTwo}$, we may apply a Hoeffding bound to their sum:
    \begin{equation}
        \Pr{\abs{\sum_{i \in \HH^\mathsf{c}} \sum_{j \in [\dimPTwo]} \frac{\tolMeas_{ij}}{\dimPOne\dimPTwo}} \geq 0.1} \leq 2\exp\left( -\frac{0.005 \cdot \dimPOne^2 \dimPTwo}{\abs{\HH^\mathsf{c}} \tol^{'2}} \right).
    \end{equation}
    From \eqref{eqn:ValidLB-boundsonHc}, we know $\abs{\HH^\mathsf{c}} \leq \dimPOne$, or, equivalently, $ -\frac{1}{\abs{\HH^\mathsf{c}}} \leq -\frac{1}{\dimPOne}$. Substituting this inequality simplifies our Hoeffding bound to:
    \begin{equation}
        \Pr{\abs{\sum_{i \in \HH^\mathsf{c}} \sum_{j \in [\dimPTwo]} \frac{\tolMeas_{ij}}{\dimPOne\dimPTwo}} \geq 0.1} 
        \leq 2\exp\left( -\frac{0.005 \cdot \dimPOne \dimPTwo}{ \tol^{'2}} \right) \leq 0.01,
        \label{eqn:ValidLB-hoeffdingNumerical}
    \end{equation}
    where the final inequality holds provided $\dimPOne \dimPTwo / \tol^{'2}$ is sufficiently large. 
    Combining this bound with the bounds on $C$ established in \eqref{eqn:IndTestLB_Clowerbound} and \eqref{eqn:IndTestLB_Cupperbound}, we have, when $X=0$,
    \begin{equation}
        1 \leq \norm{\prodMeasLB}_1 = C \leq 2.5,
    \end{equation}
    and, when $X=1$, with probability at least 0.99,
    \begin{equation}
        0.9 \leq C - 0.1 \leq \norm{\prodMeasLB}_1 \leq C+0.1 \leq 2.6 \; .
    \end{equation}
    Because $\norm{\prodMeasLB} = \Theta(1)$ with high probability, regardless of $X$, $\abs{\SS} \sim \Poi\left( \numSamples \norm{\prodMeasLB}_1 \right)$ should be $\Theta(\numSamples)$ with high probability. We will now prove a high probability lower bound of $\numSamples / 100$ on $\abs{\SS}$ (the upper bound is not necessary for this lemma).
    First, we note that conditioned on $0.9 \leq \norm{\prodMeasLB}_1$, $\Poi(\numSamples \cdot 0.9)$ is stochastically less than  $\Poi( \numSamples \norm{\prodMeasLB}_1)$, implying 
    \begin{equation}
        \Pr{\abs{\SS} \leq \frac{\numSamples}{100} \mid 0.9 \leq \norm{\prodMeasLB}_1} 
        = \Pr[S \sim \Poi\left( \numSamples \norm{\prodMeasLB}_1 \right)]{S \leq \frac{\numSamples}{100} 
        \mid 0.9 \leq \norm{\prodMeasLB}_1}
        \leq \Pr[S \sim \Poi\left( 0.9 \numSamples \right)]{S \leq \frac{\numSamples}{100}}.
        \label{eqn:ValidLB-poissonDominance}
    \end{equation}
    Applying a Chernoff bound specific to Poisson distributions (see, for example, Theorem 5.4 of \cite{Mitzenmacher_book}), we can bound \eqref{eqn:ValidLB-poissonDominance} as
    \begin{equation}
        \Pr{\abs{\SS} \leq \frac{\numSamples}{100} \mid 0.9 \leq \norm{\prodMeasLB}_1} 
        \leq \frac{e^{-0.9\numSamples} \cdot \left(0.9 \numSamples \cdot e \right)^{\numSamples / 100}}{\left( \numSamples/100\right)^{\numSamples / 100}} \leq 0.01 \;, 
        \label{eqn:ValidLB-boundSConditional}
    \end{equation}
    where the final inequality holds for any $\numSamples \geq 6$. Combining \eqref{eqn:ValidLB-hoeffdingNumerical} and \eqref{eqn:ValidLB-boundSConditional} with the law of total probability, we ultimately have, conditioned on Events \ref{eqn:IndTestLBNormCond}, \ref{eqn:IndTestLBSumTolCond} and \ref{eqn:IndTestLBHeavyCond}, \begin{equation}
        \Pr{\abs{\SS} \geq \frac{\numSamples}{100}} \leq 0.02 \; .
    \end{equation}

    \paragraph{Bringing it all together.}
    We have thus far shown that Events~\ref{eqn:IndTestLBNormCond},  \ref{eqn:IndTestLBSumTolCond} and \ref{eqn:IndTestLBHeavyCond} each occur with probability at least $0.99$. By a union bound, all three occur with probability at least $0.97$. Conditioned on these events, we have shown that: (i) if $X=0$, then $\distLB$ is a product distribution and $\TV{\distLB}{\predLB} \leq \predError$; (ii) if $X=1$, then $\distLB$ is $\tol$-far from all product distributions; and (iii) $\abs{\SS} \leq \numSamples / 100$ with probability at least $ 0.98$. Ultimately, by a chain rule, the probability that all three desiderata hold is at least $0.97 \cdot 0.98 = 0.9506 > 0.95$.
\end{proof}

%% file: appendix/app04-mutualInfoProof.tex
\MI*

\begin{proof}
    
    For all $v \in \mathbb{N}^\dimPTwo$, define $x_{\min}(v)$ and $x_{\max}(v)$ as:
    \begin{gather}
        x_{\min}(v) = \argmin_{x \in \{0, 1\}} \left( \pVx \right) \\
        x_{\max}(v) = \argmax_{x \in \{0, 1\}} \left( \pVx \right)
        \label{eqn:xMin_xMax}
    \end{gather} 
    Note that throughout the remainder of the proof, we will omit $v$, writing $x_{\min}$ and $x_{\max}$. $v$ will be clear from context.
    Lemma~\ref{lem: MIFormula} allows us to re-express the formula for mutual information between $X$ and $A$ in a form amenable to analysis.
    \begin{restatable}{lem}{MIFormula}
        \label{lem: MIFormula}
        Let $X$ be either $0$ or $1$ with equal probability. Let $A$ be a random vector taking values in $\mathbb{N}^\dimPTwo$. For all $v \in \mathbb{N}^\dimPTwo$, let $x_{\min}(v)$ and $x_{\max}(v)$ be as defined in \eqref{eqn:xMin_xMax}. 
        Then, the mutual information between $X$ and $A$ satisfies:
        \begin{equation}
            I(X;A) = \sum_{v \in \mathbb{N}^\dimPTwo} \OO\left( \pXmin \left( 1- \frac{\pXmax}{\pXmin} \right)^2 \right).
        \end{equation}
    \end{restatable}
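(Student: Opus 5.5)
The plan is to expand the definition $I(X;A)=\sum_{v}\sum_{x\in\{0,1\}}\Pr[X=x]\Pr[A=v\mid X=x]\log\frac{\Pr[A=v\mid X=x]}{\Pr[A=v]}$ and reduce it, pointwise in $v$, to a quadratic expression in the likelihood ratio. Fix $v\in\mathbb{N}^\dimPTwo$ and write $P_0=\Pr[A=v\mid X=0]$, $P_1=\Pr[A=v\mid X=1]$. Since $X$ is uniform, $\Pr[A=v]=(P_0+P_1)/2$. The contribution of $v$ is therefore
\[
T(v)=\tfrac12 P_0\log\frac{2P_0}{P_0+P_1}+\tfrac12 P_1\log\frac{2P_1}{P_0+P_1},
\]
which is the Jensen--Shannon-type divergence between the point masses. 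Setting $m=\min(P_0,P_1)$ and $M=\max(P_0,P_1)$, this is symmetric in the pair, so $T(v)=m\, g(M/m)$ with
\[
g(r)=\tfrac12\log\frac{2}{1+r}+\tfrac r2\log\frac{2r}{1+r}.
\]
(If $m=0$, the term should be handled directly. It equals $\tfrac12 M\log 2$, which matches $m(1-M/m)^2$ only in a limiting sense, so I would interpret the statement with the convention that $m\,(M/m-1)^2=(M-m)^2/m$. Alternatively, I would note that in the Poisson construction all probabilities are positive, since $a_{i,j}\sim\Poi(\numSamples\prodMeasLB(i,j))$ with strictly positive means, so $m>0$ always.)

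The key step is the elementary inequality $g(r)\le C(r-1)^2$ for all $r\ge1$. Near $r=1$, a Taylor expansion gives $g(1)=0$ and $g'(1)=0$. The derivative is $g'(r)=\tfrac12\log\frac{2r}{1+r}$, since the other terms cancel, and this vanishes at $r=1$. Moreover $g''(r)=\frac{1}{2r(1+r)}\le\tfrac14$ for $r\ge1$. Hence $g(r)\le\tfrac18(r-1)^2$ on all of $[1,\infty)$ by Taylor's theorem with the Lagrange remainder. This gives the bound everywhere, not only near $1$, which settles the obstacle of large ratios. As a sanity check, for large $r$ we have $g(r)\approx \tfrac r2\log 2$, which is much smaller than $r^2/8$.

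Summing over $v$ then yields $I(X;A)\le\tfrac18\sum_v P_{x_{\min}}(1-P_{x_{\max}}/P_{x_{\min}})^2$, with $x_{\min},x_{\max}$ as in \eqref{eqn:xMin_xMax}. A lower bound of the same order, $g(r)\ge c\min((r-1)^2,r-1)$, also holds, but the $\OO(\cdot)$ in the statement is only used as an upper bound downstream in Lemma~\ref{lem: MI}, so I would prove the upper bound and remark that the equality is up to constants only in the near-$1$ regime.

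The only delicate point is the derivative computation for $g$, which is a routine check, and the treatment of $v$ with $m=0$. As noted above, the latter does not arise here because the Poisson probabilities are positive.
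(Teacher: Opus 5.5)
Your proof is correct and follows the paper's overall plan: expand $I(X;A)$, isolate the contribution of a single $v$, and dominate it by $(\beta-\gamma)^2/\gamma$, where $\beta\ge\gamma$ are the two conditional probabilities of $\{A=v\}$. The difference lies in how this one-variable inequality is proved.

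The paper argues algebraically. With $p=(\beta+\gamma)/2$, it rewrites the contribution exactly as $\frac{\beta-\gamma}{4}\log\frac{\beta}{\gamma}+p\log\frac{\sqrt{\beta\gamma}}{p}$. It then discards the second term as non-positive by AM--GM and bounds the first via $\ln x\le x-1$, which gives the constant $\frac14$ in nats.

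You instead write the contribution as $m\,g(M/m)$ and use $g(1)=g'(1)=0$ together with $g''(r)=\frac{1}{2r(1+r)}\le\frac14$ on $[1,\infty)$. These computations check out. Your version gives the constant $\frac18$, which is sharp as $r\to1$; the AM--GM term the paper discards is itself about $-\frac{\gamma}{8}(r-1)^2$ there. It also makes explicit that the bound holds for every $r$. The paper's identity is shorter and shows that the contribution is controlled by a symmetrized-KL term. You are also more careful about the case $\min(P_0,P_1)=0$, which the paper's division by $\gamma$ passes over silently.

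One side remark of yours needs correcting: the reverse direction is not entirely unused downstream. In Lemma~\ref{lem:MIboundnorm2}, the paper converts the sum back into $\Theta\bigl(I(X;A\mid c_i=1/\dimPOne)\bigr)$. That step uses the reverse of the present lemma, namely that the sum is at most a constant times the mutual information. Your own observation that $g(r)$ grows only linearly in $r$ shows that this reverse bound fails pointwise when likelihood ratios are large. So that step needs separate justification, for instance by bounding the sum by $\chi^2$ divergences in both directions and tensorizing over the conditionally independent $a_{i,j}$. This does not affect your proof of the statement itself.
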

    \noindent See Appendix~\ref{Apdx: MI_Formula} for proof of Lemma~\ref{lem: MIFormula}.
    
    To continue to prove our bound, we will split the sum given in Lemma~\ref{lem: MIFormula} into three separate sums: the sums over all $v$ such that $\norm{v}_1 = 0$, $\norm{v}_1 = 1$, and $\norm{v}_1 \geq 2$. Intuitively, the dominant contribution to the mutual information arises from the third case where at least two samples fall into the corresponding row. In the light rows, the algorithm can obtain some information about whether it is uniform or $\tol$-far from uniformity by comparing the number of hits across different bins.

    Recall that $A$ corresponds to some row, $i$, of the domain, which is characterized by a random variable $c_i$. This row is either ``light'' when $c_i = 1/\dimPOne$, or ``heavy'', when $c_i = 1 / \numSamples$.
    By construction, when $c_i = 1 / \numSamples$, each $a_{ij} \sim \Poi \left( 1/\dimPTwo \right)$, regardless of $X$. Consequently, for all $v$,
    \begin{equation}
    \label{eqn:XdoesntAffectK}
        \pXminAndK = \pXmaxAndK.
    \end{equation}
    As a consequence of Equation~\eqref{eqn:XdoesntAffectK}, the law of total probability and our definitions of $x_{\min}$ and $x_{\max}$, we also have that, for all $v$,
    \begin{equation}
    \label{eqn:XaffectsN}
        \pXminAndN \leq \pXmaxAndN.
    \end{equation}
    Using monotonicity of ratios under additive shifts, we can show that Equations~\eqref{eqn:XdoesntAffectK}~and~\eqref{eqn:XaffectsN} together with the law of total probability imply for all $v$,
    \begin{equation}
    \label{eqn:MIv0}
        \OO\left( \!\left( 1{-} \frac{\pXmax}{\pXmin} \right)^2 \right) \leq \OO\left(\!\left( 1{-} \frac{\pXmaxGivenN}{\pXminGivenN} \right)^2 \right).
    \end{equation}
    We apply this inequality for the cases $\norm{v}_1 = 0$, and $\norm{v}_1 = 1$, yielding the following bound on the mutual information:
    \begin{align}
        I(X;A) 
        = &\sum_{v: \norm{v}_1 = 0} \OO\left( \pXmin \left( 1{-} \frac{\pXmaxGivenN}{\pXminGivenN} \right)^2 \right) \nonumber \\
        &+ \sum_{v: \norm{v}_1 = 1} \!\OO\!\left( \! \pXmin \!\left( \!1{-} \frac{\pXmaxGivenN}{\pXminGivenN} \!\right)^2 \right) \nonumber \\
        &+ \sum_{v: \norm{v}_1 \geq 2} \OO\left( \pXmin \left( 1{-} \frac{\pXmax}{\pXmin} \right)^2 \right). \label{eqn:MISplitUpConditional}
    \end{align}
    In Lemmas~\ref{lem:MIboundnorm0}, \ref{lem:MIboundnorm1}, and \ref{lem:MIboundnorm2}, we bound each term separately. Combining these bounds, we have
    \begin{align}
        I(X;A) 
        &\leq \OO \left( \frac{\tol^4 \numSamples^4}{\dimPOne^4 \dimPTwo^2} \right) + \OO \left( \frac{\tol^4 \numSamples^3}{\dimPOne^3 \dimPTwo^2} \right) + \OO \left( \frac{\tol^4 \numSamples^3}{\dimPOne^3 \dimPTwo \predError} \right) = \OO \left( \frac{\tol^4 \numSamples^3}{\dimPOne^3 \dimPTwo \predError} \right).
    \end{align}
\end{proof}


%% file: appendix/app05-formulaMutualInfoProof.tex
\MIFormula*

\begin{proof}

\noindent The standard definition of mutual information is given by:
\begin{align*}
    I(X;A) &= \sum_v \sum_{x\in\{0,1\}} \ensuremath{\Pr{A=v, X=x}} \log\left(\frac{\ensuremath{\Pr{A=v, X=x}}}{\ensuremath{\Pr{X=x}}\ensuremath{\Pr{A=v}}}\right) \\
    &= \sum_v \sum_{x\in\{0,1\}} \ensuremath{\Pr{X=x}} \ensuremath{\Pr{A=v \mid X=x}} \log\left(\frac{\ensuremath{\Pr{A=v \mid X=x}}}{\ensuremath{\Pr{A=v}}}\right) 
\end{align*}
For simplicity, we introduce the following notation:
\begin{itemize}
    \item $\beta = \ensuremath{\Pr{A=v \mid X=x_{\max}(v)}}$
    \item $\gamma = \ensuremath{\Pr{A=v \mid X=x_{\min}(v)}}$
    \item $p = \ensuremath{\Pr{A=v}} = \frac{1}{2} \ensuremath{\Pr{A=v \mid X=x_{\max}(v)}} + \frac{1}{2} \ensuremath{\Pr{A=v \mid X=x_{\min}(v)}}= \frac{\beta+\gamma}{2}$
\end{itemize}
Substituting these into the definition of mutual information gives:
\begin{align*}
    I(X;A) &= \sum_v \sum_{x\in\{0,1\}} \ensuremath{\Pr{X=x}} \ensuremath{\Pr{A=v \mid X=x}} \log\left(\frac{\ensuremath{\Pr{A=v \mid X=x}}}{\ensuremath{\Pr{A=v}}}\right)\\
    &= \sum_v \frac{1}{2}\beta \log\left(\frac{\beta}{p}\right) + \frac{1}{2}\gamma \log\left(\frac{\gamma}{p}\right) \\
    &= \sum_v \frac{1}{2}\beta \log\beta + \frac{1}{2}\gamma \log\gamma -\frac{\beta+\gamma}{2}\log p\\
    &= \sum_v \frac{\beta-p}{2} \log\beta + \frac{\gamma-p}{2} \log\gamma +\frac{p}{2}\left(\log \beta + \log \gamma\right)-p\log p \\
    &= \sum_v \frac{\beta-\gamma}{4} \log\beta + \frac{\gamma-\beta}{4} \log\gamma +p\log \sqrt{\beta \gamma} -p \log p\\
    &= \sum_v \frac{\beta-\gamma}{4} \log\left(\frac{\beta}{\gamma}\right) +p\log \left(\frac{\sqrt{\beta \gamma}}{p}\right)\\
    &\leq \sum_v \frac{\beta-\gamma}{4 \ln(2)} \left(\frac{\beta}{\gamma}-1\right) \\
    &= \sum_v \frac{1}{4 \ln(2)}\frac{(\beta-\gamma)^2}{\gamma} \\
    &= \sum_v \frac{\gamma}{4 \ln(2)}\left(\frac{\beta}{\gamma}-1\right)^2,
\end{align*}
where the last inequality holds since $\ln(x) \leq x-1$ and the geometric mean is always less than or equal to the arithmetic mean, i.e., $\sqrt{\beta\gamma} \leq p=\frac{\beta+\gamma}{2}$. Substituting the expressions for $\beta$ and $\gamma$ gives the desired upper-bound on the mutual information:
\begin{align*}
    I(X;A) &= \sum_{v} \OO\left( \pXmin \left( 1- \frac{\pXmax}{\pXmin} \right)^2 \right).
\end{align*}

\end{proof}

%% file: appendix/app12-informationTheoryLemmas.tex

\begin{restatable}{lem}{MIboundZero}
    \label{lem:MIboundnorm0}
        For an arbitrary $i \in [\dimPOne]$, let $A \coloneqq A_i$ be a vector of $\dimPTwo$ counts, such that $a_{i, j} \sim \Poi(\numSamples \prodMeasLB(i, j))$ for all $j \in [\dimPTwo]$, where $\prodMeasLB$ is constructed as in \eqref{eqn:drawMeasure}. For all $v \in \mathbb{N}^\dimPTwo$, define $x_{\min}(v)$ and $x_{\max}(v)$ as in \eqref{eqn:xMin_xMax}. Then, the following holds.
        \begin{align*}
            \sum_{v: \norm{v}_1 = 0} \OO &\left( \pXmin \left( 1- \frac{\pXmaxGivenN}{\pXminGivenN} \right)^2 \right) \\
            &\leq \OO \left( \frac{\tol^4 \numSamples^4}{\dimPOne^4 \dimPTwo^2} \right).
        \end{align*}
\end{restatable}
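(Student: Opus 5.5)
The sum in Lemma~\ref{lem:MIboundnorm0} has a single term, $v=\mathbf{0}$. I will bound its prefactor by $1$, since it is a probability. It then remains to bound the squared relative gap between the two conditional probabilities of seeing the empty row, conditioned on the row being light ($c_i = 1/\dimPOne$). In that case the counts are independent Poisson variables whose means are explicit, so the probabilities can be computed in closed form.

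\emph{Case $X=0$.} Each $a_{i,j}\sim\Poi(\numSamples/(\dimPOne\dimPTwo))$. The probability that $A=\mathbf{0}$ is therefore $\exp(-\numSamples/\dimPOne)$.

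\emph{Case $X=1$.} Each $a_{i,j}\sim\Poi(\numSamples(1+\tolMeas_{ij})/(\dimPOne\dimPTwo))$, where the signs $\tolMeas_{ij}$ are i.i.d.\ uniform in $\{\pm\tolMeas\}$. Averaging over the signs, which are independent across $j$, gives
\[
\Pr{A=\mathbf{0}\mid X=1, c_i=1/\dimPOne} \;=\; e^{-\numSamples/\dimPOne}\Big(\cosh\big(\tfrac{\numSamples\tolMeas}{\dimPOne\dimPTwo}\big)\Big)^{\dimPTwo}.
\]
So the ratio of the two conditional probabilities is exactly $\cosh(x)^{\dimPTwo}$ with $x=\numSamples\tolMeas/(\dimPOne\dimPTwo)$. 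This ratio is at least $1$, which shows $x_{\min}=0$ and $x_{\max}=1$ for $v=\mathbf{0}$. The bound below is insensitive to the ordering anyway, because only the square of the deviation matters.

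Next I bound the deviation $\cosh(x)^{\dimPTwo}-1$. Using $\cosh(x)\le e^{x^2/2}$,
\[
0\le \cosh(x)^{\dimPTwo}-1\le e^{\dimPTwo x^2/2}-1,
\]
where $\dimPTwo x^2/2=\numSamples^2\tolMeas^2/(2\dimPOne^2\dimPTwo)$. The one point needing care is that this exponent is at most a constant, so that $e^y-1\le 2y$ applies. This holds because we work in the regime $\numSamples\le \dimPOne/2$ (already used in the proof of Lemma~\ref{lem:SValid}) and $\tolMeas=192\tol<1$ by the assumption $\tol<1/192$. Under these conditions $y\le 1/8$. Hence the relative gap is $O(\numSamples^2\tolMeas^2/(\dimPOne^2\dimPTwo))$, and its square is $O(\numSamples^4\tolMeas^4/(\dimPOne^4\dimPTwo^2))$.

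Finally, $\tolMeas=192\tol$, so the constant is absorbed and the $v=\mathbf{0}$ contribution is $O(\tol^4\numSamples^4/(\dimPOne^4\dimPTwo^2))$, as claimed. The only genuine obstacle is the smallness condition on $\dimPTwo x^2$. If the regime $\numSamples\le\dimPOne/2$ were unavailable, I would instead fall back on the crude bound $(\cosh x)^{\dimPTwo}\le e^{\dimPTwo x^2/2}$ combined with truncation. I do not expect to need this, because \eqref{eqn:chooseSmallK} together with $\predError\le 1$ and $\tol\ge\sqrt{\dimPTwo/\dimPOne}$ (Case 2) makes $\numSamples\le c\,\dimPOne$.
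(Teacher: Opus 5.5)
Your proposal is correct and follows the paper's proof essentially step for step. Both reduce to the single term $v=\mathbf{0}$, bound the prefactor by $1$, compute the conditional ratio exactly as $\cosh^{\dimPTwo}\!\big(\numSamples\tolMeas/(\dimPOne\dimPTwo)\big)$, and then apply $\cosh x\le e^{x^2/2}$ and $e^y\le 1+2y$. Your explicit check that the exponent is $O(1)$ via $\numSamples\le\dimPOne/2$ is a detail the paper leaves implicit. One correction to your fallback remark: Case~2 with $\predError\le 1$ gives $\tol^2>\sqrt{\dimPTwo/\dimPOne}$, and it is this, not the weaker $\tol\ge\sqrt{\dimPTwo/\dimPOne}$, that yields $\numSamples\le c\,\dimPOne$.
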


\begin{proof}
    
     First, note that there is only one $v$ such that $\norm{v}_1 = 0$, allowing us to remove the summation and replace $v$ with $\mathbf{0}$. We can further simplify this expression by upper-bounding $\pXmin$ by 1. This leaves us with:
    \begin{align}
        \sum_{v: \norm{v}_1 = 0} &\OO\left( \pXmin \left( 1- \frac{\pXmaxGivenN}{\pXminGivenN} \right)^2 \right) \nonumber \\
        &\quaaad \leq \OO\left( \left( 1- \frac{\pZeroGivenXmaxN}{\pZeroGivenXminN} \right)^2 \right). \label{eqn:MIv0Simplified}
    \end{align}
    To bound this expression, we calculate $\pZeroGivenZeroN$ and $\pZeroGivenOneN$. As, conditioned on $X=x$, each $a_{ij}$ is i.i.d., the probability that $\abs{A} = 0$ is the product of the probabilities that $a_{ij} = 0$. As a result, we have:
    \begin{align*}
        \pZeroGivenZeroN = \prod_{j=1}^\dimPTwo \ensuremath{\Pr{a_{i,j}=0}} = \prod_{j=1}^\dimPTwo e^{-\frac{\numSamples}{\dimPOne\dimPTwo}}= e^{-\frac{\numSamples}{\dimPOne}},
    \end{align*}
    and
    \begin{align*}
        \pZeroGivenOneN 
        &= \prod_{j=1}^\dimPTwo \ensuremath{\Pr{a_{i,j}=0}}\\
        &= \prod_{j=1}^\dimPTwo \frac{1}{2} \left(e^{-\frac{\numSamples(1+\tol)}{\dimPOne\dimPTwo}}+e^{-\frac{\numSamples(1-\tol)}{\dimPOne\dimPTwo}}\right) \\
        &= \prod_{j=1}^\dimPTwo e^{-\frac{\numSamples}{\dimPOne \dimPTwo}} \left(\frac{e^{-\frac{\numSamples\tol}{\dimPOne\dimPTwo}}+e^{\frac{\numSamples\tol}{\dimPOne\dimPTwo}}}{2}\right)\\
        &= e^{-\frac{\numSamples}{\dimPOne}} \cdot \cosh^m \left( \frac{\numSamples \tol}{\dimPOne \dimPTwo} \right).
    \end{align*}
    As $\cosh(x) \geq 1$, we know $x_{\max} = 1$ and $x_{\min} = 0$. The ratio between the probabilities in question is therefore upper-bounded as:
    \begin{equation*}
        \cosh^m \left( \frac{\numSamples \tol}{\dimPOne \dimPTwo} \right) \leq e^{\frac{\dimPTwo}{2}\left(\frac{\numSamples\tol}{\dimPOne\dimPTwo}\right)^2} \leq 1 + \frac{\numSamples^2 \tol^2}{\dimPOne^2 \dimPTwo} ,
    \end{equation*}
    where we used $\cosh(x) \leq e^{x^2/2}$, and $e^x \leq 1+2x$ for all $x\in[0,1]$.
    This implies that \eqref{eqn:MIv0Simplified} satisfies:
    \begin{equation*}
        \OO\left( \left( 1- \frac{\pZeroGivenXmaxN}{\pZeroGivenXminN} \right)^2 \right)
        \leq \OO\left( \frac{\numSamples^4 \tol^4}{\dimPOne^4 \dimPTwo^2}\right).
    \end{equation*}
\end{proof}


\begin{restatable}{lem}{MIboundOne}
\label{lem:MIboundnorm1}
    For an arbitrary $i \in [\dimPOne]$, let $A \coloneqq A_i$ be a vector of $\dimPTwo$ counts, such that $a_{i, j} \sim \Poi(\numSamples \prodMeasLB(i, j))$ for all $j \in [\dimPTwo]$, where $\prodMeasLB$ is constructed as in \eqref{eqn:drawMeasure}. For all $v \in \mathbb{N}^\dimPTwo$, define $x_{\min}(v)$ and $x_{\max}(v)$ as in \eqref{eqn:xMin_xMax}. Then, the following holds.
    \begin{align*}
        \sum_{v: \norm{v}_1 = 1} \OO &\left(\pXmin \left( 1- \frac{\pXmaxGivenN}{\pXminGivenN} \right)^2 \right) \\
        &\leq \OO \left( \frac{\tol^4 \numSamples^3}{\dimPOne^3 \dimPTwo^2} \right).
    \end{align*}
\end{restatable}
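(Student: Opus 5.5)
The plan is to compute the two conditional probabilities in closed form, exactly as in the proof of Lemma~\ref{lem:MIboundnorm0}, and then sum over the $\dimPTwo$ vectors of weight one. Every $v$ with $\norm{v}_1 = 1$ is a standard basis vector $e_j$ for some $j \in [\dimPTwo]$, so the sum has exactly $\dimPTwo$ terms. The argument has three steps: compute the ratio for a light row, bound its deviation from $1$, and bound the prefactors.

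\textbf{Step 1: closed form for a light row.} Write $\lambda \coloneqq \numSamples/(\dimPOne\dimPTwo)$. Conditioned on $c_i = 1/\dimPOne$ (the light case), the coordinates $a_{i,j'}$ are independent.
\begin{itemize}
\item Under $X=0$, each $a_{i,j'} \sim \Poi(\lambda)$. Hence $\Pr{A=e_j \mid X=0, c_i = 1/\dimPOne} = e^{-\numSamples/\dimPOne}\,\lambda$.
\item Under $X=1$, each $a_{i,j'} \sim \Poi(\lambda(1+\tolMeas_{ij'}))$ with an independent random sign. The coordinates $j' \neq j$ each contribute $\E{e^{-\lambda(1+\tolMeas_{ij'})}} = e^{-\lambda}\cosh(\lambda\tolMeas)$.
\item Coordinate $j$ contributes $\E{\lambda(1+\tolMeas_{ij})e^{-\lambda(1+\tolMeas_{ij})}} = \lambda e^{-\lambda}\bigl(\cosh(\lambda\tolMeas) - \tolMeas\sinh(\lambda\tolMeas)\bigr)$.
\end{itemize}
Dividing, the ratio of the two conditional probabilities is
\[
R \coloneqq \cosh^{\dimPTwo-1}(\lambda\tolMeas)\bigl(\cosh(\lambda\tolMeas) - \tolMeas\sinh(\lambda\tolMeas)\bigr).
\]
Whichever of $x_{\min}, x_{\max}$ is which, the quantity to control is $(1-R)^2$ or $(1-1/R)^2$. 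Since $R = 1 + o(1)$, these agree up to constants.

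\textbf{Step 2: bound $\abs{1-R}$.} Because $\numSamples \le \dimPOne$ (indeed $\numSamples \le \dimPOne/2$ was used in Lemma~\ref{lem:SValid}), we have $\lambda\tolMeas \le 1$. Then:
\begin{itemize}
\item $\cosh(x) \le e^{x^2/2} \le 1 + x^2$ on $[0,1]$ gives $\cosh^{\dimPTwo}(\lambda\tolMeas) - 1 = \OO(\dimPTwo\lambda^2\tolMeas^2)$, as in Lemma~\ref{lem:MIboundnorm0}.
\item $\sinh(x) \le 2x$ on $[0,1]$ gives $\tolMeas\sinh(\lambda\tolMeas) = \OO(\lambda\tolMeas^2)$.
\end{itemize}
Combining these,
\[
\abs{1-R} = \OO\!\left(\frac{\numSamples^2\tol^2}{\dimPOne^2\dimPTwo} + \frac{\numSamples\tol^2}{\dimPOne\dimPTwo}\right) = \OO\!\left(\frac{\numSamples\tol^2}{\dimPOne\dimPTwo}\right),
\]
again using $\numSamples \le \dimPOne$ and $\tolMeas = \Theta(\tol)$. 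Squaring, each term of the sum has its squared factor bounded by $\OO\bigl(\numSamples^2\tol^4/(\dimPOne^2\dimPTwo^2)\bigr)$, uniformly in $j$.

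\textbf{Step 3: bound the prefactors.} It remains to show $\sum_j \pXmin = \OO(\numSamples/\dimPOne)$. For each fixed $x$, the sum $\sum_j \Pr{A=e_j \mid X=x} \le \Pr{\norm{A}_1 \ge 1 \mid X = x}$.
\begin{itemize}
\item A light row has total Poisson mean at most $\numSamples(1+\tolMeas)/\dimPOne$, so it contributes $\OO(\numSamples/\dimPOne)$.
\item A heavy row occurs with probability $\predErrorMeas\numSamples/\dimPOne$ and has total Poisson mean $1$, so it contributes $\OO(\predErrorMeas\numSamples/\dimPOne) = \OO(\numSamples/\dimPOne)$.
\end{itemize}
Multiplying by the bound from Step 2 gives $\OO\bigl(\tol^4\numSamples^3/(\dimPOne^3\dimPTwo^2)\bigr)$, as claimed.

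The main obstacle is Step 2. One must check that the first-order terms in $\tolMeas$ genuinely cancel in $R$: the sign symmetry of $\tolMeas_{ij}$ is what makes the $\cosh$ and $\tolMeas\sinh$ forms appear, leaving only quadratic terms. One must also justify $\lambda\tolMeas \le 1$ and $\numSamples \le \dimPOne$ in the current regime so that the Taylor bounds apply.
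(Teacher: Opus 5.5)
Your proposal is correct and follows essentially the same route as the paper: the same closed-form light-row ratio $\cosh^{\dimPTwo}(u)\bigl(1-\tolMeas\tanh(u)\bigr)$ with $u=\lambda\tolMeas$, the same $\OO\bigl(\numSamples\tol^2/(\dimPOne\dimPTwo)\bigr)$ deviation bound relying on $\numSamples\le\dimPOne/2$, and the same $\OO(\numSamples/\dimPOne)$ bound on $\sum_{v}\Pr{A=v\mid X=x_{\min}(v)}$ obtained by the law of total probability over $c_i$. One step in your Step~3 should be stated explicitly: the bound $\Pr{A=v\mid X=x_{\min}(v)}\le\Pr{A=v\mid X=0}$ is what replaces the $v$-dependent minimizer by a fixed $x$, and the paper makes exactly this reduction.
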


\begin{proof}

    There are $\binom{m}{1}$ standard basis vectors such that $\lvert v \rvert_1=1$, each having a single 1 in coordinate $j'$ and zeros elsewhere. The probability of $A=v$ conditioned on $X=x$ and $ c_i = 1/\dimPOne$ can be calculated considering the independence of each $a_{i,j}$ conditioned on $X=x$ as:
    \begin{align}
        \label{eq: Prob_v1_X0}
        \pOneGivenZeroN &= \ensuremath{\Pr{a_{i,j'}=1}}\prod_{j\neq j'} \ensuremath{\Pr{a_{i,j}=0}} \notag \\
        &=\frac{\numSamples}{\dimPOne \dimPTwo} e^{-\frac{\numSamples}{\dimPOne\dimPTwo}} \left(e^{-\frac{\numSamples}{\dimPOne\dimPTwo}}\right)^{\dimPTwo-1} \notag \\
        &=\frac{\numSamples}{\dimPOne \dimPTwo} e^{-\frac{\numSamples}{\dimPOne}},
    \end{align}
    \begin{align}
        \label{eq: Prob_v1_X1}
        &\pOneGivenOneN = \ensuremath{\Pr{a_{i,j'}=1}}\prod_{j\neq j'} \ensuremath{\Pr{a_{i,j}=0}} \notag\\
        &= \frac{1}{2} \left(\frac{\numSamples(1+\tol)}{\dimPOne\dimPTwo} e^{-\frac{\numSamples(1+\tol)}{\dimPOne\dimPTwo}} + \frac{\numSamples(1-\tol)}{\dimPOne\dimPTwo} e^{-\frac{\numSamples(1-\tol)}{\dimPOne\dimPTwo}}\right) \prod_{j\neq j'} \frac{1}{2} \left(e^{-\frac{\numSamples(1+\tol)}{\dimPOne\dimPTwo}}+e^{-\frac{\numSamples(1-\tol)}{\dimPOne\dimPTwo}}\right) \notag\\
        &= \frac{\numSamples}{\dimPOne\dimPTwo} e^{-\frac{\numSamples}{\dimPOne\dimPTwo}} \left(\cosh\left(\frac{\numSamples\tol}{\dimPOne\dimPTwo}\right)-\tol \sinh\left(\frac{\numSamples\tol}{\dimPOne\dimPTwo}\right)\right) \prod_{j\neq j'} e^{-\frac{\numSamples}{\dimPOne\dimPTwo}} \cosh\left(\frac{\numSamples\tol}{\dimPOne\dimPTwo}\right) \notag\\
        &= \frac{\numSamples}{\dimPOne\dimPTwo} e^{-\frac{\numSamples}{\dimPOne}} \cosh^\dimPTwo \left(\frac{\numSamples\tol}{\dimPOne\dimPTwo}\right) \left(1-\tol \tanh\left(\frac{\numSamples\tol}{\dimPOne\dimPTwo}\right)\right),
    \end{align}
    We now bound the ratio between probabilities in \eqref{eq: Prob_v1_X0} and \eqref{eq: Prob_v1_X1} by setting $u \coloneqq \frac{\numSamples\tol}{\dimPOne\dimPTwo}$ and applying the inequalities $1 \leq \cosh^\dimPTwo(u) \leq e^{\dimPTwo u^2/2}$, and $u-u^3/3 \leq \tanh(u) \leq u$ for all $u\ge 0$. Note that $\dimPTwo u \leq \frac{\tol}{2}$ since $\frac{\numSamples}{\dimPOne}\leq 0.5$.
    \begin{align}
        \label{eq: tanhLB}
        1-\tol u &\leq \cosh^\dimPTwo(u)\left(1-\tol \tanh(u)\right) \leq e^{\frac{\dimPTwo u^2}{2}}\left(1-\tol u + \tol \frac{u^3}{3}\right) \notag\\
        1-\tol u &\leq \cosh^\dimPTwo(u)\left(1-\tol \tanh(u)\right) \leq e^{\frac{\tol u}{4}}\left(1-\frac{11}{12}\tol u\right) \leq 1\notag\\
        \Rightarrow \quad \frac{1}{1-\tol u} &\geq \frac{1}{\cosh^\dimPTwo(u)\left(1-\tol \tanh(u)\right)} \geq 1
    \end{align}
    Since $\frac{\numSamples}{\dimPOne}\leq 0.5$, we have $0 \leq \tol u \leq 0.5$. Therefore, we can use the inequality $\frac{1}{1-\tol u} \leq 1+2\tol u$ and thus:
    \begin{equation}
        \label{eq:ProbRatio_v1}
        1 \leq \frac{1}{\cosh^m \left( \frac{\numSamples \tol}{\dimPOne \dimPTwo} \right) \left( 1 - \tol \tanh(\frac{\numSamples\tol}{\dimPOne \dimPTwo}) \right)} \leq 1 + \frac{2\numSamples \tol^2}{\dimPOne \dimPTwo} 
    \end{equation}
    The next step is to demonstrate that $\sum_{v: \norm{v}_1 = 1} \pXmin = \OO\left( \numSamples/\dimPOne \right)$. First, by the definition of $x_{\min}(v)$, we have
    \begin{align}
        \sum_{v: \norm{v}_1 = 1} \pXmin \leq  \sum_{v: \norm{v}_1 = 1} \pVZero.
    \end{align}
    We can then bound this sum by applying the law of total probability to consider the different cases for $c_i$ for each term:
    \begin{align}
    \label{eq: ProbAis1}
        \sum_{v: \norm{v}_1 = 1} \pVZero
        &=  \sum_{v: \norm{v}_1 = 1} \left[ \frac{\predError\numSamples}{\dimPOne}\pXzeroGivenK \right. \notag \\
        &\quaaad \left. + \left(1-\frac{\predError\numSamples}{\dimPOne}\right)\pXzeroGivenN \right] \notag\\
        &= \sum_{v: \norm{v}_1 = 1} \frac{\predError\numSamples}{\dimPOne} \left(\frac{1}{\dimPTwo}e^{-\frac{1}{\dimPTwo}} \left(e^{-\frac{1}{\dimPTwo}}\right)^{\dimPTwo-1}\right) + \left(1-\frac{\predError\numSamples}{\dimPOne}\right) \frac{\numSamples}{\dimPOne \dimPTwo} e^{-\frac{\numSamples}{\dimPOne}} \notag \\
        & = \frac{\numSamples}{\dimPOne} \left( \predError e^{-1} + \left(1-\frac{\predError\numSamples}{\dimPOne}\right) e^{- \frac{\numSamples}{\dimPOne}}\right) \notag  \\
        &\leq \OO\left( \frac{\numSamples}{\dimPOne}\right)
    \end{align}
    The last inequality holds since $0\leq \frac{\numSamples}{\dimPOne}\leq 0.5$; hence, the term inside the parentheses remains a constant within $[e^{-\frac{1}{2}},1+e^{-1}]$. We can combine the bounds from \eqref{eq:ProbRatio_v1} and \eqref{eq: ProbAis1} to derive the desired upper bound of $\OO\left(\frac{\numSamples}{\dimPOne}\left(\frac{\numSamples \tol^2}{\dimPOne \dimPTwo}\right)^2\right) = \OO\left(\frac{\numSamples^3 \tol^4}{\dimPOne^3 \dimPTwo^2}\right)$ , completing the proof.
    
\end{proof}

\begin{restatable}{lem}{MIboundTwo}
\label{lem:MIboundnorm2}
    For an arbitrary $i \in [\dimPOne]$, let $A \coloneqq A_i$ be a vector of $\dimPTwo$ counts, such that $a_{i, j} \sim \Poi(\numSamples \prodMeasLB(i, j))$ for all $j \in [\dimPTwo]$, where $\prodMeasLB$ is constructed as in \eqref{eqn:drawMeasure}. For all $v \in \mathbb{N}^\dimPTwo$, define $x_{\min}(v)$ and $x_{\max}(v)$ as in \eqref{eqn:xMin_xMax}. Then, the following holds.
    \begin{equation*}
        \sum_{v: \norm{v}_1 \geq 2} \OO\left( \pXmin \left( 1- \frac{\pXmax}{\pXmin} \right)^2 \right) \leq \OO \left( \frac{\tol^4 \numSamples^3}{\dimPOne^3 \dimPTwo \predError} \right)
    \end{equation*}
\end{restatable}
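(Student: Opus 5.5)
The plan is to replace the denominator $\Pr{A=v \mid X=x_{\min}}$ by the contribution of the \emph{heavy} rows alone. That contribution is identical under $X=0$ and $X=1$, and it is much larger than the light-row mass once $\norm{v}_1\ge 2$.

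\textbf{Step 1: decompose by row type.} Write
\[
\Pr{A=v\mid X=x} = \tfrac{\predErrorMeas\numSamples}{\dimPOne}H(v) + \bigl(1-\tfrac{\predErrorMeas\numSamples}{\dimPOne}\bigr)L_x(v).
\]
Here $H(v)=e^{-1}\prod_j (1/\dimPTwo)^{v_j}/v_j!$ is the heavy-row law (independent of $X$). The light-row law under $X=0$ is $L_0(v)=e^{-\numSamples/\dimPOne}\prod_j (\numSamples/(\dimPOne\dimPTwo))^{v_j}/v_j!$. The light-row law $L_1(v)$ under $X=1$ is the average over the signs $\tolMeas_{ij}$ of the analogous Poisson product with rates $\numSamples(1+\tolMeas_{ij})/(\dimPOne\dimPTwo)$.

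\textbf{Step 2: bound each term.} Each term has the form $\gamma(1-\beta/\gamma)^2=(\beta-\gamma)^2/\gamma$. The heavy parts cancel in $\beta-\gamma$, and $\gamma\ge \frac{\predErrorMeas \numSamples}{\dimPOne}H(v)$. Hence each term is at most
\[
\frac{\dimPOne}{\predErrorMeas\numSamples}\cdot\frac{(L_1(v)-L_0(v))^2}{H(v)} .
\]

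\textbf{Step 3: rewrite in terms of a likelihood ratio.} Write $L_1(v)=L_0(v)R(v)$, where $R(v)=\prod_j \mathbb{E}_{\delta=\pm\tolMeas}\bigl[(1+\delta)^{v_j}e^{-\numSamples\delta/(\dimPOne\dimPTwo)}\bigr]$. For $\norm{v}_1=t$ we have $L_0(v)/H(v)=e^{1-\numSamples/\dimPOne}(\numSamples/\dimPOne)^t$. The sum over $\norm{v}_1=t$ therefore becomes
\[
\frac{\dimPOne}{\predErrorMeas \numSamples}\,O\!\left((\numSamples/\dimPOne)^t\right)\cdot \mathbb{E}_{L_0}\!\left[(R(v)-1)^2;\ \norm{v}_1=t\right].
\]
Under $L_0$, $\Pr{\norm{v}_1=t}\le(\numSamples/\dimPOne)^t/t!$, and conditionally on $\norm{v}_1=t$ the $t$ hits are uniform over the $\dimPTwo$ columns.

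\textbf{Step 4: estimate $R(v)-1$.} Expand each column factor as $\mathbb{E}[(1+\delta)^{v_j}]\cosh(u)-(\text{tilt terms})$ with $u=\numSamples\tolMeas/(\dimPOne\dimPTwo)$. A column with $v_j=1$ contributes $1+O(\tolMeas u)$. A column with $v_j=0$ contributes $\cosh u=1+O(u^2)$. A column with $v_j\ge2$ contributes $1+O(\binom{v_j}{2}\tolMeas^2)$, since the linear term in $\delta$ averages out.

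\textbf{Step 5: the leading case $t=2$.} The dominant contribution to $(R-1)^2$ comes from a collision. The two hits land in the same column with probability $1/\dimPTwo$, and then $(R-1)^2=\Theta(\tol^4)$. Otherwise $(R-1)^2=O(\dimPTwo^2u^4+\tol^2u^2)$, which is lower order because $\numSamples\le \dimPOne/2$. This gives a $t=2$ contribution of
\[
\frac{\dimPOne}{\predErrorMeas\numSamples}\cdot\frac{\numSamples^2}{\dimPOne^2}\cdot\frac{\numSamples^2}{\dimPOne^2}\cdot\frac{\tol^4}{\dimPTwo}=O\!\left(\frac{\tol^4\numSamples^3}{\dimPOne^3\dimPTwo\predError}\right),
\]
after using $\tolMeas=192\tol$ and $\predErrorMeas=2\predError/3$.

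\textbf{Step 6: tail over $t\ge 3$ (main obstacle).} I would bound $R(v)\le (1+\tolMeas)^t e^{O(\dimPTwo u^2)}$ crudely, so that $(R-1)^2\le \min\{O(t^4\tolMeas^4),\,e^{O(t)}\}$. The resulting series is $\sum_{t\ge2}(\numSamples/\dimPOne)^{2t}e^{O(t)}/t!$. Since $\numSamples/\dimPOne\le1/2$, the $1/t!$ makes this series dominated by its $t=2$ term up to a constant factor, with the $e^{O(t)}$ factor absorbed.

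The delicate point is keeping the $1/\dimPTwo$ factor for $t\ge3$ rather than losing it. For this I would use the same collision refinement as in Step 5: $R-1$ is $O(\tolMeas^2\cdot\#\text{collisions})+O(\tolMeas u\, t)$, and a collision among $t$ uniform hits has probability at most $\binom t2/\dimPTwo$. Summing all cases and combining with Lemmas~\ref{lem:MIboundnorm0} and~\ref{lem:MIboundnorm1} then gives the stated bound.
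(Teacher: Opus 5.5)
Your proposal is correct, and its first two steps are the paper's key idea. The heavy-row mass $\kappa=\frac{\alpha' k}{n}H(v)$ is the same under both values of $X$, so it cancels from $\beta-\gamma$ while dominating the denominator once $\|v\|_1\ge 2$. After that the routes differ.

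The paper keeps the light-row mass in the denominator. Lemma~\ref{lem: ProbRatio_LightHeavyRows} gives $\gamma\le O(k/(n\alpha))\,\kappa$, hence $1/(\kappa+\gamma)\le O(k/(n\alpha))/\gamma$. This reduces the sum to $O(k/(n\alpha))$ times a $\chi^2$-type divergence between the two light-row laws. The paper then equates this with $\Theta(I(X;A\mid c_i=1/n))$, splits it over the $m$ conditionally independent columns, and applies the one-column estimate of Lemma~\ref{lem:MIofXwith_aij}.

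You instead bound $1/(\kappa+\gamma)\le 1/\kappa$ and divide by the heavy law itself. The exact identity $L_0/H=e^{1-k/n}(k/n)^t$ then does the work of the ratio lemma. You evaluate $\mathbb{E}_{L_0}[(R-1)^2]$ directly, by conditioning on the row total $t$ and counting collisions among uniformly placed hits.

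The paper's route is modular and needs no combinatorics over multi-column configurations. However, its identification of the summed $\chi^2$-type quantity with $\Theta$ of a mutual information is not valid in general: such sums upper-bound mutual information, and $1+\chi^2$ tensorizes multiplicatively, not additively. So the paper needs an extra tensorization argument to be rigorous. Your computation sidesteps this, at the price of a more delicate tail.

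For that tail, state Step~6 more carefully. The expansion $R-1=O(\epsilon'^2\cdot\#\text{collisions})+O(\epsilon' u t)$ holds only while column counts are $O(1/\epsilon')$, and it drops the $O(mu^2)$ term coming from $(\cosh u)^{m-t}$. What you actually need is the following.
\begin{itemize}
\item On configurations with some $v_j\ge 2$: $(R-1)^2\le \epsilon'^4e^{O(t)}$. This follows from your two bounds, since $t\epsilon'>1$ forces $1<t^4\epsilon'^4$. Weight it by the collision probability, which is at most $\binom t2/m$.
\item On collision-free configurations: $(1-\epsilon' u)^t\le R\le(\cosh u)^m$, so $|R-1|\le t\epsilon'u+mu^2$, whose square is $O(t^2\epsilon'^4/m^2)$.
\end{itemize}
Summing $\frac{n}{\alpha'k}\cdot\frac{\epsilon'^4}{m}\cdot\frac{(k/n)^{2t}e^{O(t)}}{t!}$ over $t\ge 2$ then gives $O(\epsilon^4k^3/(n^3m\alpha))$, since $k\le n/2$.

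Finally, the closing appeal to Lemmas~\ref{lem:MIboundnorm0} and~\ref{lem:MIboundnorm1} is unnecessary for this statement; those enter only in Lemma~\ref{lem: MI}.
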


To prove Lemma~\ref{lem:MIboundnorm2}, we use Lemma~\ref{lem: ProbRatio_LightHeavyRows} to show that we can bound the given expression by $\OO\left(\frac{\numSamples}{\dimPOne \predError}\right) \cdot I\left(X; A \mid c_i = 1 / \dimPOne \right)$. 
Then, the chain rule allows us to bound $I\left(X; A \mid c_i = 1 / \dimPOne \right)$ by the sum over all $j$ of  $I\left(X; a_{i, j} \mid c_i = 1 / \dimPOne \right)$. We then use the result of Lemma~\ref{lem:MIofXwith_aij} to bound each term of this sum.

\begin{proof}
    
    Let $v$ be a possible realization of $A$ such that $\norm{v}_1 \geq 2$. Consider the term of our mutual information bound involving $v$:
    \begin{equation}
    \label{eqn:MISingleTerm}
        \pXmin \left( 1 - \frac{\pXmax}{\pXmin} \right)^2.
    \end{equation}
    Applying the law of total probability, Equation~\eqref{eqn:MISingleTerm} is equal to:
    \begin{align}
        &\left(\pXminAndK + \pXminAndN \right) \nonumber \\
        &\quaaaad \cdot \left( 1 - \frac{\pXmaxAndK + \pXmaxAndN}{\pXminAndK + \pXminAndN} \right)^2 \label{eqn:MISingleTermTotalProb}.
    \end{align}
    Note that, if $c_i = 1/\numSamples$, $A$ does not depend on $X$; whether $X$ is 0 or 1, $a_{ij} \sim \Poi(1 / \dimPTwo)$. Throughout the next few steps of the proof, we adopt the following notation for simplicity:
    \begin{itemize}
        \item $\beta = \pXmaxAndN$, 
        \item $\gamma = \pXminAndN$,  
        \item $\kappa = \pXmaxAndK = \pXminAndK$. 
    \end{itemize}
    With this notation, Equation~\eqref{eqn:MISingleTermTotalProb} can be rewritten as:
    \begin{equation}
        \left(\kappa + \gamma \right) \left( 1- \frac{\kappa + \beta}{\kappa + \gamma} \right)^2 
        = \left(\kappa + \gamma \right) \left( \frac{\gamma - \beta}{\kappa + \gamma} \right)^2 \nonumber 
        = \frac{1}{\kappa + \gamma} \cdot \left( \gamma - \beta \right)^2 \label{eqn:MISingleTermVars}.
    \end{equation}
    By Lemma~\ref{lem: ProbRatio_LightHeavyRows}, for all $v$ such that $\norm{v}_1 \geq 2$, and for all $x$, we have: 

        \begin{equation*}
            \pXxAndn = \OO\left(\frac{\numSamples}{\dimPOne \predError}\right) \pXxAndk.
        \end{equation*}
    Applying this lemma with $x = x_{\min}(v)$ and rewriting the bound under our notation, we have:
    \begin{align*}
        \gamma 
        &\leq \OO\left( \frac{\numSamples}{\dimPOne\predError} \right) \cdot \kappa \\
        &\implies \gamma + \OO\left( \frac{\numSamples}{\dimPOne\predError} \right) \cdot \gamma 
        \leq \OO\left( \frac{\numSamples}{\dimPOne\predError} \right) \cdot \kappa + \OO\left( \frac{\numSamples}{\dimPOne\predError} \right) \cdot \gamma \\
        &\implies \gamma + \OO\left( \frac{\numSamples}{\dimPOne\predError} \right) \cdot \gamma 
        \leq \OO\left( \frac{\numSamples}{\dimPOne\predError} \right) \cdot (\kappa + \gamma) \\
        &\implies \frac{1}{\kappa + \gamma}
        \leq \frac{\OO\left( \frac{\numSamples}{\dimPOne\predError} \right)}{\gamma \left(1 + \OO\left( \frac{\numSamples}{\dimPOne\predError} \right) \right)} \\
        &\implies \frac{1}{\kappa + \gamma}
        \leq \frac{\OO\left( \frac{\numSamples}{\dimPOne\predError} \right)}{\gamma}. \\
    \end{align*}
    Therefore, we can lower bound Equation~\eqref{eqn:MISingleTermVars} as:
    \begin{equation*}
        \frac{1}{\kappa + \gamma} \cdot \left( \gamma - \beta \right)^2 \leq \OO\left( \frac{\numSamples}{\dimPOne\predError} \right) \cdot \frac{\left( \gamma - \beta \right)^2}{\gamma} = \OO\left( \frac{\numSamples}{\dimPOne\predError} \right) \cdot \gamma \cdot \left( \frac{\gamma - \beta}{\gamma} \right)^2 = \OO\left( \frac{\numSamples}{\dimPOne\predError} \right) \cdot \gamma \cdot \left( 1 - \frac{\beta}{\gamma} \right)^2.
    \end{equation*}
    Substituting for the values of $\beta$ and $\gamma$, this bound is:
    \begin{align}
        &\OO\left( \frac{\numSamples}{\dimPOne\predError} \right) \cdot \pXminAndN \cdot \left( 1 - \frac{\pXmaxAndN}{\pXminAndN} \right)^2 \nonumber \\
        &\quaad= \OO\left( \frac{\numSamples}{\dimPOne\predError} \right) \cdot \pXminGivenN \; \pOneOverN \nonumber \\
        &\quaaaaaad \cdot \left( 1 - \frac{\pXmaxGivenN \; \pOneOverN}{\pXminGivenN \; \pOneOverN} \right)^2 \nonumber \\
        &\quaad= \OO\left( \frac{\numSamples}{\dimPOne\predError} \right) \cdot \left( 1 - \frac{\predError\numSamples}{\dimPOne} \right) \cdot \pXminGivenN \\
        &\quaaaaaad \cdot \left( 1 - \frac{\pXmaxGivenN}{\pXminGivenN} \right)^2 \nonumber \\
        &\quaad\leq \OO\left( \frac{\numSamples}{\dimPOne\predError} \right) \cdot \pXminGivenN \cdot \left( 1 - \frac{\pXmaxGivenN}{\pXminGivenN} \right)^2 \nonumber.
    \end{align}

    \noindent Thus far, we have shown that, for a fixed $v$ such that $\norm{v}_1 \geq 2$:
    \begin{align}
    \label{eqn:MISingleTermConditionalBound}
        &\pXmin \left( 1- \frac{\pXmax}{\pXmin} \right)^2 \nonumber \\ 
        &\quad \leq \OO\left( \frac{\numSamples}{\dimPOne\predError} \right) \cdot \pXminGivenN \cdot \left( 1 - \frac{\pXmaxGivenN}{\pXminGivenN} \right)^2
    \end{align}
    We can apply this bound to bound the sum over all $v$ with $\norm{v}_1 \geq 2$ as:
    \begin{align}
        \sum_{v: \norm{v}_1 \geq 2} &\OO\left( \pXmin \left( 1- \frac{\pXmax}{\pXmin} \right)^2 \right) \nonumber \\
        &\leq \sum_{v: \norm{v}_1 \geq 2} \OO\left( \frac{\numSamples}{\dimPOne\predError} \cdot \pXminGivenN \cdot \left( 1 - \frac{\pXmaxGivenN}{\pXminGivenN} \right)^2 \right) \nonumber \\
        &\leq  \OO\left( \frac{\numSamples}{\dimPOne\predError} \right) \sum_{v} \OO \left( \pXminGivenN \cdot \left( 1 - \frac{\pXmaxGivenN}{\pXminGivenN} \right)^2 \right) \nonumber \\
        &= \OO\left( \frac{\numSamples}{\dimPOne \predError} \right) \Theta \left( I\left( X ; A \mid c_i = \frac{1}{\dimPOne} \right) \right) \label{eqn:MINorm2ConditionalMI}.
    \end{align}
    The chain rule for mutual information allows us to bound $I(X; A \mid c_i = 1/\dimPOne)$ as:
    \begin{equation*}
        I\left( X ; A \mid c_i = \frac{1}{\dimPOne} \right) \leq \sum_{j=1}^\dimPTwo I\left( X ; a_{i,j} \mid c_i = \frac{1}{\dimPOne} \right)
        \label{eqn:sumIndivMIs}
    \end{equation*}
    When $c_i = 1 / \dimPOne$, $a_{i, j}$ follows the distribution of the random variable considered in Lemma~\ref{lem:MIofXwith_aij}. This lemma allows us to bound the mutual information between $X$ and each element of row $i$ in the sample set.
    
    \noindent Applying the bound given in the lemma, we have:
    \begin{equation}
    \label{eqn:MINorm2ChainRule}
        \sum_{j=1}^\dimPTwo I\left( X ; a_{i,j} \mid c_i = \frac{1}{\dimPOne} \right) \leq 
        \sum_{j=1}^\dimPTwo \OO\left( \frac{\tol^4 \numSamples^2}{\dimPOne^2 \dimPTwo^2} \right) = \OO\left( \frac{\tol^4 \numSamples^2}{\dimPOne^2 \dimPTwo} \right)
    \end{equation}
    Ultimately, combining Equation~\eqref{eqn:MINorm2ConditionalMI} and Equation~\eqref{eqn:MINorm2ChainRule}, we have our desired bound:
    \begin{equation*}
        \sum_{v: \norm{v}_1 \geq 2} \OO\left( \pXmin \left( 1- \frac{\pXmax}{\pXmin} \right)^2 \right) \leq \OO\left( \frac{\tol^4 \numSamples^3}{\dimPOne^3 \dimPTwo \predError} \right)
    \end{equation*}
\end{proof}


\begin{restatable}{lem}{LightHeavyRatio}
\label{lem: ProbRatio_LightHeavyRows}
    Let $x\in\{0,1\}, v \in \mathbb{N}^\dimPTwo$ such that  $\norm{v}_1 \geq 2\}$. Let $c_i$ be a random variable and $\prodMeasLB$ a distribution, defined according to \eqref{eqn:drawMeasure} where $X = x$. For an arbitrary $i \in [\dimPOne]$, let $A \coloneqq A_i$ be a vector of $\dimPTwo$ counts, such that $a_{i, j} \sim \Poi(\numSamples \prodMeasLB(i, j))$ for all $j \in [\dimPTwo]$. Then, the following holds.
    \begin{equation*}
        \pXxAndn = \OO\left(\frac{\numSamples}{\dimPOne \predError}\right) \pXxAndk.
    \end{equation*}
\end{restatable}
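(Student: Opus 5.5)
The plan is to compute both joint probabilities explicitly, using that conditioned on $X=x$ and on $c_i$, the coordinates $a_{i,1},\dots,a_{i,\dimPTwo}$ are independent Poisson variables. We then compare the two expressions term by term. Write $\abs{v} \coloneqq \norm{v}_1 \geq 2$. The factor $\Pr{X=x}=1/2$ appears on both sides and cancels, so it suffices to compare
\[
\Pr{c_i = 1/\dimPOne}\cdot \Pr{A=v \mid X=x, c_i = 1/\dimPOne}
\quad\text{with}\quad
\Pr{c_i = 1/\numSamples}\cdot \Pr{A=v \mid X=x, c_i = 1/\numSamples}.
\]

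\textbf{Heavy rows.} If $c_i = 1/\numSamples$, then $\prodMeasLB(i,j) = \frac{1}{\numSamples\dimPTwo}$ regardless of $x$, so each $a_{i,j}\sim\Poi(1/\dimPTwo)$. Hence
\[
\Pr{A=v\mid X=x, c_i=1/\numSamples} = e^{-1}\,\dimPTwo^{-\abs{v}}\prod_j \frac{1}{v_j!},
\]
and the prior weight of this case is $\Pr{c_i=1/\numSamples}=\predErrorMeas\numSamples/\dimPOne$. Since $\predErrorMeas = 2\predError/3$, this is $\Theta(\predError\numSamples/\dimPOne)$.

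\textbf{Light rows.} If $c_i = 1/\dimPOne$, then $a_{i,j}\sim\Poi(\lambda_j)$ with $\lambda_j = \frac{\numSamples(1+\tolMeas_{ij})}{\dimPOne\dimPTwo}$, where $\tolMeas_{ij}=0$ if $x=0$ and $\tolMeas_{ij}\in\{\pm\tolMeas\}$ if $x=1$. Since $\tolMeas = 192\tol <1$, every $\lambda_j \le \frac{2\numSamples}{\dimPOne\dimPTwo}$ for either value of $x$. Bounding $e^{-\sum_j\lambda_j}\le 1$ and averaging over the signs $\tolMeas_{ij}$ gives
\[
\Pr{A=v\mid X=x, c_i=1/\dimPOne} \le \Big(\frac{2\numSamples}{\dimPOne}\Big)^{\abs{v}} \dimPTwo^{-\abs{v}} \prod_j\frac{1}{v_j!}.
\]
The prior weight of this case is at most $1$.

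\textbf{Comparing the two.} The common factor $\dimPTwo^{-\abs{v}}\prod_j 1/v_j!$ cancels, so the ratio of the light joint probability to the heavy joint probability is at most
\[
e\Big(\frac{2\numSamples}{\dimPOne}\Big)^{\abs{v}}\cdot\frac{\dimPOne}{\predErrorMeas\numSamples}.
\]
We have $\numSamples \le \dimPOne/2$, which was already used in the appendix, so $2\numSamples/\dimPOne\le 1$. The power is therefore nonincreasing in $\abs{v}$, and it is maximized at $\abs{v}=2$. At that value the bound is $4e\,\frac{\numSamples^2}{\dimPOne^2}\cdot\frac{\dimPOne}{\predErrorMeas\numSamples} = O\!\big(\frac{\numSamples}{\dimPOne\predError}\big)$, which is the claim.

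\textbf{Main obstacle.} The only delicate step is justifying the monotonicity in $\abs{v}$, since it needs $2\numSamples/\dimPOne\le1$. I would either invoke $\numSamples\le\dimPOne/2$ as in the previous lemmas, or derive it from \eqref{eqn:chooseSmallK} for small enough $c$ (using $\predError\le1$, $\dimPTwo\le\dimPOne$ and the Case~2 regime). I would also check that the constant $1+\tolMeas\le2$ is legitimate, which holds because $\tol<1/192$ in Lemma~\ref{lem:SValid}.
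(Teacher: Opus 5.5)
Your proposal is correct and follows essentially the same route as the paper: compute the heavy-row probability $\frac{\predErrorMeas\numSamples}{\dimPOne}\,e^{-1}\dimPTwo^{-\norm{v}_1}\prod_j \frac{1}{v_j!}$ exactly, upper bound the light-row probability for either value of $X$ by $\left(\frac{2\numSamples}{\dimPOne}\right)^{\norm{v}_1}\dimPTwo^{-\norm{v}_1}\prod_j \frac{1}{v_j!}$, and use $\numSamples\le\dimPOne/2$ to reduce to the worst case $\norm{v}_1=2$, which gives a ratio of $O\!\left(\frac{\numSamples}{\dimPOne\predError}\right)$. Your version is slightly more careful than the paper's in two ways: it tracks $\tolMeas$ and $\predErrorMeas$ explicitly, and it derives $\numSamples\le\dimPOne/2$ instead of assuming it — the Case~2 condition together with $\predError\le 1$ gives $\tol^4>\dimPTwo/\dimPOne$, so \eqref{eqn:chooseSmallK} yields $\numSamples\le c\,\dimPOne$.
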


\begin{proof}

    The right-hand side of the equation is independent of the particular value of $X$, since it corresponds to the rows with $c_i = 1/\numSamples$. For such rows, each element follows a Poisson distribution $a_{i,j}\sim \Poi\left(1/m\right)$. 
    \begin{align}
        \label{eq: HeavyRows}
        \pXxAndk &= \ensuremath{\Pr{c_i=1/\numSamples}} \ensuremath{\Pr{A = v \mid X=x, c_i=1/\numSamples}} \notag\\
        &= \frac{\predError \numSamples}{\dimPOne} \prod_{j=1}^\dimPTwo \left(\frac{1}{\dimPTwo}\right)^{v_j} e^{-\frac{1}{\dimPTwo}}\frac{1}{v_j!} = \frac{\predError \numSamples e^{-1}}{\dimPOne \dimPTwo^{\lVert v \lVert_1 }} \prod_{j=1}^\dimPTwo \frac{1}{v_j!}
    \end{align}
    To establish  the lemma, we upper-bound the left-hand side for both cases $X=0$ and $X=1$, and then show that the larger of these two bounds meets the claimed asymptotic condition.
    \begin{align}
        \label{eq: Xis0}
        \ensuremath{\Pr{A=v, c_i=1/\dimPOne \mid X=0}} &= \ensuremath{\Pr{c_i=1/\dimPOne}} \ensuremath{\Pr{A = v \mid X=0, c_i=1/\dimPOne}} \notag\\
        &= \left(1-\frac{\predError\numSamples}{\dimPOne}\right) \prod_{j=1}^\dimPTwo \left(\frac{\numSamples}{\dimPOne\dimPTwo}\right)^{v_j} e^{-\frac{\numSamples}{\dimPOne\dimPTwo}}\frac{1}{v_j!} \notag\\
        &\leq \left(\frac{\numSamples}{\dimPOne\dimPTwo}\right)^{\lVert v\lVert_1}e^{-\frac{\numSamples}{\dimPOne}} \prod_{j=1}^\dimPTwo \frac{1}{v_j!}
    \end{align}
    \begin{align}
        \label{eq: Xis1}
        &\ensuremath{\Pr{A=v, c_i=1/\dimPOne \mid X=1}} \notag \\
        &\quad = \ensuremath{\Pr{c_i=1/\dimPOne}} \ensuremath{\Pr{A = v \mid X=1,  c_i=1/\dimPOne}} \notag\\
        &\quad=\left(1-\frac{\predError\numSamples}{\dimPOne}\right) \prod_{j=1}^\dimPTwo \frac{1}{2}\left( \left(\frac{\numSamples\left(1+\tol\right)}{\dimPOne\dimPTwo}\right)^{v_j} e^{-\frac{\numSamples(1+\tol)}{\dimPOne\dimPTwo}}+ \left(\frac{\numSamples\left(1-\tol\right)}{\dimPOne\dimPTwo}\right)^{v_j} e^{-\frac{\numSamples(1-\tol)}{\dimPOne\dimPTwo}}\right)\frac{1}{v_j!} \notag\\
        &\quad\leq \left(\frac{\numSamples}{\dimPOne\dimPTwo}\right)^{\lVert v\lVert_1}e^{-\frac{\numSamples}{\dimPOne}} \prod_{j=1}^\dimPTwo \frac{(1+\tol)^{v_j}}{2} \left(e^{-\frac{\numSamples\tol}{\dimPOne\dimPTwo}}+\left(\frac{1-\tol}{1+\tol}\right)^{v_j} e^{\frac{\numSamples\tol}{\dimPOne\dimPTwo}}\right)\frac{1}{v_j!} \notag\\
        &\quad\leq \left(\frac{\numSamples(1+\tol)}{\dimPOne\dimPTwo}\right)^{\lVert v\lVert_1}e^{-\frac{\numSamples\left(1-\tol\right)}{\dimPOne}} \prod_{j=1}^\dimPTwo \frac{1}{v_j!}
    \end{align}
    It is trivial that the upper-bound in \eqref{eq: Xis1} is greater than the upper-bound in \eqref{eq: Xis0}. Therefore, it suffices to show that the ratio of the probabilities in \eqref{eq: Xis1} and \eqref{eq: HeavyRows} is of order $O(\frac{\numSamples}{\dimPOne\predError})$ for any $v$ such that $\lVert v \lVert_1\geq 2$:
    \begin{align}
        \frac{ \left(\frac{\numSamples(1+\tol)}{\dimPOne\dimPTwo}\right)^{\lVert v\lVert_1}e^{-\frac{\numSamples\left(1-\tol\right)}{\dimPOne}} \prod_{j=1}^\dimPTwo \frac{1}{v_j!}}{\frac{\predError \numSamples e^{-1}}{\dimPOne \dimPTwo^{\lVert v \lVert_1 }} \prod_{j=1}^\dimPTwo \frac{1}{v_j!}} = \frac{\numSamples}{\dimPOne\predError} \cdot \left(1+\tol\right)^2\left(\frac{\numSamples(1+\tol)}{\dimPOne}\right)^{\lVert v \lVert_1-2} e^{1-\frac{\numSamples(1-\tol)}{\dimPOne}} \leq 4e \frac{\numSamples}{\dimPOne\predError} = \OO\left(\frac{\numSamples}{\dimPOne\predError}\right)
    \end{align}
\end{proof}


\begin{restatable}{lem}{MISmalla}
\label{lem:MIofXwith_aij}
    Let $\predError \in [0, 1], \tol \in (0, 1), \numSamples \in \mathbb{N}$.
    Let $X$ be either $0$ or $1$ with equal probability. Let $\prodMeasLB$ be a distribution over $[\dimPOne] \times [\dimPTwo]$ constructed as in \eqref{eqn:drawMeasure} using $X$. For an arbitrary $i \in [\dimPOne]$ corresponding to rows with $c
    _i=\frac{1}{\dimPOne}$, let $A \coloneqq A_i$ be a vector of $\dimPTwo$ counts, such that $a_{i, j} \sim \Poi(\numSamples \prodMeasLB(i, j))$ for all $j \in [\dimPTwo]$.
    Then, the mutual information between $X$ and $a_{ij}$ is bounded as

    \begin{equation*}
        I\left(X; a_{ij}\mid c_i = \frac{1}{\dimPOne} \right) =  \OO \left( \frac{\tol^4 \numSamples^2}{\dimPOne^2 \dimPTwo^2} \right)
    \end{equation*}
\end{restatable}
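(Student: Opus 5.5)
We bound $I(X; a_{ij} \mid c_i = 1/\dimPOne)$ directly from the two conditional laws of the single count $a_{ij}$. Write $\lambda \coloneqq \numSamples/(\dimPOne\dimPTwo)$ and $u \coloneqq \tol\lambda$. Conditioned on $c_i = 1/\dimPOne$:
\begin{itemize}
\item given $X=0$, $a_{ij}\sim\Poi(\lambda)$;
\item given $X=1$, $a_{ij}$ is an equal mixture of $\Poi(\lambda(1+\tol))$ and $\Poi(\lambda(1-\tol))$.
\end{itemize}
Lemma~\ref{lem: MIFormula} (or directly $I \le \chi^2$-type bounds) reduces the task to bounding
\[
\sum_{k\ge 0} \Pr{a_{ij}=k \mid X=0}\left(1-\frac{\Pr{a_{ij}=k\mid X=1}}{\Pr{a_{ij}=k\mid X=0}}\right)^2,
\]
up to constants. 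Here we use $\Pr{a_{ij}=k\mid X=0}$ as the reference measure, which is permissible since the $\chi^2$-divergence from the mixture dominates the mutual information up to constants.

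The likelihood ratio has the closed form
\[
R(k) = \tfrac12\left(e^{-u}(1+\tol)^k + e^{u}(1-\tol)^k\right).
\]
The first step expands $R(k)-1$. The key cancellation is that $\E{R(a)}=1$ and the first-order terms in $\tol$ cancel by symmetry of the $\pm\tol$ mixture. The remaining deviation is second order: $R(k)-1 = \tfrac12\left(e^{-u}(1+\tol)^k+e^{u}(1-\tol)^k\right)-1$, which behaves like $\tfrac{\tol^2}{2}\left((k-\lambda)^2 - k\right)$ plus higher-order terms. This is the Charlier polynomial of degree two, so the quantity is an orthogonal-polynomial expansion.

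Rather than expanding by hand, compute the $\chi^2$ exactly via the Poisson identity
\[
\sum_k \Pr{\Poi(\lambda)=k}\, \frac{\Poi(\mu_1)(k)\,\Poi(\mu_2)(k)}{\Poi(\lambda)(k)^2} = e^{(\mu_1-\lambda)(\mu_2-\lambda)/\lambda}.
\]
With $\mu_{1,2}=\lambda(1\pm\tol)$, this gives
\[
1+\chi^2 = \tfrac14\left(e^{\lambda\tol^2}+e^{\lambda\tol^2}+2e^{-\lambda\tol^2}\right)=\cosh(\lambda\tol^2),
\]
so $\chi^2 = \cosh(\lambda\tol^2)-1 \le (\lambda\tol^2)^2$ whenever $\lambda\tol^2\le 1$. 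That condition holds because $\numSamples\le \dimPOne/2$.

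This yields $I = \OO(\tol^4\numSamples^2/(\dimPOne^2\dimPTwo^2))$, since $\lambda^2\tol^4 = \tol^4\numSamples^2/(\dimPOne^2\dimPTwo^2)$. One caveat is that the construction uses $\tolMeas = 192\tol$ rather than $\tol$; this affects only constants.

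The main obstacle is justifying that the mutual information is bounded by this $\chi^2$ against $\Poi(\lambda)$, rather than against the $X$-averaged mixture or the $x_{\min}$ law of Lemma~\ref{lem: MIFormula}. The clean route is
\[
I(X;a) = \tfrac12 \mathrm{KL}(P_0\|M) + \tfrac12 \mathrm{KL}(P_1\|M) \le \tfrac12 \mathrm{KL}(P_1\|P_0),
\]
which holds by convexity of KL in the second argument, with $M=\tfrac12(P_0+P_1)$. Then $\mathrm{KL} \le \log(1+\chi^2(P_1\|P_0))$, and $\log\cosh(\lambda\tol^2)\le \lambda^2\tol^4/2$. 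This avoids the ratio case analysis entirely.
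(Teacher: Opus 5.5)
Your proof is correct, and it takes a genuinely different route from the paper. The paper works inside the ratio framework of Lemma~\ref{lem: MIFormula}. It writes $R(l)=P_1(l)/P_0(l)$ explicitly and bounds the contribution of the counts separately in four regimes: $l=0$, $l=1$, $2\le l<2/\epsilon$, and $l\ge 2/\epsilon$. The tools are $\cosh$/$\tanh$ estimates, a Stirling-type bound on $\binom{l}{2i}$, Poisson factorial moments up to order four, and a Chernoff bound on the Poisson tail; the last regime needs the side condition $\epsilon\le 1/4$.

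You replace this case analysis with the exact identity $1+\chi^2(P_1\|P_0)=\cosh(\lambda\epsilon^2)$. It handles all counts at once. It also makes the cancellation of the first-order term from the symmetric $\pm\epsilon$ mixture transparent: the cross term $e^{-\lambda\epsilon^2}$ kills it. The result is the explicit bound $I\le \lambda^2\epsilon^4/4$ nats, with no side condition such as $\epsilon\le 1/4$. The paper's approach stays uniform with the per-value ratio analysis of Lemmas~\ref{lem:MIboundnorm0}--\ref{lem:MIboundnorm2} and needs no generating-function identity. Yours is shorter and sharper but tied to the closed-form Poisson structure.

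One step needs a corrected justification. Convexity of $\mathrm{KL}$ in its second argument only gives $I(X;a)\le \tfrac14\bigl[\mathrm{KL}(P_1\|P_0)+\mathrm{KL}(P_0\|P_1)\bigr]$. That reintroduces the harder term $\mathrm{KL}(P_0\|P_1)$, which has the mixture in the denominator. The inequality you want, $I(X;a)\le \tfrac12\mathrm{KL}(P_1\|P_0)$, is nevertheless true. For every distribution $Q$,
\[
\tfrac12\mathrm{KL}(P_0\|Q)+\tfrac12\mathrm{KL}(P_1\|Q)=I(X;a)+\mathrm{KL}(M\|Q),
\]
and taking $Q=P_0$ gives the claim. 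Alternatively, keep the convexity bound and use $R(l)\ge \tfrac12 e^{-\lambda\epsilon}$, which gives $\chi^2(P_0\|P_1)\le 2e^{\lambda\epsilon}\chi^2(P_1\|P_0)$.

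Relatedly, Lemma~\ref{lem: MIFormula} does not by itself reduce the problem to the $P_0$-referenced sum. The inequality $(P_1-P_0)^2/P_0\le (P_1-P_0)^2/\min(P_0,P_1)$ goes the wrong way for that. With the corrected KL step, though, this detour is unnecessary.
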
  

\begin{proof} 

    \noindent For all $l$, define $x_{\min}(l)$ and $x_{\max}(l)$ as:
    \begin{gather}
        x_{\min}(l) = \argmin_{x \in \{0, 1\}} \left( \paX \right), \quad 
        x_{\max}(l)  = \argmax_{x \in \{0, 1\}} \left( \paX \right).
    \end{gather}
    Using an argument similar to that in Appendix~\ref{Apdx: MI_Formula}, we can show that the mutual information is given by:
    \begin{equation}
       I(X;a) = \sum_{l=0}^\infty \OO\left( \paXmin \left( 1- \frac{\paXmax}{\paXmin} \right)^2 \right)
       \label{eqn:InfoTheoryProofs-singleA-sumOverL}
    \end{equation}
    For any $l$, we have that:
    \begin{align}
        \paXzero &= \frac{e^{-\frac{\numSamples}{\dimPOne\dimPTwo}}\left(\frac{\numSamples}{\dimPOne\dimPTwo}\right)^l}{l!} \\
        \paXone &= \frac{1}{2}\left(\frac{e^{-\frac{\numSamples(1+\tol)}{\dimPOne\dimPTwo}}\left(\frac{\numSamples(1+\tol)}{\dimPOne\dimPTwo}\right)^l}{l!}\right)+\frac{1}{2}\left(\frac{e^{-\frac{\numSamples(1-\tol)}{\dimPOne\dimPTwo}}\left(\frac{\numSamples(1-\tol)}{\dimPOne\dimPTwo}\right)^l}{l!}\right) \notag\\
        &= \frac{e^{-\frac{\numSamples}{\dimPOne\dimPTwo}}\left(\frac{\numSamples}{\dimPOne\dimPTwo}\right)^l}{l!} \left(\frac{e^{-\frac{\numSamples\tol}{\dimPOne\dimPTwo}}(1+\tol)^l + e^{\frac{\numSamples\tol}{\dimPOne\dimPTwo}}(1-\tol)^l}{2}\right)
    \end{align}
    \begin{align}
        \label{eqn: Rl}
        \frac{\paXone}{\paXzero} 
        &= \frac{e^{-\frac{\numSamples\tol}{\dimPOne\dimPTwo}}(1+\tol)^l + e^{\frac{\numSamples\tol}{\dimPOne\dimPTwo}}(1-\tol)^l}{2} \notag \\
        &= \cosh\left(\frac{\numSamples\tol}{\dimPOne\dimPTwo}\right) \left(\sum_{i=0}^{\floor{l/2}} \binom{l}{2i}\tol^{2i}\right) - \sinh\left(\frac{\numSamples\tol}{\dimPOne\dimPTwo}\right) \left(\sum_{i=0}^{\floor{\frac{l-1}{2}}} \binom{l}{2i+1}\tol^{2i+1}\right)
    \end{align}
    We will split the sum in \eqref{eqn:InfoTheoryProofs-singleA-sumOverL} over $l=0$, $l=1$, $2\leq l < 2/\tol$, and $l\geq 2/\tol$. For $l=0$, the ratio in \eqref{eqn: Rl} is equal to $\cosh\left(\frac{\numSamples\tol}{\dimPOne\dimPTwo}\right) $ which is always greater than 1 and less than $1+\left(\frac{\numSamples\tol}{\dimPOne\dimPTwo}\right)^2$. Therefore, the contribution to the mutual information can be bounded as
    \begin{equation}
        \label{eq: MI_L0}
        \paXminLzero \left( 1- \frac{\paXmaxLzero}{\paXminLzero} \right)^2 \leq \left(\frac{\numSamples\tol}{\dimPOne\dimPTwo}\right)^4 .
    \end{equation}
    For $l=1$, using reasoning analogous to the derivation of \eqref{eq: tanhLB}, we can verify that $x_{\max}(l) = 0$ and $x_{\min}(l)=1$, and we can bound the reciprocal of the ratio in \eqref{eqn: Rl} as follows:
    \begin{align*}
         \frac{\ensuremath{\Pr{a=1 \mid X=1}}}{\ensuremath{\Pr{a=1 \mid X=0}}} &= \cosh\left(\frac{\numSamples\tol}{\dimPOne\dimPTwo}\right)-\tol \sinh\left(\frac{\numSamples\tol}{\dimPOne\dimPTwo}\right) \\
         &= \cosh\left(\frac{\numSamples\tol}{\dimPOne\dimPTwo}\right) \left(1-\tol \tanh\left(\frac{\numSamples\tol}{\dimPOne\dimPTwo}\right) \right) \\
         &\geq 1\cdot\left(1-\frac{\numSamples\tol^2}{\dimPOne\dimPTwo}\right) \\
         \implies \frac{\paXmaxLone}{\paXminLone} 
         &= \frac{1}{\cosh\left(\frac{\numSamples\tol}{\dimPOne\dimPTwo}\right)-\tol \sinh\left(\frac{\numSamples\tol}{\dimPOne\dimPTwo}\right)}  
         \leq \frac{1}{1-\frac{\numSamples\tol^2}{\dimPOne\dimPTwo}}
         \leq 1+\frac{2\numSamples\tol^2}{\dimPOne\dimPTwo}.
    \end{align*}
    Therefore, the contribution to the mutual information can be bounded as
    \begin{equation}
        \label{eq: MI_L1}
        \paXminLone \left( 1- \frac{\paXmaxLone}{\paXminLone} \right)^2 \leq \OO\left(\frac{\numSamples^2\tol^4}{\dimPOne^2\dimPTwo^2}\right) .
    \end{equation}
     For $2\leq l < 2/\tol$, using the inequality $\cosh(u)\leq 1+u^2$ for all $u\in[0,1]$, we can bound the ratio in \eqref{eqn: Rl} as follows:
     \begin{align}
        \frac{\paXone}{\paXzero}  &\leq \left(1+\left(\frac{\numSamples\tol}{\dimPOne\dimPTwo}\right)^2\right) \left(\sum_{i=0}^{\floor{l/2}} \binom{l}{2i}\tol^{2i}\right) \\
        &= 1 + \tol^2\sum_{i=1}^{\floor{l/2}} \binom{l}{2i} \tol^{2i-2} + \left(\frac{\numSamples\tol}{\dimPOne\dimPTwo}\right)^2 \sum_{i=0}^{\floor{l/2}} \binom{l}{2i}\tol^{2i}
        \label{eq: thirdTerm_2<l<2/eps}
    \end{align}
    Now, we bound each term in the right-hand side of \eqref{eq: thirdTerm_2<l<2/eps} as  follows:
    \begin{equation}
        \sum_{i=1}^{\floor{l/2}} \binom{l}{2i} \tol^{2i-2} \leq \sum_{i=1}^{\floor{l/2}} \left(\frac{e l}{2i}\right)^{2i} \tol^{2i-2} = l^2 \sum_{i=1}^{\floor{l/2}} \left(\frac{e l \tol}{2i}\right)^{2i-2} (\frac{e}{2i})^2 \leq l^2 \sum_{i=1}^{\infty} \left(\frac{e}{i}\right)^{2i} \leq 12l^2.
    \end{equation}
    The first inequality follows from the standard Stirling-type bound $\binom{n}{m}\le\left(\frac{en}{m}\right)^m$. The second inequality uses the assumption that we are in the regime $l < 2/\tol$, which implies $\frac{e l \tol}{2i} \le \frac{e}{i}$. Finally, the last inequality holds because the series converges to a finite constant (in fact, less than $12$), which follows directly from the root test.
    \begin{equation}
            \sum_{i=0}^{\floor{l/2}} \binom{l}{2i} \tol^{2i}=\frac{(1+\tol)^l+(1-\tol)^l}{2} \leq (1+\tol)^l \leq e^{l\tol}\leq e^2
    \end{equation}   
    The equality follows from the binomial expansions of $(1+\tol)^l$ and $(1-\tol)^l$, where the odd-degree terms cancel due to opposite signs and only the even-degree terms remain. 
    The first inequality holds since $(1-\tol)^l \le (1+\tol)^l$.  
    The second inequality uses the standard bound $1+u \le e^u$.  
    The last inequality follows from the fact that $l < 2/\tol$.\\
    Substituting the above bounds into~\eqref{eq: thirdTerm_2<l<2/eps}, we obtain:
    \begin{align}
        \frac{\paXone}{\paXzero}  &\leq 1 + \tol^2\sum_{i=1}^{\floor{l/2}} \binom{l}{2i} \tol^{2i-2} + \left(\frac{\numSamples\tol}{\dimPOne\dimPTwo}\right)^2 \sum_{i=0}^{\floor{l/2}} \binom{l}{2i}\tol^{2i}\notag \\
        &\leq 1+\tol^2\left(12l^2+e^2\left(\frac{\numSamples}{\dimPOne\dimPTwo}\right)^2\right) \notag \\
        &= 1+\OO(\tol^2l^2)
    \end{align}
    We then upper-bound the contribution to the mutual information from this part by calculating the fourth moment of the Poisson random variable with mean $\lambda = \frac{\numSamples}{\dimPOne\dimPTwo}$. This calculation utilizes the fact that for $l\geq2$, the ratio of $l^4$ and $l(l-1)(l-2)(l-3)+l(l-1)$ is bounded by a constant, specifically $1 \leq \frac{l^4}{l(l-1)(l-2)(l-3)+l(l-1)}\leq 15$:
    \begin{align}
        \label{eq: MI_L2to2overE}
        &\sum_{l=2}^{2/\tol}\paXmin \left(1-\frac{\paXmax}{\paXmin}\right)^2 \notag \\
        &\quaaad \leq  \sum_{l=2}^{2/\tol}\frac{e^{-\lambda}\lambda^l}{l!}\tol^4l^4 \notag \\
        &\quaaad\leq 15\tol^4\sum_{l=2}^{2/\tol}\frac{e^{-\lambda}\lambda^l}{l!}\left(l(l-1)(l-2)(l-3)+l(l-1)\right) \notag \\
        &\quaaad\leq 15\tol^4\left(\lambda^4+\lambda^2\right)\notag  \\
        &\quaaad= \OO\left(\left(\frac{\numSamples}{\dimPOne\dimPTwo}\right)^2 \tol^4\right).
    \end{align}
    For $l \geq 2/\tol$, it is straightforward to show that the ratio in \eqref{eqn: Rl} is upper-bounded by $(1+\tol)^le^{-\lambda \tol}$ with $\lambda= \frac{\numSamples}{\dimPOne \dimPTwo}$. Then, the corresponding contribution to the mutual information can be bounded by applying Chernoff's bound to the tails of the Poisson distribution and utilizing the fact that $x^2e^{1-x}\leq x$ for all $x\in[0,1]$:
    \begin{align}
        \label{eq:MI_Poi_tail}
         &\sum_{l\geq2/\tol}\paXmin \left(1-\frac{\paXmax}{\paXmin}\right)^2 \\
         &\quaaad \leq \sum_{l\geq2/\tol}\frac{e^{-\lambda}\lambda^l}{l!}\left(1-(1+\tol)^l e^{-\lambda \tol}\right)^2 \notag \\
         &\quaaad \leq \sum_{l\geq2/\tol}\frac{e^{-\lambda}\lambda^l}{l!}\left(1+(1+\tol)^{2l} e^{-2\lambda \tol}\right) \\
         &\quaaad= \textbf{Pr}[\mathrm{Poi}(\lambda)\geq2/\tol]+e^{\lambda \tol^2} \textbf{Pr}[\mathrm{Poi}\left(\lambda(1+\tol)^2\right)\geq2/\tol] \notag \\
         &\quaaad\leq \left(\frac{\lambda}{2/\tol}\right)^{2/\tol} e^{2/\tol-\lambda} + e^{\lambda \tol^2} \left(\frac{\lambda(1+\tol)^2}{2/\tol}\right)^{2/\tol} e^{2/\tol-\lambda(1+\tol)^2} \notag \\
         &\quaaad= \left(\frac{e}{4}(\lambda\tol)^2e^{1-\lambda\tol}\right)^{1/\tol} + \left(\frac{e^{1-2\lambda\tol^2}(1+\tol)^4}{4}(\lambda\tol)^2e^{1-\lambda\tol}\right)^{1/\tol} \notag \\
         &\quaaad\leq \left(\lambda \tol\right)^{1/\tol}+\left((1+\tol)^4\lambda \tol\right)^{1/\tol}
    \end{align}
    When $\tol$ is sufficiently small, say $\tol \leq \frac{1}{4}$, the upper bound in \eqref{eq:MI_Poi_tail} is of order $O(\lambda^4 \tol^4)$. Combining this with the bounds in \eqref{eq: MI_L0}, \eqref{eq: MI_L1}, and \eqref{eq: MI_L2to2overE} establishes the claimed upper bound on the mutual information and completes the proof.
\end{proof}